\newaliascnt{fact}{theorem}
\newtheorem{fact}[fact]{Fact}
\crefname{fact}{fact}{facts}
\newaliascnt{condition}{theorem}
\newtheorem{condition}[condition]{Condition}
\crefname{condition}{condition}{conditions}
\newcommand{\cX}{\mathcal{X}}
\newcommand{\cM}{\mathcal{M}}
\newcommand{\ind}{\mathbf{1}}
\newcommand{\eps}{\varepsilon}
\newcommand{\N}{\mathbb{N}}
\newcommand{\R}{\mathbb{R}}
\newcommand{\E}{\mathbb{E}}
\newcommand{\cP}{\mathcal{P}}
\newcommand{\bp}{\mathbf{p}}
\newcommand{\cL}{\mathcal{L}}
\renewcommand{\phi}{\varphi}
\renewcommand{\setminus}{\smallsetminus}
\newcommand{\tg}{\tilde{g}}
\newcommand{\tO}{\tilde{O}}
\newcommand{\cN}{\mathcal{N}}
\newcommand{\cW}{\mathcal{W}}
\newcommand{\cY}{\mathcal{Y}}
\newcommand{\hw}{\hat{w}}
\newcommand{\A}{\mathrm{A}}
\newcommand{\argmax}{\mathrm{argmax}}
\newcommand{\cB}{\mathcal{B}}
\newcommand{\priv}{\mathrm{priv}}
\newcommand{\pub}{\mathrm{pub}}
\newcommand{\clip}{\mathrm{clip}}
\newcommand{\trunc}{\mathrm{trunc}}
\DeclareMathOperator{\Lap}{Lap}
\newcommand{\hybriddp}{\text{semi-sensitive DP}\xspace}
\newcommand{\hybriddpuc}{\text{Semi-Sensitive DP}\xspace}
\newcommand{\hybridcdp}{\text{semi-sensitive zCDP}\xspace}
\newcommand{\fulldp}{\text{full DP}\xspace}
\newcommand{\polylog}{\mathrm{polylog}}
\newcommand{\prn}[1]{\left( #1 \right)}
\newcommand{\ang}[1]{\left\langle #1 \right\rangle}
\newenvironment{proofof}[1][\unskip]%
  {%
   \par\noindent{\bfseries\upshape \proofname\ of #1}%
  }%
  {\jmlrQED}
\title[On Convex Optimization with Semi-Sensitive Features]{On Convex Optimization with Semi-Sensitive Features}
\begin{document}

\maketitle

\begin{abstract}
We study the differentially private (DP) empirical risk minimization (ERM) problem under the \emph{\hybriddp} setting where only some features are sensitive. This generalizes the Label DP setting where only the label is sensitive. We give improved upper and lower bounds on the excess risk for DP-ERM. In particular, we show that the error only scales polylogarithmically in terms of the sensitive domain size, improving upon previous results that scale polynomially in the sensitive domain size~\citep{labeldp}.
\end{abstract}

\begin{keywords}
Differential Privacy, Semi-sensitive Features, Label Differential Privacy, Convex Optimization
\end{keywords}


\section{Introduction}

In empirical risk minimization (ERM) problem, we are given a dataset $D = \{x_i\}_{i \in [n]} \in \cX^n$ and a loss function $\ell: \cW \times \cX \to \R$ and the goal is to find $w \in \cW$ that minimizes the empirical risk $\cL(w; D) := \frac{1}{n} \sum_{i \in [n]} \ell(w; x_i)$. The excess risk is defined as $\cL(w; D) - \min_{w' \in \cW} \cL(w'; D)$.
Often, the dataset might contain sensitive data, and to provide privacy protection, we will use the notion of differential privacy (DP)~\citep{dwork2006calibrating}. ERM is among the most well-studied problems in the DP literature and tight excess risk bounds are known under assumptions such as Lipschitzness, convexity, and strong convexity (e.g.,~\cite{ChaudhuriMS11,KiferST12,BST14,WangYX17,BassilyFTT19,FeldmanKT20,GopiLL22}). 

In most of these studies, each $x_i$ is assumed to be sensitive. However, in several applications, such as online advertising, it can be the case that $x_i$ consists of both sensitive and non-sensitive attributes. This can be modeled by letting $\cX = \cX^{\pub} \times \cX^{\priv}$ where $\cX^{\pub}$ is the domain of the {\em non-sensitive} features and $\cX^{\priv}$ is the domain of the {\em sensitive} features. We will also write each example $x$ as $(x^{\pub}, x^{\priv})$ where $x^{\pub} \in \cX^{\pub}$ and $x^{\priv} \in \cX^{\priv}$. Here, our only aim is to protect $x^{\priv}$; in terms of DP, this means that we allow two neighboring datasets to differ only on the sensitive features of a single example. We refer to this DP notion as \emph{\hybriddp}. (See \Cref{sec:prelim-dp} for a more formal definition.) Our definition is identical to the ones considered by \cite{chua24training,shen23classification}; a related notion has been considered recently as well~\citep{krichene23private}. To avoid confusion, we refer to the standard DP notion (where a single entire example can be changed in neighboring datasets) as \emph{\fulldp}. Throughout, we use $k$ to denote the size of the domain for private features, i.e., $k = |\cX^{\priv}|$.

The \hybriddp\ model generalizes the so-called \emph{label DP}
~\citep{GKKLMVZ23} where the only sensitive ``feature'' is the label.\footnote{In our formulation of ERM, there is no distinction between a label and a feature; indeed, the two models are equivalent.} In our language,~\cite{labeldp} give an $\eps$-\hybriddp algorithm for convex ERM (under Lipschitzness assumption) that yields an expected excess risk of $\tO\left(\frac{k}{\eps \sqrt{n}}\right)$; for $(\eps, \delta)$-\hybriddp, they achieve an expected excess risk of $\tO\left(\frac{\sqrt{k \log(1/\delta)}}{\eps \sqrt{n}}\right)$. Complementing these upper bounds, they also provide a lower bound of $\Omega\left(\frac{1}{\eps\sqrt{n}}\right)$ against any $(\eps,\delta)$-\hybriddp. An interesting aspect of these bounds is that they are dimension-independent; meanwhile, for full DP, it is known that the expected excess risk grows (polynomially) with the dimension of $\cW$~\citep{BST14}. Despite this, the results from~\cite{labeldp} leave a rather large gap in terms of $k$: the upper bounds have a polynomial dependence on $k$ that is not captured by the lower bound.

\subsection{Our Contributions}

The main contribution of our paper is to (nearly) close this gap. In particular, we show that the dependency on $k$ is polylogarithmic rather than polynomial, as stated below.

\begin{theorem}[Informal; see \Cref{thm:convex-main}]
For $\eps \leq O(\log 1/\delta)$, there is an $(\eps, \delta)$-\hybriddp algorithm for ERM w.r.t. any $G$-Lipschitz convex loss function with domain radius $R$ that has expected excess empirical risk $\tO\left(RG \cdot \frac{\sqrt[4]{\log(1/\delta) \cdot \log k}}{\sqrt{\eps n}}\right)$.
\end{theorem}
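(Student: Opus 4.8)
The plan is to recast the problem as differentially private estimation of a probability vector and then exploit its simplex geometry. Write each example as $(x_i^{\pub}, x_i^{\priv})$ and group the $n$ examples by the value of the pair $(x^{\pub}, x^{\priv})$; call each distinct occurring value a \emph{cell}. Since there are at most $k$ sensitive values per public value and at most $n$ distinct public values, the number of cells is $m \le nk$, and the empirical risk is $\cL(w; D) = \sum_c \bar p_c\, \ell(w; c)$ — a $\bar p$-mixture of $m$ fixed $G$-Lipschitz convex functions, where $\bar p = (\bar p_c)_c$ is the normalized cell-count vector, a point in the probability simplex. Changing the sensitive feature of a single example moves $\bar p$ by $O(1/n)$ in both $\ell_1$ and $\ell_2$ norm, so $\bar p$ has low sensitivity; hence it suffices to privately publish a surrogate $\hat p$ for $\bar p$ and then, purely as post-processing, minimize $w \mapsto \sum_c \hat p_c\,\ell(w; c)$ over $\cW$.

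The core step is the release of $\hat p$. Add an isotropic Gaussian of scale $\sigma = \Theta(\sqrt{\log(1/\delta)}/(\eps n))$ to $\bar p$, which is $(\eps,\delta)$-DP by the Gaussian mechanism, and then soft-threshold the result at level $\tau = \Theta(\sigma \sqrt{\log m})$. The key point is that $\bar p$ is a \emph{probability} vector: it has at most $2/\tau$ coordinates exceeding $\tau$, and the remaining coordinates sum to at most $1$. With high probability every noise coordinate is below $\tau$, so the coordinates surviving the threshold are exactly the heavy cells; a short computation then gives $\|\hat p - \bar p\|_2^2 = \tO(\tau)$, i.e. $\|\hat p - \bar p\|_2 = \tO\big((\log(1/\delta)\log m)^{1/4}/\sqrt{\eps n}\big) = \tO\big((\log(1/\delta)\log k)^{1/4}/\sqrt{\eps n}\big)$ since $m \le nk$. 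This is exactly where the fourth root, and the $1/\sqrt{\eps n}$ scaling rather than $1/(\eps n)$, come from; the leftover $RG$ factor will come from the Lipschitz constant and the domain radius in the reduction below.

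The main obstacle is to translate an $\ell_2$-error on the mixing weights into an excess-risk guarantee: the naive bound $\cL(\hat w; D) - \min_w \cL(w; D) \le 2\|\hat p - \bar p\|_2 \cdot \max_w \|(\ell(w;c))_c\|_2$ is far too lossy, since the vector of per-cell losses can have $\ell_2$ norm as large as $GR\sqrt m$. I would try to circumvent this by (i) first reducing general $G$-Lipschitz convex ERM to the $\lambda$-strongly convex case via an added $\frac{\lambda}{2}\|w\|^2$ regularizer, optimizing over $\lambda$ only at the end; (ii) running a gradient-based iteration in which each step only needs the mixture gradient $\sum_c \bar p_c\, \nabla\ell(w_t; c)$, so that the requirement reduces to controlling $\|\sum_c(\bar p_c - \hat p_c)\nabla\ell(w_t; c)\|$ — here one would exploit that $\hat p$ is sparse and $\bar p$ nearly sparse after thresholding, so that only the geometry of the few heavy cells and a discrepancy-type bound on the residual light-weight part enter, rather than all $m$ cells; and (iii) a standard localization argument over $O(\log(\eps n))$ phases with a split privacy budget and recentering, which removes the a-priori dependence on $\lambda$ and absorbs the residual polylogarithmic factors (including a union bound over a net of $\cW$ used within each phase).
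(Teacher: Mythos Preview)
Your histogram release step is correct: adding Gaussian noise of scale $\sigma=\Theta(\sqrt{\log(1/\delta)}/(\eps n))$ to the cell-count vector $\bar p$ and soft-thresholding at level $\tau=\Theta(\sigma\sqrt{\log m})$ does give $\|\hat p-\bar p\|_2^2=O(\tau)$, and the fourth-root arithmetic that follows is fine. The gap is precisely where you flag it, and your proposed fixes do not close it.

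The quantity you must control to run an approximate-gradient method is $\bigl\|\sum_c(\hat p_c-\bar p_c)\,\nabla\ell(w;c)\bigr\|_2$ for arbitrary $w$. Since each $\nabla\ell(w;c)$ is an arbitrary vector in $\cB_2^d(G)$, this is governed by $\|\hat p-\bar p\|_1$, not $\|\hat p-\bar p\|_2$; and after thresholding the $\ell_1$ error is $\Theta(1)$, not $o(1)$. Concretely, take $\bar p$ uniform over $m\gg 1/\tau$ nonzero cells, so every cell is light and thresholding returns $\hat p=0$. If all cells share the same gradient $g$ with $\|g\|_2=G$, the true mixture gradient is $g$, the surrogate gradient is $0$, and the error is $G$ at every iterate --- giving excess risk $\Theta(RG)$. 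Nothing in (i) regularization, (ii) sparsity of $\hat p$, or (iii) localization rescues this: the bias from zeroed light mass is deterministic and aligned, so neither the ``discrepancy-type bound'' nor a union bound over a net of $\cW$ can cancel it. Sparsity of $\hat p$ is irrelevant because the damaging term is $\sum_{\text{light}}\bar p_c\,g_c$, which involves $\bar p$, not $\hat p$.

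The paper does not try to summarize $\bar p$ in one shot. It maintains, for each example $i$, a belief distribution over $\cX^{\priv}$ and answers each gradient query \emph{adaptively}: it uses the sparse-vector technique to test whether the current beliefs already answer the query accurately, and only when they do not does it spend privacy budget on a Gaussian-noised gradient and perform a multiplicative-weights update. A potential argument bounds the number of such updates by $O(\log k/\eta^2)$, and composition over that many updates yields the stated rate. The adaptivity --- paying only when the current synthetic data is wrong for the \emph{particular} direction being queried --- is exactly what your one-shot histogram lacks, and is what allows the error to be controlled in the query direction rather than in a fixed norm on $\hat p-\bar p$.
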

\begin{theorem}[Informal; see \Cref{thm:lb-convex}]
For $\eps \leq O(\log k)$, any $(\eps, o(1/k))$-\hybriddp algorithm for ERM w.r.t. any $G$-Lipschitz convex loss function with domain radius $R$ has expected excess empirical risk at least $\Omega\left(RG \cdot \min\left\{1, \frac{\sqrt{\log k}}{\sqrt{\eps n}}\right\}\right)$.
\end{theorem}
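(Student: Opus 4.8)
It suffices to exhibit a single $G$-Lipschitz convex loss over a radius-$R$ domain, together with a distribution over datasets in which \emph{only the private features vary}, on which every $(\eps, o(1/k))$-\hybriddp algorithm has the claimed expected excess risk. The loss we will use depends on each example only through its private feature, so on this instance \hybriddp and \fulldp coincide and we may invoke standard DP lower-bound machinery; the only thing the \hybriddp relaxation costs us is that we may not ``hide'' any information in the public part. When $\sqrt{\log k/(\eps n)} \ge 1$ the asserted bound is just $\Omega(RG)$, which holds trivially by taking a loss whose minimizer is a uniformly random point of the radius-$R$ sphere (no algorithm reading $o(n)$-worth of information can identify it), so assume henceforth $\eps = O(\log k)$ and $\eps n = \Omega(\log k)$.

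\textbf{Construction.} Set $d = \Theta(\log k)$ and identify $\cX^{\priv}$ with (a large subset of) $\{\pm 1\}^d$. Take $\cW$ to be the radius-$R$ Euclidean ball in $\R^d$ — or, as discussed below, a suitable polytope inscribed in it — and let the loss be a carefully chosen $G$-Lipschitz convex function (a linear form, or a maximum of a few linear forms) whose empirical minimizer encodes a coordinatewise functional of $\overline p := \frac1n\sum_i p_i$, e.g.\ $\ell(w;p) = -\frac{G}{\sqrt d}\langle w,p\rangle$ has minimizer $R\,\overline p/\|\overline p\|_2$. Draw each private feature $p_i$ i.i.d.\ from the product distribution on $\{\pm1\}^d$ whose mean $\mu\in\{\pm\gamma\}^d$ has a uniformly random sign pattern, where $\gamma\asymp\sqrt{\log k/(\eps n)}$ is the parameter to be tuned. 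The key step is a rounding lemma: since $\overline p$ concentrates around $\mu$ coordinatewise to additive $O(1/\sqrt n)\ll\gamma$, any $w\in\cW$ with excess risk below $\alpha\asymp RG\gamma$ must be aligned with $\overline p$ well enough that a simple rounding of $w$ recovers the string $\mathrm{sign}(\mu)\in\{\pm1\}^d$ coordinatewise with constant per-coordinate advantage.

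\textbf{Lower bound.} A low-excess-risk $(\eps,\delta)$-DP algorithm therefore yields an $(\eps,\delta)$-DP procedure that, from $n$ samples of a product distribution on $\{\pm1\}^d$ with per-coordinate bias $\pm\gamma$, reconstructs the sign pattern with non-trivial per-coordinate accuracy. This is exactly the primitive ruled out by the fingerprinting/tracing lower bounds for approximate DP (Bun--Ullman--Vadhan / Steinke--Ullman / Dwork--Smith--Steinke--Ullman--Vadhan, in the reduction style of \cite{BST14}): any such procedure needs $n = \Omega\!\big(\sqrt{d\log(1/\delta)}\,/(\gamma\eps)\big)$. Here $k$ enters through the hypothesis $\delta = o(1/k)$, which forces $\log(1/\delta) = \Omega(d) = \Omega(\log k)$, so the requirement becomes $n = \Omega(\log k/(\gamma\eps))$, i.e.\ $\gamma = \Omega(\log k/(\eps n))$; combined with the relation $\alpha\asymp RG\gamma$ fixed by the construction (after choosing the loss/domain so the excess-risk$\leftrightarrow$recovered-information conversion is of the right order) this pins the excess risk at $\Omega\!\big(RG\sqrt{\log k/(\eps n)}\big)$, and the hypothesis $\eps = O(\log k)$ is precisely the range in which the fingerprinting argument is non-vacuous.

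\textbf{Main obstacle.} The delicate point is the calibration in the last paragraph. With the plain linear loss over the Euclidean ball the excess risk of an output at angle $\theta$ from $\overline p$ scales like $R\|\overline p\|_2(1-\cos\theta)\asymp R\|\overline p\|_2\,\theta^2$, i.e.\ \emph{quadratically} in the direction error, and one checks that this only yields the weaker bound $\Omega(RG\log k/(\eps n))$. Obtaining the stronger $\Omega(RG\sqrt{\log k/(\eps n)})$ requires either (i) replacing the ball by a polytope, or the linear loss by a maximum of linear forms, so that the excess risk is \emph{linear} in the relevant error and the argument forces $\gamma$ itself down to $\Theta(\sqrt{\log k/(\eps n)})$; or (ii) a more refined tracing analysis that tracks how many coordinates of $\mathrm{sign}(\mu)$ the algorithm can recover as a function of $n$ and feeds that quantity into the excess-risk estimate. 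Getting the two exponents — $\tfrac12$ on $\log k$ (which comes from invoking fingerprinting in its sharp ``$\sqrt{\log(1/\delta)}$'' form rather than a lossy ``$\log(1/\delta)$'' form) and $\tfrac12$ on $\eps n$ — to emerge simultaneously is the crux of the proof.
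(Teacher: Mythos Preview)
Your proposal has a gap that the ``polytope'' and ``refined tracing'' patches cannot close. By making the loss depend only on the private feature, you reduce to a \emph{full-DP} convex ERM problem over a radius-$R$ ball in $\R^d$ with $d=\Theta(\log k)$. But then the \emph{upper bound} of \cite{BST14} applies: there is an $(\eps,\delta)$-DP algorithm achieving excess risk $O\big(RG\sqrt{d\log(1/\delta)}/(\eps n)\big)=O\big(RG\log k/(\eps n)\big)$ when $\delta=k^{-\Theta(1)}$. In the only nontrivial regime $\eps n\gg\log k$, this is strictly smaller than the target $\Omega\big(RG\sqrt{\log k/(\eps n)}\big)$. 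Hence no lower-bound argument --- fingerprinting or otherwise, ball or polytope --- can extract the claimed rate from your instance. The obstacle you flag in your last paragraph is not one of calibration but of construction: an instance that ignores the public feature is intrinsically too easy, and the $\log k/(\eps n)$ rate you honestly obtain is essentially the truth for it.

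The paper's construction is quite different and crucially uses the public feature. It sets $\cX^{\pub}=[n]$, ambient dimension $nk$, and $\ell(w,(i,y))=-G\langle w,e_{k(i-1)+y}\rangle$, so example $i$ touches only its own block of $k$ coordinates. There is no fingerprinting: for each $i$ a direct $(\eps,\delta)$-DP calculation (the $k$-ary selection lower bound) shows the output ``identifies'' $y_i$ with probability at most $e^{\eps}/(e^{\eps}+k-1)+\delta$, and a Cauchy--Schwarz aggregation over the $n$ orthogonal blocks converts this into an $\Omega(RG/\sqrt n)$ excess-risk bound valid for all $\eps\le\ln k$. Group privacy with group size $r=\lfloor\ln k/\eps\rfloor$ then lifts this to $\Omega(RG/\sqrt{n/r})=\Omega\big(RG\sqrt{\log k/(\eps n)}\big)$. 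The public feature is exactly what lets the effective dimension scale with $n$, sidestepping the $\sqrt d/(\eps n)$ ceiling that limits your approach.
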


Notice that the dependency on $k$ is essentially tight for $\delta = 1 / k^{1 + \Theta(1)}$. It remains an interesting open question to tighten the bound for a wider regime of $\delta$ values.

When the loss function is further assumed to be strongly convex and smooth, we can improve on the above excess risk and also provide a nearly tight bound in this case.

\begin{theorem}[Informal; see \Cref{thm:strongly-convex-smooth-main}]
For $\eps \leq O(\log 1/\delta)$, there is an $(\eps, \delta)$-\hybriddp algorithm for ERM w.r.t. any $G$-Lipschitz $\mu$-strongly convex $\lambda$-smooth loss function that has expected excess empirical risk $\tO\left(\frac{G^2}{\mu} \cdot \frac{\sqrt{\log(1/\delta) \cdot \log k} \cdot \log(\lambda / \mu)}{\eps n}\right)$.
\end{theorem}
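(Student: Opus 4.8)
The plan is to reduce the strongly convex and smooth case to the convex case of \Cref{thm:convex-main} via an iterative \emph{localization} (phased ERM) scheme: maintain a ball that is guaranteed, with high probability, to contain the true minimizer $w^\star := \argmin_{w \in \cW}\cL(w;D)$, and in each phase run the convex-case \hybriddp algorithm restricted to that ball, using the output to recenter and shrink the ball. The quadratic improvement over the convex rate---squaring the $1/\sqrt{\eps n}$ factor, up to the extra $\log(\lambda/\mu)$---comes precisely from this shrinking.

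First I would record the two elementary consequences of the hypotheses that drive the scheme. Assuming for now that $w^\star$ is interior (so $\nabla\cL(w^\star;D)=\bzero$; the boundary case is handled by first-order optimality, $\langle \nabla\cL(w^\star;D), w - w^\star\rangle \ge 0$ for $w \in \cW$), $\mu$-strong convexity and $\lambda$-smoothness give, for all $w$,
\[ \tfrac{\mu}{2}\|w - w^\star\|^2 \;\le\; \cL(w;D) - \cL(w^\star;D) \;\le\; \tfrac{\lambda}{2}\|w - w^\star\|^2 . \]
Moreover, combining the left inequality with $G$-Lipschitzness ($\cL(w;D) - \cL(w^\star;D) \le G\|w - w^\star\|$) shows $\|w_0 - w^\star\| \le 2G/\mu$ for every $w_0 \in \cW$; hence the effective domain radius is $r_0 := \min\{R, 2G/\mu\}$, and we may recenter at an arbitrary point.

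Next I would set up the algorithm: fix $T = \Theta(\log(\lambda/\mu))$ phases and split the privacy budget across them (via zCDP composition or the advanced composition theorem, so phase $t$ is $(\eps_t, \delta_t)$-\hybriddp with, say, $\eps_t = \Theta(\eps/\sqrt T)$, $\delta_t = \delta/T$); set $r_1 := r_0$ and $w_0$ arbitrary, and in phase $t$ run the convex algorithm of \Cref{thm:convex-main} on $\ell$ over the feasible set $\cB_t := \{w\in\cW : \|w - w_{t-1}\| \le r_t\}$---still $G$-Lipschitz and convex, with domain radius $r_t$---to obtain $w_t$, then set $r_{t+1} := r_t/2$ and output $w_T$. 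For correctness, prove by induction over $t$, on a high-probability event $\mathcal{E}$, that $w^\star \in \cB_t$: given $w^\star \in \cB_t$ we have $\min_{w \in \cB_t}\cL(w;D) = \cL(w^\star;D)$, so the convex guarantee gives $\E[\cL(w_t;D) - \cL(w^\star;D)] \le \tO(r_t G \alpha_t)$ with $\alpha_t := \sqrt[4]{\log(1/\delta_t)\log k}/\sqrt{\eps_t n}$; a high-probability version of this (Markov plus the deterministic fail-safe that the excess risk on $\cB_t$ never exceeds $2Gr_t$, amplified by $O(\log T)$ independent repetitions so the failure probability is $\ll 1/T$), combined with strong convexity, yields $\|w_t - w^\star\|^2 \le \frac{2}{\mu}\tO(r_t G\alpha_t)$, which is $\le r_{t+1}^2 = (r_t/2)^2$ as long as $r_t \gtrsim \tO(G\alpha_t/\mu)$. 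After $T = \Theta(\log(\lambda/\mu))$ halvings the radius reaches the scale where the residual is governed by smoothness ($\cL(w_{T-1};D) - \cL(w^\star;D) \le \frac{\lambda}{2}r_T^2$) and by the final convex step ($\tO(r_T G\alpha_T)$); substituting the value of $r_T$ and expanding $\alpha_t$ in $n,\eps,\delta,k$ gives the claimed $\tO\!\left(\frac{G^2}{\mu}\cdot\frac{\sqrt{\log(1/\delta)\log k}\cdot\log(\lambda/\mu)}{\eps n}\right)$, with $\log(\lambda/\mu)$ appearing both as the phase count and (via the budget split) as the degradation of $\alpha_t$.

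The main obstacle is the probabilistic bookkeeping: \Cref{thm:convex-main} is stated only in expectation, so propagating ``$w^\star \in \cB_t$'' cleanly through $T$ phases needs either a high-probability strengthening of that theorem or a potential-function argument tracking $\E[\|w_t - w^\star\|^2]$ across phases while absorbing the rare events in which a ball fails to contain $w^\star$---there the excess risk is still $O(Gr_0)$ after re-projecting onto $\cW$, so with failure probability pushed below $1/\mathrm{poly}(T)$ its contribution is lower order. Secondary points are making the privacy composition lossless enough that the per-phase shrinkage of $\eps_t$ only costs factors already hidden in $\tO(\cdot)$, handling the constrained-minimizer case where $\nabla\cL(w^\star;D) \ne \bzero$ (the strong-convexity lower bound above still holds verbatim, and the upper bound picks up a harmless linear term), and checking that restricting to $\cB_t$ preserves the semi-sensitive structure so that \Cref{thm:convex-main} applies as a black box.
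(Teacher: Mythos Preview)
Your localization scheme is a genuinely different route from the paper's. The paper does not black-box \Cref{thm:convex-main}; instead it feeds the gradient queries of a \emph{single} run of gradient descent directly into the OLVQ mechanism of \Cref{thm:vec-sum-apx-dp}, and then appeals to the inexact-oracle machinery: \Cref{lem:apx-to-inexact} converts the OLVQ $\ell_2$-error guarantee into a $(\upsilon,2\lambda,\mu/2)$-inexact oracle with $\upsilon=O((G\alpha)^2/\mu)$, and \Cref{thm:opt-inexact-oracle} turns that into excess risk $O(\upsilon)$ after $q=O\!\left(\tfrac{\lambda}{\mu}\log(\cdot)\right)$ gradient queries. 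The $\log(\lambda/\mu)$ in the stated bound arises only because $q$ (hence the number of OLVQ rounds $T$) enters the OLVQ error logarithmically; smoothness is used essentially, through the linear-in-$\lambda/\mu$ iteration complexity of the inexact-oracle convergence.

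Your approach is sound and in a sense more elementary---it really only uses strong convexity---but your phase count is misattributed. Halving the radius is possible exactly while $r_t\gtrsim G\alpha_t/\mu$, so the number of phases is $T\approx\log\!\big(r_0\mu/(G\alpha)\big)=\log(1/\alpha)=\Theta(\log(\eps n))$ up to double-logs, not $\Theta(\log(\lambda/\mu))$; the smoothness bound $\tfrac{\lambda}{2}r_T^2$ you invoke is superfluous (the last convex call already gives $\tO(r_T G\alpha_T)$), and indeed $\lambda$ never enters your argument. So you would prove the target $\tO$ bound with a $\mathrm{polylog}(n)$ factor in place of $\log(\lambda/\mu)$---arguably stronger, and in particular not requiring smoothness at all. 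One smaller point: ``Markov plus independent repetitions'' does not by itself give a high-probability guarantee without a way to privately \emph{select} the best repetition; the cleaner fix is to note that \Cref{thm:convex-main} is proved from a with-probability-$1-\beta$ statement of \Cref{thm:vec-sum-apx-dp}, so a high-probability excess-risk bound is already available and you can union-bound over the $T$ phases directly.
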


\begin{theorem}[Informal; see \Cref{thm:lb-strongly-convex}]
For $\eps \leq O(\log k)$, any $(\eps, o(1/k))$-\hybriddp algorithm for ERM w.r.t. any $G$-Lipschitz $\mu$-strongly convex $\mu$-smooth loss function has expected excess empirical risk at least $\Omega\left(\frac{G^2}{\mu} \cdot \min\left\{1, \frac{\log k}{\eps n}\right\}\right)$.
\end{theorem}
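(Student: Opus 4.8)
The plan is to \emph{derive} the strongly-convex and smooth lower bound from the convex lower bound, \Cref{thm:lb-convex}, by a regularization reduction; the fact that the target rate $\tfrac{\log k}{\eps n}$ is the \emph{square} of the convex rate $\sqrt{\tfrac{\log k}{\eps n}}$ is the signature of this reduction. Recall that the hard instance witnessing \Cref{thm:lb-convex} may be taken to consist of \emph{linear} losses $\ell(w;x)=\langle g(x),w\rangle$ with $\|g(x)\|\le G_0$ on the Euclidean ball $\cW=\{w\in\R^D:\|w\|\le R_0\}$; fix $R_0:=G_0/\mu$. Adding the data-independent regularizer $\tfrac{\mu}{2}\|w\|^2$ gives
\[
\ell_{\mathrm{sc}}(w;x)\;:=\;\langle g(x),w\rangle+\tfrac{\mu}{2}\|w\|^2 ,
\]
whose Hessian is exactly $\mu I$, so $\ell_{\mathrm{sc}}$ is $\mu$-strongly convex \emph{and} $\mu$-smooth, and is $(G_0+\mu R_0)=2G_0$-Lipschitz on $\cW$; set $G:=2G_0$. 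For any dataset $D$, writing $\bar g:=\tfrac1n\sum_i g(x_i)$, the $\ell_{\mathrm{sc}}$-ERM minimizer is $w^\star=-\bar g/\mu$ (which lies in $\cW$ since $\|\bar g\|/\mu\le G_0/\mu=R_0$), and the excess empirical risk of any $\hat w\in\cW$ is $\tfrac{\mu}{2}\|\hat w-w^\star\|^2$.

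Given an $(\eps,o(1/k))$-\hybriddp algorithm $A$ for $\ell_{\mathrm{sc}}$-ERM, define $A'$ for the linear ERM $\langle\bar g,\cdot\rangle$ by post-processing: on input $D$ compute $\hat w=A(D)$ and output $\hat v:=R_0\hat w/\|\hat w\|$ (and $\hat v:=\mathbf{0}$ if $\hat w=\mathbf{0}$); then $A'$ is $(\eps,o(1/k))$-\hybriddp. The linear ERM is minimized over $\cW$ at $-R_0\bar g/\|\bar g\|$ with value $-R_0\|\bar g\|$, so the excess risk of $A'(D)$ equals $R_0\|\bar g\|\,(1-\cos\beta)$ with $\beta:=\angle(\hat w,-\bar g)$. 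Using $\|w^\star\|\sin\beta\le\|\hat w-w^\star\|$ and $\|w^\star\|=\|\bar g\|/\mu$ together with $1-\cos\beta\le\sin^2\beta$ — and handling $\sin\beta>\tfrac12$ separately via $\|\hat w-w^\star\|=\Omega(\|\bar g\|/\mu)$ in that case — one obtains, for \emph{every} outcome $\hat w$,
\[
\text{(excess risk of }A'(D)\text{)}\;\le\;O\!\left(\frac{\mu R_0}{\|\bar g\|}\right)\cdot\text{(excess risk of }A(D)\text{)} .
\]

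Now invoke \Cref{thm:lb-convex} for $A'$: there is a dataset $D^\star$ with $\E[\text{excess risk of }A'(D^\star)]\ge\Omega(R_0G_0\eta)$, where $\eta:=\min\{1,\sqrt{\log k/(\eps n)}\}$ and the hypotheses $\eps\le O(\log k)$, $\delta=o(1/k)$ are exactly those of \Cref{thm:lb-convex}. On $D^\star$ the excess risk of \emph{any} output is at most $2R_0\|\bar g^\star\|$ with $\bar g^\star:=\tfrac1n\sum_{x\in D^\star}g(x)$, so the displayed lower bound forces $\|\bar g^\star\|\ge\Omega(G_0\eta)$. Taking expectations in the previous display on $D^\star$ and combining,
\[
\E[\text{excess risk of }A(D^\star)]\;\ge\;\Omega\!\left(\frac{\|\bar g^\star\|}{\mu R_0}\right)\cdot\Omega(R_0G_0\eta)\;\ge\;\Omega\!\left(\frac{G_0^2\eta^2}{\mu}\right)\;=\;\Omega\!\left(\frac{G^2}{\mu}\cdot\min\left\{1,\frac{\log k}{\eps n}\right\}\right),
\]
since $G_0=\Theta(G)$ and $\eta^2=\min\{1,\log k/(\eps n)\}$ (the caps of the two theorems coincide at $\eps n=\log k$, so the regime split needs no extra care). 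This proves the theorem.

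The main obstacle is the assertion ``the hard convex instance may be taken to consist of linear losses,'' which is what makes regularization yield an \emph{exactly} $\mu$-smooth strongly convex loss; this is standard for packing/fingerprinting-type constructions but does rely on the internals of \Cref{thm:lb-convex}. If one prefers a self-contained route, one constructs $\ell_{\mathrm{sc}}$ directly as a shifted quadratic $\tfrac{\mu}{2}\|w-u(x)\|^2$ in dimension $D=\Theta(\eps n\log k)$, where the public part of $x$ carries a random sign pattern and the private part ($\lfloor\log_2 k\rfloor$ bits) seeds $\Theta(\log k)$ fingerprinting coordinates, reducing the excess risk to the squared $\ell_2$-error of estimating $\bar u:=\tfrac1n\sum_i u(x_i)$. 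A tracing attack then shows that estimating $\bar u$ accurately would let an adversary recover an example's private feature too reliably for an $(\eps,o(1/k))$-\hybriddp mechanism — here the threshold $o(1/k)$, rather than the usual $o(1/n)$, suffices precisely because a private change ranges over only $k$ alternatives. Making the trace survive the $\Theta(\eps n)$-fold dimension amplification under a single $(\eps,\delta)$ budget, uniformly over $\eps\le O(\log k)$, is the technical crux of that alternative.
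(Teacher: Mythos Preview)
Your reduction is correct in spirit and takes a genuinely different route from the paper. The paper does \emph{not} reduce to the convex case; instead it rebuilds the argument directly with the quadratic loss $\ell(w,(x,y))=\tfrac{\mu}{2}\|w-Re_{j(x,y)}\|_2^2$ (with $R=\Theta(G/\mu)$): the same ``wrong-coordinate'' set $I_{w,D}$ and the same $\tfrac{e^\eps}{e^\eps+k-1}+\delta$ selection inequality as in the convex proof give $\E[\text{excess}]\ge\Omega(G^2/(\mu n))$ for large $\eps$, and then group privacy with $r=\lfloor\ln k/\eps\rfloor$ replications yields the full bound. Your regularize-then-project reduction is more conceptual---it \emph{explains} why the strongly-convex rate is the square of the convex rate---but, as you correctly flag, it leans on the internal fact that the convex hard instance is linear. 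The paper's direct approach avoids that dependency at the cost of rerunning the selection-and-group-privacy computation. Your alternative fingerprinting/tracing sketch is not what the paper does and is unnecessary here; the paper's argument involves no tracing attack.

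One small technical slip: your case split at $\sin\beta>\tfrac12$ does not cover $\beta\in[5\pi/6,\pi]$, where $\sin\beta\le\tfrac12$ yet $1-\cos\beta>\sin^2\beta$. The clean fix is to split at $\beta\le\pi/2$ versus $\beta>\pi/2$: in the latter case the triangle $0,\hat w,w^\star$ is obtuse at $0$, so $\|\hat w-w^\star\|\ge\|w^\star\|=\|\bar g\|/\mu$, which gives the same $O(\mu R_0/\|\bar g\|)$ ratio between the two excess risks. With that correction the argument goes through.
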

\noindent Finally, our techniques are sufficient for solving \emph{multiple} convex ERM problems on the same input dataset, where the error grows only polylogarithmic in the number of ERM problems:
\begin{theorem}[Informal; see \Cref{thm:convex-main}]
For $\eps \leq O(\log 1/\delta)$, there is an $(\eps, \delta)$-\hybriddp algorithm for $m$ ERM problems w.r.t. any $G$-Lipschitz convex loss function with domain radius $R$ that has expected excess empirical risk $\tO\left(RG \cdot \frac{\sqrt[4]{\log(1/\delta) \cdot \log k} \sqrt{\log m}}{\sqrt{\eps n}}\right)$.
\end{theorem}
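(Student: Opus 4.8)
The plan is to show that the $(\eps,\delta)$-\hybriddp algorithm behind \Cref{thm:convex-main} can be executed on all $m$ instances \emph{in lockstep}, so that the sensitive features are touched only through a small family of private primitives that can be shared across the instances; the privacy analysis is then identical to the single-instance case, while the accuracy deteriorates by only a $\polylog(m)$ factor. Recall that the single-instance algorithm is iterative: it maintains an iterate in $\cW$ and, over $T$ rounds (with $T$ eventually chosen to trade off the $\sqrt{T}$-type optimization error against the privacy noise, which is what produces the fourth root in the bound of \Cref{thm:convex-main}), performs one data-dependent update whose only dependence on the sensitive features $x_i^{\priv}\in[k]$ is through a selection/aggregation primitive over the discrete sensitive domain $[k]$ --- a primitive of exponential-mechanism / report-noisy-max type, whose error scales like $\sqrt{\log k}$ times the per-round noise level and whose privacy cost is accounted for by composing the $T$ invocations.

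First I would run $m$ parallel copies of the iteration, carrying $m$ iterates $w_t^{(1)},\dots,w_t^{(m)}$. In round $t$, instead of invoking the private primitive $m$ separate times --- which under composition would cost a prohibitive $\sqrt m$ factor --- I would invoke it \emph{once}, on the \enquote{bundled} instance whose candidate set is the disjoint union of the $m$ per-instance candidate sets over the sensitive domain, a set of size $m\cdot k$, using a single draw of that round's randomness. Because this primitive's error depends only logarithmically on the number of candidates (this is precisely the structural feature that yields the $\polylog(k)$ dependence in the first place), the per-round error grows only from $\sqrt{\log k}$ to $\sqrt{\log(mk)} = O\bigl(\sqrt{\log k + \log m}\bigr)$, whereas its sensitivity, and hence the privacy cost of the $T$ rounds, is unchanged. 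Re-running the analysis of \Cref{thm:convex-main} with $\log k$ replaced by $\log(mk)$ throughout, and using $\log k+\log m\le 2\log k\cdot\log m$ for $k,m\ge 2$, gives expected excess empirical risk $\tO\bigl(RG\cdot \sqrt[4]{\log(1/\delta)\log k}\,\sqrt{\log m}/\sqrt{\eps n}\bigr)$ for every one of the $m$ instances; the required accuracy holds for all $m$ instances and all $T$ rounds at once by a union bound over the $mT$ invocations, which only affects the hidden polylogarithmic factors. (If, alternatively, the single-instance algorithm releases a one-shot private summary of the sensitive features and performs the optimization purely as post-processing, the argument is even simpler: all $m$ optimizations are post-processing of the same summary, so privacy is untouched, and only the accuracy union bound over the $m$ instances is needed, again costing a $\sqrt{\log m}$ factor.)

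The main obstacle is to make the \enquote{bundling} rigorous: one must check that changing a single example's sensitive feature perturbs the bundled primitive's input by the same amount, in whatever metric the primitive's privacy is calibrated to, as it perturbs a single instance's input, so that no extra privacy is lost; and that the error guarantee still holds candidate-by-candidate after bundling. The point is that this goes through precisely because the primitive is a discrete selection/aggregation step --- if it instead required $\ell_2$-aggregating $m$ gradient-like vectors (as an additive-noise mechanism would), the joint sensitivity would blow up by $\sqrt m$ and the approach would collapse to naive composition. A secondary, routine point is to verify that the $m$ iterates, though distinct, interact with the sensitive data only through this shared primitive, so that everything else is post-processing and incurs no privacy cost.
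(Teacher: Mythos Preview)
Your proposal rests on an incorrect model of the single-instance algorithm. There is no per-round exponential-mechanism / report-noisy-max step over the sensitive domain $[k]$, and the $\log k$ in the bound does \emph{not} arise from selecting among $k$ candidates. The engine behind \Cref{thm:convex-main} is an online linear \emph{vector} query (OLVQ) answerer built on multiplicative weights: it maintains, for each example $i$, a belief distribution $p_i^\ell$ over $\cX^{\priv}$, and for every gradient query it uses AboveThreshold to test whether the current belief already answers the query well, and if not, releases a Gaussian-noised gradient and performs an MWU update. The $\ln k$ enters solely through the MWU potential $\Psi^\ell=\frac{1}{n}\sum_i\ln(1/p_i^\ell(x_i^{\priv}))$, which starts at $\ln k$ and drops by $\eta^2$ per update---this caps the \emph{number of updates} (and hence the number of Gaussian releases to compose) at $\ln k/\eta^2$. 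Nothing here is a discrete selection over $k$ options that you could enlarge to $mk$.

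Once you see the actual structure, the paper's handling of $m$ instances is almost trivial and is exactly what you relegated to a parenthetical: all $m$ gradient-descent runs are post-processing of a \emph{single} OLVQ mechanism. You simply feed it the $T=mq$ gradient queries from the $m$ problems sequentially; the MWU state $(p_i^\ell)_i$ is shared across instances, so the update budget $\ln k/\eta^2$ is unchanged and the privacy analysis is identical to the single-instance case. The only place $m$ enters is the $\ln(Tn/\beta)$ term inside the square root in the accuracy bound of \Cref{thm:vec-sum-apx-dp} (from union-bounding the noise tails over all queries), which becomes $\ln(mqn/\beta)$ and contributes the $\sqrt{\log m}$ factor. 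Note in particular that $\log k$ and $\log m$ enter in different places in the bound (fourth root vs.\ square root), so your ``replace $\log k$ by $\log(mk)$ throughout'' step is not what happens---and, as written, it would produce only a $\sqrt[4]{\log m}$ overhead, not the $\sqrt{\log m}$ you claim. The bundling-over-$mk$-candidates idea, and the worry about $\ell_2$ sensitivity blowing up by $\sqrt m$, are both moot: there is nothing to bundle, and the gradients for different instances are never released simultaneously.
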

\begin{theorem}[Informal; see \Cref{thm:strongly-convex-smooth-main}]
For $\eps \leq O(\log 1/\delta)$, there is an $(\eps, \delta)$-\hybriddp algorithm for $m$ ERM problems w.r.t. any $G$-Lipschitz $\mu$-strongly convex $\lambda$-smooth loss function that has expected excess empirical risk $\tO\left(\frac{G^2}{\mu} \cdot \frac{\sqrt{\log(1/\delta) \cdot \log k} \cdot \log(m\lambda / \mu)}{\eps n}\right)$.
\end{theorem}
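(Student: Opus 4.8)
The plan is to obtain the $m$-problem strongly-convex-and-smooth bound by exactly the two-level construction that proves the single-problem bound of \Cref{thm:strongly-convex-smooth-main}, feeding the $m$-problem \emph{convex} solver of \Cref{thm:convex-main} into the inner loop and ensuring that the access to the sensitive features is \emph{shared} across all $m$ problems rather than composed over them. Recall that the strongly-convex-and-smooth algorithm is a localization (``ball-oracle'') scheme: it runs $P = \tO(\log(\lambda/\mu))$ phases, where phase $p$ applies the convex ERM solver to the loss restricted to a ball of radius $R_p$ around the current iterate, with the per-phase privacy budget set so that the $P$ phases compose to the target guarantee; $\mu$-strong convexity forces $R_{p+1}^2 = O(\alpha_p/\mu)$ with $\alpha_p$ the excess risk obtained in phase $p$, so the working radius contracts geometrically, and $\lambda$-smoothness lets the last phase's excess risk bound the overall excess risk. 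I would run precisely this scheme, but have each phase invoke the $m$-problem convex solver on all $m$ losses simultaneously. Since localization uses the convex solver as a black box and the $m$-problem solver already returns, for each of the $m$ problems, a point with the guaranteed excess risk on that problem, the $R_p$-recursion is driven problem by problem by the $m$-problem convex bound in place of the single-problem one; propagating its extra $\sqrt{\log m}$ factor through the geometric recursion and the $\tO(\log(\lambda/\mu))$ phases yields the claimed $\tO\!\big(\frac{G^2}{\mu}\cdot\frac{\sqrt{\log(1/\delta)\log k}\,\log(m\lambda/\mu)}{\eps n}\big)$ for each problem.

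The step that needs care is the privacy accounting: running $m$ independent copies of the strongly-convex algorithm and composing would inflate the privacy loss by roughly $m$, which is far too lossy, so the $m$ problems must share their private computation. The observation that enables this is that every private access --- across all $m$ problems, all $P$ phases, and all inner iterations --- has the same form: a bounded-sensitivity query about the sensitive features, namely ``estimate $\tfrac1n\sum_i \phi_i(x_i^{\priv})$,'' where the $\phi_i$ are determined by the public features and the current state, i.e.\ a linear query over the private-feature universe $[k]$. These are exactly what the private frequency-histogram synopsis primitive underlying \Cref{thm:convex-main} is designed to answer, with per-query error that degrades only as $\tO\!\big(\tfrac{\sqrt{\log k\cdot\log(1/\delta)\cdot\log Q}}{\eps n}\big)$ over $Q$ \emph{adaptively} chosen queries. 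I would therefore instantiate a single copy of this primitive and have every phase of every one of the $m$ problems draw its queries from it, so the privacy of the entire (adaptive) interaction is analyzed once, with no $m$-fold composition. The total number of queries is $Q = m\cdot(\text{queries per convex call})\cdot P = \tO(m\lambda/\mu)$, so the per-query error gains only a $\sqrt{\log(m\lambda/\mu)}$ factor, which after the strong-convexity squaring inside the localization recursion becomes the $\log(m\lambda/\mu)$ in the final bound --- exactly the way the single-problem bound acquires its $\log(\lambda/\mu)$.

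I expect the main obstacle to be making this sharing rigorous: one must show that the single shared synopsis primitive retains both its privacy guarantee and its $\tO(\sqrt{\log k\cdot\log(1/\delta)\cdot\log Q}/(\eps n))$ accuracy when the $Q=\tO(m\lambda/\mu)$ queries are generated by $m$ interleaved, mutually dependent optimization runs whose queries are fully adaptive functions of all earlier answers. Interactive query-release mechanisms of the iterative-database-construction / private-multiplicative-weights type are built for precisely this, so it should go through, but it requires verifying that the query stream emitted by the localization scheme has the claimed sensitivity and bounded range uniformly over all reachable states, and that the mildly inflated per-query error still leaves each phase's excess-risk guarantee --- hence the geometric shrinkage of $R_p$ --- intact. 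A secondary bookkeeping point is to confirm that the number of inner iterations per convex call and the number of phases are each $\tO(\cdot)$ in $\lambda/\mu$ and independent of $m$, so that $\log Q = \tO(\log(m\lambda/\mu))$ rather than, say, $\log(mn)$ with a worse polynomial inside.
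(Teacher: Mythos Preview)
Your plan is workable, but it is not what the paper does. The proof of \Cref{thm:strongly-convex-smooth-main} is \emph{not} a phased localization scheme: it invokes the inexact-oracle framework of Devolder--Glineur--Nesterov directly (\Cref{lem:apx-to-inexact} and \Cref{thm:opt-inexact-oracle}). One OLVQ instance (\Cref{thm:vec-sum-apx-dp}) is created, and for each of the $m$ problems the algorithm of \Cref{thm:opt-inexact-oracle} is run for $q = O\big(\tfrac{\lambda}{\mu}\log(\lambda R^2/n)\big)$ steps, routing every gradient query through that shared instance (so $T = mq$). The $\ell_2$-accurate gradients yield a $(\upsilon, 2\lambda, \mu/2)$-inexact oracle with $\upsilon = O((G\alpha)^2/\mu)$, and \Cref{thm:opt-inexact-oracle} converts this to excess risk $O(\upsilon)$ in one loop; the $\log(m\lambda/\mu)$ factor is simply $\log T$ inside the OLVQ accuracy $\alpha$.

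Your localization route shares the essential idea --- a single adaptive OLVQ instance serving all $m$ problems --- and also reaches the bound, but several details you state are off. Smoothness plays no role in localization: strong convexity alone gives $\|w_{p+1}-w^*\|_2^2 \le 2\alpha_p/\mu$, and since $w^*$ lies in every ball the last phase's excess risk \emph{is} the global excess risk without invoking $\lambda$. The recursion $R_{p+1}^2 = O(R_p G\gamma/\mu)$ reaches its fixed point $R_\infty = O(G\gamma/\mu)$ in $O(\log\log(R_0/R_\infty))$ phases, not $\tilde O(\log(\lambda/\mu))$; and $Q$ is not $\tilde O(m\lambda/\mu)$ since the convex solver of \Cref{thm:convex-main} uses $n^2$ inner queries per call. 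None of this breaks the argument --- $\log Q$ is still $O(\log(mn))$, within the $\tilde O$ --- and indeed your bound would not depend on $\lambda/\mu$ at all, slightly stronger than stated. The paper's route is more direct (single loop, no radius bookkeeping) at the cost of using smoothness explicitly; yours is conceptually simple and drops the smoothness assumption.
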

\noindent In the full DP setting, this problem was first studied by~\cite{Ullman15}. The error bound was improved by \cite{FeldmanGV17}, but their bound still depends polylogarithmically on the dimension $d$. By removing this dependency, our theorem above improves upon the bounds of \cite{FeldmanGV17}. We note, however, that \cite{FeldmanGV17} also give bounds for the $\ell_p$-bounded setting for any $p \ne 2$, but, for simplicity, we do not consider this in our work.

\subsection{Technical Overview}

In this section, we briefly discuss the techniques used in our work.

\paragraph{Answer Linear Vector Queries.}
The key ingredient of our work is an algorithm that can answer online linear \emph{vector} queries. Such a query is of the form $f: \cX \to \cB_2^d(1)$ where $\cB_2^d(1)$ denotes the (Euclidean) unit ball in $d$ dimensions, and our goal is to approximate $f(D) := \frac{1}{n} \sum_{i \in [n]} f(x_i)$. There can be up to $T$ online queries (i.e., we have to answer the previous query before receiving the next).

The case $d = 1$ is often referred to as linear queries. In this case, in the \fulldp setting,~\cite{HardtR10} introduced the ``Private Multiplicative Weights'' algorithm that has error $\polylog(|\cX|, \frac{1}{\delta}) / \sqrt{\eps n}$. We extend their algorithm in two crucial aspects:

(i) We adapt the algorithm to the \hybriddp setting and show that we can improve on the error in this setting: the $|\cX|$ term (size of the entire input domain) becomes $k = |\cX^{\priv}|$ (size of the sensitive features domain). 

(ii) We show a natural way to handle $d > 1$. In this case, the error is now measured in the $\ell_2$-error of the vector. Interestingly, we show that the error remains (roughly) the same in this setting and is in fact dimension-independent. This is crucial for achieving dimension-independent bounds in our theorems. 

The algorithm of \cite{HardtR10} works by maintaining a distribution over all the domain $\cX$. For each query, we (privately) check whether the current distribution is sufficiently accurate to answer the query. If so, we answer using the current distribution. Otherwise, we apply a multiplicative weight update (MWU) rule to update the distribution. The MWU rule depends on the privatized true answer and the answer computed using the current distribution, where the former is achieved via, e.g., adding Laplace noise. The crux of the analysis is that the privacy budget is only charged when an update occurs. Finally, a standard analysis of MWU shows that there cannot be too many updates.

To achieve (i), our algorithm maintains, for each example $x_i$, a distribution over $x_i^{\priv}$ and applies a multiplicative weight update. For (ii), we modify the update as follows. First, we privatize the true answer using the Gaussian mechanism. Then, we apply the MWU rule based on the dot product of this privatized true answer and the value of each example. Crucially, our analysis shows that, even though the total norm of the noise can be very large (growing with the dimension), it does not interfere too much with the update as only the noise in a few directions is relevant.   

\paragraph{From Answering Linear Vector Queries to Convex Optimization.}
By letting each query $f$ be the gradient of the loss function, our aforementioned algorithm allows one to construct an approximate gradient oracle. By leveraging existing results in the optimization literature \citep{dAspremont08,DevolderGN14,devolder2013first}, we immediately arrive at the claimed bounds.

\paragraph{Comparison to Previous Work.}
\cite{FeldmanGV17} show that approximate gradient oracle can be accomplished via Statistical Queries (SQs). For the purpose of our high-level discussion, one can think of SQs as just linear queries. Using this, they observe that the \cite{HardtR10} algorithm can be used to solve convex optimization problem(s) with low error. We note that this approach can be used in our setting, too, once we extend the Hardt--Rothblum algorithm to the \hybriddp setting with decreased error (i). However, this alone does \emph{not} yield a dimension-independent bound since the number of linear queries required still depends on the dimension. As such, we still require (ii) to achieve the results stated here. Finally, we remark that vector versions of MWU have been used in the DP literature before (e.g.,~\cite{Ullman15}). However, we are not aware of its study with respect to the effect of Gaussian noise; in particular, to the best of our knowledge, the fact that we still have a dimension-independent bound even after applying noise is novel.

\paragraph{Lower Bounds.} Suppose for simplicity that $\eps = \Theta(1)$. For the lower bound, we first recall the construction from previous work \citep{labeldp}, which is a reduction from (vector) mean estimation. Roughly speaking, they let the $i$th example contribute only to the $i$th coordinate and let the sensitive feature (which is binary in \cite{labeldp}) determine whether this coordinate should be +1 or -1. They argue that any $(\eps,\delta)$-\hybriddp algorithm must make an error in determining the sign of $\Omega(1)$ fraction of the coordinates; this results in the $\Omega\left(\frac{1}{\sqrt{n}}\right)$ error for mean estimation, which can then be converted to a lower bound for convex ERM via standard techniques~\citep{BST14}. We extend this lower bound by grouping together $O\left(\log k\right)$ examples and assign $k$ common coordinates for them. The examples in each group share the same sensitive feature, and it determines which of the $k$ coordinates the examples contribute to. In other words, each group is a hard instance of the so-called selection problem. This helps increase the error to $\Omega\left(\sqrt{\frac{\log k}{n}}\right)$.
\section{Preliminaries}
\label{sec:prelim}

We use $\cB_2^d(R)$ to denote the Euclidean ball of radius $R$ in $d$ dimensions, i.e., $\{y \in \R^d \mid \|y\|_2 \leq R\}$.

\subsection{Differential Privacy}
\label{sec:prelim-dp}

We recall the definition of differential privacy below.

\begin{definition}[Differential Privacy,~\cite{dwork2006calibrating}] \label{def:dp}
For $\eps, \delta \geq 0$, a mechanism $\A$ is said to be $(\eps, \delta)$-differentially private ($(\eps, \delta)$-DP) with respect to a certain neighboring relationship iff, for every pair $D, D'$ of neighboring datasets and every set $S$ of outputs, we have $\Pr[\A(D) \in S] \leq e^\eps \cdot \Pr[\A(D') \in S] + \delta$.
\end{definition}

In this paper, we consider datasets consisting of examples with a sensitive and non-sensitive part. More precisely, each dataset $D$ is $\{x_i\}_{i \in [n]}$ where $x_i = (x^{\pub}_i, x^{\priv}_i) \in \cX^{\pub} \times \cX^{\priv}$. Two datasets $D = \{(x^{\pub}_i, x^{\priv}_i)\}_{i \in [n]}, D' = \{(x'^{\pub}_i, x'^{\priv}_i)\}_{i \in [n]}$ are \emph{neighbors} if they differ on a single example's sensitive part. I.e., $x^{\pub}_i = x'^{\pub}_i$ for all $i \in [n]$ and there exists $i' \in [n]$ such that $x^{\priv}_i = x'^{\priv}_i$ for all $i \in [n] \setminus \{i'\}$. We often use the prefix ``semi-sensitive'' (e.g., \hybriddp) to signify that we are working with this neighboring relationship notion. Note that, the lemmas below that are stated without such a prefix, hold for any neighboring relationship.



For the purpose of privacy accounting, it will be convenient to work with the zero-concentrated DP (zCDP) notion.

\begin{definition}[\cite{DworkR16,BunS16}]
For $\rho > 0$, an algorithm $\A$ is said to be \emph{$\rho$-zero concentrated DP ($\rho$-zCDP)} with respect to a certain neighboring relationship iff, for every pair $D, D'$ of neighboring datasets and every $\alpha > 1$, we have $D_{\alpha}(\A(D) \| A(D')) \leq \rho \cdot \alpha$, where $D_{\alpha}(P \| Q)$ denotes the $\alpha$-Renyi divergence between $P$ and $Q$.%
\end{definition}

We will use the following results from \citet{BunS16} in the privacy analysis.

\begin{lemma}[\boldmath $\rho$-zCDP vs $(\eps, \delta)$-DP] \label{lem:dp-cdp}
(i) For any $\eps > 0$, any $\eps$-DP mechanism is $(0.5\eps^2)$-zCDP.
(ii) For any $\rho > 0$ and $\delta \in (0, 1/2)$, a $\rho$-zCDP mechanism is $\left(\rho + 2\sqrt{\rho\ln(1/\delta)}, \delta\right)$-DP.
\end{lemma}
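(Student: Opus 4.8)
The plan is to reproduce the argument of \citet{BunS16} via the cumulant generating function of the privacy loss. Fix any pair of neighboring datasets $D, D'$, write $P = \A(D)$ and $Q = \A(D')$ (viewed as densities over outputs), and let $Z = Z(o) := \log \frac{P(o)}{Q(o)}$ be the privacy loss for $o \sim P$. Set $f(t) := \log \E_{o \sim P}[e^{tZ}]$, the cumulant generating function of $Z$. I will use two elementary identities: first, $D_\alpha(P \| Q) = \frac{1}{\alpha - 1} \log \E_{o \sim P}[e^{(\alpha-1)Z}] = \frac{f(\alpha-1)}{\alpha-1}$; and second, $f(0) = 0$, $f'(0) = \E_{o \sim P}[Z] = \mathrm{KL}(P\|Q) \ge 0$, and $f(-1) = \log \E_{o \sim P}[e^{-Z}] = \log \E_{o \sim Q}[1] = 0$. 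Note also that $f$ is smooth and convex, with $f''(t) = \mathrm{Var}_{o \sim P_t}[Z]$, where $P_t$ is the tilted law $dP_t \propto e^{tZ}\, dP$; hence whenever $|Z| \le \eps$ pointwise we get $f''(t) \le \E_{P_t}[Z^2] \le \eps^2$ for all $t$.

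For part~(i): since $\A$ is $\eps$-DP we have $|Z| \le \eps$ pointwise, so $0 \le f''(t) \le \eps^2$ everywhere. A Lagrange-remainder Taylor expansion of $f$ around $0$ evaluated at $-1$, using $f(-1) = f(0) = 0$, gives $f'(0) = \tfrac12 f''(\xi) \le \tfrac12 \eps^2$ for some $\xi \in (-1, 0)$ (this is the quadratically small bound on $\mathrm{KL}(P\|Q)$). Expanding again around $0$ at a point $t = \alpha - 1 > 0$ then yields $f(\alpha-1) \le f'(0)(\alpha-1) + \tfrac12 \eps^2 (\alpha-1)^2 \le \tfrac12 \eps^2 (\alpha-1)\alpha$, so $D_\alpha(P\|Q) = f(\alpha-1)/(\alpha-1) \le \tfrac12 \eps^2 \alpha$. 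Since this holds for all neighboring $D, D'$ and all $\alpha > 1$, $\A$ is $(0.5\eps^2)$-zCDP.

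For part~(ii): $\rho$-zCDP gives $\E_{o \sim P}[e^{(\alpha-1)Z}] \le e^{\rho \alpha(\alpha-1)}$ for every $\alpha > 1$. I will use the standard ``tail of the privacy loss implies approximate DP'' reduction: it suffices to show $\Pr_{o \sim P}[Z > \eps] \le \delta$ for $\eps := \rho + 2\sqrt{\rho \ln(1/\delta)}$, since for any output set $S$, $\Pr[\A(D) \in S] \le \Pr_{o \sim P}[o \in S,\, Z \le \eps] + \Pr_{o \sim P}[Z > \eps] \le e^\eps \Pr[\A(D') \in S] + \delta$ (using $P(o) \le e^\eps Q(o)$ on $\{Z \le \eps\}$), and the symmetric inequality for the pair $(D', D)$ follows identically. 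To bound the tail, apply Markov's inequality to $e^{(\alpha-1)Z}$: $\Pr_{o \sim P}[Z > \eps] \le e^{-(\alpha-1)\eps} \E_{o \sim P}[e^{(\alpha-1)Z}] \le e^{-(\alpha-1)(\eps - \rho\alpha)}$. Maximizing $(\alpha-1)(\eps - \rho\alpha)$ over $\alpha > 1$, the optimum is at $\alpha = 1 + \sqrt{\ln(1/\delta)/\rho}$ (which is $> 1$), where the value is $\frac{(\eps-\rho)^2}{4\rho} = \ln(1/\delta)$ by the choice of $\eps$. Hence $\Pr_{o \sim P}[Z > \eps] \le e^{-\ln(1/\delta)} = \delta$, which gives $(\eps,\delta)$-DP.

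Both parts are elementary manipulations; the one step that requires a little care is the bound on $f'(0) = \mathrm{KL}(P\|Q)$ in part~(i), since the naive estimate $\mathrm{KL}(P\|Q) \le \eps$ would only yield $O(\eps)$-zCDP, and one genuinely needs the observation that the cumulant generating function $f$ vanishes at \emph{both} $-1$ and $0$ to pin it down at $O(\eps^2)$. Alternatively, the lemma may simply be quoted from \citet{BunS16}, but the short self-contained argument above is included for completeness.
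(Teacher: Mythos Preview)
Your proof is correct. The paper does not prove this lemma at all; it simply quotes it from \citet{BunS16}, and your argument is precisely the Bun--Steinke proof (via the cumulant generating function of the privacy loss), so you have supplied a self-contained version of the cited result.
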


\begin{lemma}[zCDP composition] \label{lem:composition}
If $\cM$ is a mechanism is a (possibly adaptive) composition of mechanisms $\cM_1, \dots, \cM_T$, where $\cM_i$ is $\rho_i$-zCDP, then $\cM$ is $(\rho_1 + \cdots + \rho_T)$-zCDP.
\end{lemma}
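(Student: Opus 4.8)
The plan is to reduce the statement to the exponential‑moment form of the Rényi divergence and then peel off one mechanism at a time. Recall that for $\alpha > 1$,
\[
D_{\alpha}(P \| Q) \;=\; \frac{1}{\alpha - 1}\,\log\,\E_{y \sim Q}\!\left[\left(\tfrac{dP}{dQ}(y)\right)^{\alpha}\right],
\]
so it suffices to show that for every pair of neighbors $D, D'$ and every $\alpha > 1$ we have $\E_{y \sim \cM(D')}\!\big[(d\cM(D)/d\cM(D'))^{\alpha}(y)\big] \le e^{(\alpha-1)\,\alpha\sum_i \rho_i}$. First I would set up adaptive composition formally: write the joint output as $Y = (Y_1,\dots,Y_T)$, let $P = \cM(D)$ and $Q = \cM(D')$ be its law under the two inputs, and let $P_{<i}, Q_{<i}$ denote the marginals of the first $i-1$ coordinates. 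The structural fact that makes everything work is that in adaptive composition the mechanism run at step $i$ is (WLOG deterministically, folding any selection randomness into the previous outputs) a function of the prefix $y_{<i} = (y_1,\dots,y_{i-1})$ alone; hence for a fixed prefix the conditional laws $P(\cdot \mid y_{<i})$ and $Q(\cdot \mid y_{<i})$ are the outputs of one and the same mechanism run on $D$ versus $D'$, and that mechanism is $\rho_i$‑zCDP, so $D_{\alpha}\!\left(P(\cdot \mid y_{<i}) \,\|\, Q(\cdot \mid y_{<i})\right) \le \rho_i\alpha$ uniformly over all such $y_{<i}$.

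Next I would carry out the peeling step. Factor the densities as $\frac{dP}{dQ}(y) = \frac{dP_{<T}}{dQ_{<T}}(y_{<T}) \cdot \frac{dP(\cdot\mid y_{<T})}{dQ(\cdot\mid y_{<T})}(y_T)$, condition on $Y_{<T}$ inside the expectation over $Q$, and note that the inner expectation over $Y_T$ equals $\exp\!\big((\alpha-1)\,D_{\alpha}(P(\cdot\mid Y_{<T}) \| Q(\cdot\mid Y_{<T}))\big) \le e^{(\alpha-1)\rho_T\alpha}$, a bound independent of $Y_{<T}$ that can therefore be pulled out of the outer expectation. This gives $\E_{y\sim Q}[(dP/dQ)^{\alpha}] \le e^{(\alpha-1)\rho_T\alpha}\cdot \E_{y_{<T}\sim Q_{<T}}[(dP_{<T}/dQ_{<T})^{\alpha}]$, i.e. $D_{\alpha}(P\|Q) \le \rho_T\alpha + D_{\alpha}(P_{<T}\|Q_{<T})$. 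Iterating (equivalently, inducting on $T$ with base case $T=1$ being the hypothesis on $\cM_1$) yields $D_{\alpha}(P\|Q) \le (\rho_1+\cdots+\rho_T)\alpha$ for all $\alpha>1$, which is exactly $(\rho_1+\cdots+\rho_T)$‑zCDP.

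The only delicate point is the bookkeeping around adaptivity and measurability — formalizing "the mechanism selected at step $i$ depends only on the prefix" so that the per‑step zCDP guarantee applies with the \emph{same} selected mechanism on $D$ and on $D'$, and checking that the conditional Radon–Nikodym derivatives factor as claimed (with the usual absolute‑continuity caveats, which hold wherever the relevant divergences are finite). Once that is in place, the remaining content is just the standard conditioning identity for $\alpha$‑th moments and a one‑line induction, so I expect no real obstacle beyond this setup.
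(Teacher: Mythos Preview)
Your argument is correct and is essentially the standard proof of zCDP composition via the chain rule for R\'enyi divergence. Note, however, that the paper does not actually prove this lemma: it is stated as a citation to \citet{BunS16}, where the same conditioning-and-peeling argument appears (Lemma~2.3 there), so there is nothing further to compare against.
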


\subsection{Assumptions on the Loss Function}

Throughout this work, we assume that the loss function $\ell$ is convex and subdifferentiable (in the first parameter). Furthermore, we assume that it is $G$-Lipschitz; that is, $|\ell(w) - \ell(w')| \leq G \cdot \|w - w'\|_2$. 

There are also two additional assumptions that we use in our second result (\Cref{thm:strongly-convex-smooth-main}):
\begin{itemize}[nosep]
\item $\mu$-strong convexity: $\ell(w) \geq \ell(w') + \left<\nabla \ell(w'), w- w' \right> + \frac{\mu}{2} \|w - w'\|_2^2$.
\item $\lambda$-smoothness: $\nabla \ell$ is $\lambda$-Lipschitz, implying, $\ell(w) \leq \ell(w') + \left<\nabla \ell(w'), w- w' \right> + \frac{\lambda}{2} \|w - w'\|_2^2$.
\end{itemize}

\subsection{Concentration Bounds}
\label{sec:prelim-concen}

We will now prove a lemma with respect to a ``clipped'' distribution. To do this, let us define the clipping operation as follows. For $\phi \in \R^d$ and $c \in \R_{> 0}$, we let $\clip_{\phi,c}: \R^d \to \R^d$ be defined as\footnote{In other words, $u$ is scaled so that its $\phi$-semi-norm is at most $c$.}
\begin{align*}
\clip_{\phi,c}(u) =
\begin{cases}
u \cdot \min\{1, c / |\left<\phi, u\right>|\} &\text{ if } \left<\phi, u\right> \ne 0 \\
u & \text{ if } \left<\phi, u\right> = 0,
\end{cases}
\end{align*}
For convenience, for $c > 0$, we also define $\trunc_c: \R \to \R$ to denote\footnote{Note that this coincides with $\clip_{1, c}$ but we keep a separate notation for brevity and clarity.} the function $\trunc_c(b) := b \cdot \min\{1, c/|b|\}$, i.e., a rescaling of $b$ so that its absolute value is at most $c$.

The desired lemma is stated below. Although it might seem overly specific at the moment, we state it in this form as it is most convenient for our usage in the accuracy analysis later (without specifying too many extra parameters). Its proof is deferred to Appendix~\ref{app:concen-proof}.

\begin{lemma} \label{lem:clip-concen}
Let $\cP$ be any distribution over $\cB_2^d(1)$ and $\mu_{\cP} := \E_{U \sim \cP}[U]$. Let $Z$ be drawn from $\cN(\mu_Z, \sigma_Z^2 I_d)$ for some $\sigma_Z \in (0, 1], \mu_Z \in \cB_2^d(2)$. Then, we have
\begin{align*}
\Pr_Z\left[\left|\left<Z, \E_{U \sim \cP}[\clip_{Z, 3}(U)] - \mu_{\cP}\right>\right| > 2 \exp(-0.1/\sigma_Z^2)\right] < 2 \exp(-0.1/\sigma_Z^2).
\end{align*}
\end{lemma}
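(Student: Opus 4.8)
The plan is to condition on the high-probability event that $Z$ is ``well-behaved'' in a few relevant directions and then bound the inner product directly. The key observation is that $\langle Z, \clip_{Z,3}(U) - U\rangle$ is nonzero only when $|\langle Z, U\rangle| > 3$, i.e., only for those $U$ in the support of $\cP$ whose projection onto $Z$ is large; since $\|U\|_2 \le 1$, this forces $\|Z\|_2 > 3$, and since $\|\mu_Z\|_2 \le 2$, it forces $\|Z - \mu_Z\|_2 > 1$. So the whole quantity is controlled by the ``deviation of $Z$ from its mean in the direction of $U$.'' More precisely, I would write, for each $u$ with $\langle Z, u\rangle \ne 0$,
\begin{align*}
\langle Z, \clip_{Z,3}(u) - u\rangle = \langle Z, u\rangle \cdot \left(\min\{1, 3/|\langle Z, u\rangle|\} - 1\right),
\end{align*}
which is $0$ when $|\langle Z, u\rangle| \le 3$ and equals $3\,\mathrm{sign}(\langle Z,u\rangle) - \langle Z, u\rangle$ otherwise; in the latter case its absolute value is $|\langle Z,u\rangle| - 3$. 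Thus $|\langle Z, \clip_{Z,3}(u) - u\rangle| \le \max\{0, |\langle Z, u\rangle| - 3\}$ pointwise.

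Next I would take the expectation over $U \sim \cP$ inside: by Jensen / triangle inequality,
\begin{align*}
\left|\left\langle Z, \E_{U\sim\cP}[\clip_{Z,3}(U)] - \mu_{\cP}\right\rangle\right| \le \E_{U\sim\cP}\left|\langle Z, \clip_{Z,3}(U) - U\rangle\right| \le \E_{U\sim\cP}\left[\max\{0, |\langle Z,U\rangle| - 3\}\right].
\end{align*}
Now the right-hand side depends on $Z$ only through the one-dimensional marginals $\langle Z, u\rangle$. The strategy is: for a \emph{fixed} unit-ish vector $u$ (with $\|u\|_2 \le 1$), $\langle Z, u\rangle = \langle \mu_Z, u\rangle + \langle Z - \mu_Z, u\rangle$ where $\langle \mu_Z, u\rangle \in [-2,2]$ and $\langle Z-\mu_Z, u\rangle \sim \cN(0, \sigma_Z^2\|u\|_2^2)$ is a mean-zero Gaussian with variance $\le \sigma_Z^2 \le 1$. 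Hence $|\langle Z,u\rangle| > 3$ requires the Gaussian part to exceed $1$ in absolute value, an event of probability at most $2\exp(-1/(2\sigma_Z^2))$ by the standard Gaussian tail bound, and more generally $\Pr[|\langle Z,u\rangle| - 3 > t] \le 2\exp(-(1+t)^2/(2\sigma_Z^2))$. Integrating this tail gives $\E[\max\{0, |\langle Z,u\rangle|-3\}] \le \exp(-\Omega(1/\sigma_Z^2))$; a clean way is to split at threshold $\tau := \exp(-0.1/\sigma_Z^2)$ and bound the contribution above $\tau$ by $\int_\tau^\infty 2\exp(-(1+t)^2/(2\sigma_Z^2))\,dt$, which is tiny relative to $\tau$ for $\sigma_Z \le 1$. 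The subtlety is that $u$ ranges over the (possibly infinite) support of $\cP$, so I cannot union-bound over all $u$; instead I will bound the expectation over $Z$ of the whole quantity and then apply Markov, which is exactly the shape of the claimed statement (both the deviation and the failure probability equal $2\exp(-0.1/\sigma_Z^2)$, consistent with a Markov step on a quantity with expectation $\approx (2\exp(-0.1/\sigma_Z^2))^2 = 4\exp(-0.2/\sigma_Z^2)$).

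Concretely, I would swap the order of expectation: $\E_Z \E_{U\sim\cP}[\max\{0,|\langle Z,U\rangle|-3\}] = \E_{U\sim\cP}\E_Z[\max\{0,|\langle Z,U\rangle|-3\}]$, bound the inner $\E_Z$ by $\exp(-c/\sigma_Z^2)$ uniformly over $u$ in the support using the one-dimensional Gaussian tail computation above, so the double expectation is at most $\exp(-c/\sigma_Z^2)$, and then Markov's inequality gives $\Pr_Z[(\cdot) > 2\exp(-0.1/\sigma_Z^2)] < \exp(-c/\sigma_Z^2)/(2\exp(-0.1/\sigma_Z^2))$, which I need to check is at most $2\exp(-0.1/\sigma_Z^2)$ — this reduces to verifying a concrete numerical inequality relating the constant $c$ coming from the Gaussian tail ($c$ will be close to $1/2$, certainly $> 0.4$) to the target exponents $0.1$ and $0.2$; since $0.4 > 0.1 + 0.1 + \log 4 / (1/\sigma_Z^2)$ fails only for large $\sigma_Z$, I'd track constants carefully but this is routine. \textbf{The main obstacle} is getting the constants to line up: I must be careful that the Gaussian tail exponent (roughly $(1+t)^2/(2\sigma_Z^2) \ge 1/(2\sigma_Z^2) + t/\sigma_Z^2$) is genuinely larger than $2 \times 0.1/\sigma_Z^2 = 0.2/\sigma_Z^2$ with enough room to absorb the $\log 4$ and the tail integral, and that the pointwise inequality $|\langle Z, \clip_{Z,3}(u) - u\rangle| \le \max\{0, |\langle Z,u\rangle| - 3\}$ together with the reduction to 1-D marginals is airtight — in particular handling the degenerate case $\langle Z, u\rangle = 0$ (where $\clip_{Z,3}(u) = u$ so the contribution is $0$) and the case $u = 0$. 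Everything else is a standard Gaussian tail integral plus Markov.
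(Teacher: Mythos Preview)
Your proposal is correct and follows essentially the same route as the paper: reduce to the one-dimensional quantity $|\trunc_3(\langle Z,u\rangle) - \langle Z,u\rangle|$ via the identity $\langle Z,\clip_{Z,3}(u)\rangle = \trunc_3(\langle Z,u\rangle)$, pull the $U$-expectation inside by Jensen, swap the $Z$ and $U$ expectations, bound the inner $\E_Z$ uniformly over $u\in\cB_2^d(1)$ using the one-dimensional Gaussian tail (mean in $[-2,2]$, variance $\le \sigma_Z^2$, so exceeding $3$ costs at least one standard deviation), and finish with Markov. The only cosmetic difference is in the one-dimensional step: you integrate the tail of $\max\{0,|\langle Z,u\rangle|-3\}$ directly, whereas the paper upper-bounds this by $|\langle Z,u\rangle|\cdot\ind[|\langle Z,u\rangle|>3]$ and applies Cauchy--Schwarz to get $\sqrt{\E[\langle Z,u\rangle^2]\cdot\Pr[|\langle Z,u\rangle|>3]}\le\sqrt{6\exp(-0.5/\sigma_Z^2)}$; both yield a bound of the form $O(\exp(-c/\sigma_Z^2))$ with $c\ge 0.25$, which is comfortably enough for the target $4\exp(-0.2/\sigma_Z^2)$, so your constant-tracking worry is unfounded.
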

\newcommand{\mwu}{\mathrm{MWU}}
\newcommand{\itermax}{L_{\max}}
\newcommand{\tlambda}{\tilde{\lambda}}
\newcommand{\tmu}{\tilde{\mu}}
\newcommand{\tF}{\tilde{F}}
\newcommand{\normbound}{\iota}
\newcommand{\ophi}{\overline{\phi}}

\section{Answering Linear Vector Queries with \hybriddpuc}

As mentioned in the introduction, we consider a setting similar to \cite{HardtR10} but with two main changes: (i) we support \hybriddp and (ii) each query in the family is allowed to be vector-valued (instead of scalar-valued). We describe this setting in more detail below.

A (bounded $\ell_2$-norm) \emph{linear vector query} is a function $f: \cX \to \cB_2^d(1)$, where $d \in \N$. The value of the function on a dataset $D = \{x_i\}_{i \in [n]}$ is defined as $f(D) := \frac{1}{n} \sum_{i \in [n]} f(x_i)$.

\paragraph{Online Linear Vector Query problem.}
In the \emph{Online Linear Vector Query (OLVQ)} problem, the interaction proceeds in $T$ rounds. At the beginning, the algorithm receives the dataset $D$ as the input. In round $t$, the analyzer (aka adversary) selects some linear vector query $f_t: \cX \to \cB_2^{d_t}(1)$. The algorithm has to output an estimate $e_t$ of $f_t(D)$. We say that the algorithm is \emph{$(\alpha, \beta)$-accurate} if, with probability $1 - \beta$, $\|e_t - f_t(D)\|_2 \leq \alpha$ for all $t \in [T]$. Finally, we say that the algorithm satisfies $(\eps, \delta)$-\hybriddp iff the transcript of the interaction satisfies $(\eps, \delta)$-\hybriddp.

\paragraph{Our Algorithm.} The rest of this section is devoted to presenting (and analyzing) our \hybriddp algorithm for OLVQ. The guarantee of the algorithm is stated formally below.

\begin{theorem} \label{thm:vec-sum-apx-dp}
For all $\delta, \beta \in (0, 1/2)$ and $\eps \in (0, \sqrt{\ln(1/\delta)})$,
there is an $(\eps, \delta)$-\hybriddp algorithm for OLVQ 
that is $(\alpha, \beta)$-accurate for $\alpha = O\Bigg(\frac{\sqrt[4]{\ln k \cdot \ln(1/\delta)} \cdot \sqrt{\ln(T n/\beta) + \ln \ln k + \ln \prn{\frac{\sqrt{\ln(1/\delta)}}{\eps}}}}{\sqrt{\eps n}}\Bigg)$.
\end{theorem}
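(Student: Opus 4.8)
The plan is to extend the Private Multiplicative Weights framework of \cite{HardtR10} in the two directions indicated in the technical overview. I will maintain, for each example $i \in [n]$, a distribution $\bp_i^{(t)}$ over the private-feature domain $\cX^{\priv}$ (so the global state is a product of $n$ such distributions, $\prod_i \bp_i^{(t)}$). Given the public parts $x_i^{\pub}$ (which are not sensitive and hence known exactly), this yields, for any linear vector query $f_t$, an estimate $\hat f_t(D) := \frac{1}{n}\sum_i \E_{y \sim \bp_i^{(t)}}[f_t(x_i^{\pub}, y)]$. On each round the algorithm runs a \emph{private threshold check}: it draws the Gaussian-perturbed true answer $\tilde e_t = f_t(D) + Z_t$ with $Z_t \sim \cN(0, \sigma^2 I_{d_t})$, compares $\|\tilde e_t - \hat f_t(D)\|_2$ to a threshold $\Gamma$ (with appropriate sparse-vector-style noise on the comparison), and if the check passes it outputs $\hat f_t(D)$; otherwise it declares a ``hard'' round, outputs $\tilde e_t$ (clipped), and performs a multiplicative-weight update on every $\bp_i$ in the direction determined by $\langle \tilde e_t - \hat f_t(D), \cdot\rangle$: concretely, reweight the mass $\bp_i(y)$ by $\exp(\pm\eta \langle \phi_t, f_t(x_i^{\pub}, y)\rangle)$ where $\phi_t$ is the unit vector along $\tilde e_t - \hat f_t(D)$ and the sign is that of the error. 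The key point (this is part (ii)) is that although $\|Z_t\|_2 \approx \sigma\sqrt{d_t}$ is large, only its component along the single direction $\phi_t$ enters the update and the analysis, so the update behaves as if the noise were one-dimensional.

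I would organize the proof into three blocks. \textbf{(1) Privacy.} The only sensitive information accessed is through (a) the noisy threshold checks and (b) the noisy answers $\tilde e_t$ on hard rounds. Changing one example's private feature changes $f_t(D)$ by at most $2/n$ in $\ell_2$, so the Gaussian mechanism with noise $\sigma$ gives each answer $O(1/(\sigma^2 n^2))$-zCDP; by a standard sparse-vector / AboveThreshold argument the threshold checks that \emph{pass} cost essentially nothing, and only the hard rounds incur privacy loss. If we can bound the number of hard rounds by $L_{\max}$ (block (2)), then by \Cref{lem:composition} the total is $O(L_{\max}/(\sigma^2 n^2))$-zCDP, and \Cref{lem:dp-cdp}(ii) converts this to $(\eps,\delta)$-DP; matching this to the stated $\eps$ fixes $\sigma$ in terms of $n, \eps, \ln(1/\delta), L_{\max}$. \textbf{(2) Bounding the number of updates.} This is the potential-function heart of MWU: use $\Phi^{(t)} = \sum_i D_{\mathrm{KL}}(\bp_i^\star \,\|\, \bp_i^{(t)})$ for a comparator $\bp_i^\star$ that is the point mass on the true $x_i^{\priv}$, so $\Phi^{(0)} \le n \ln k$ and $\Phi \ge 0$ always. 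Each hard-round update must, by the AboveThreshold guarantee, correct an error of magnitude $\gtrsim \Gamma - O(\sigma)$ along $\phi_t$; the standard MWU inequality then shows $\Phi$ drops by $\Omega(\eta \Gamma n - \eta^2 n)$ per update (the $\eta^2 n$ term absorbing the per-example $\le 1$ bound on $\langle\phi_t, f_t\rangle$), provided the update direction is genuinely informative. Optimizing $\eta$ gives $L_{\max} = O(\ln k / \Gamma^2)$ updates. \textbf{(3) Accuracy.} On passing rounds the error is at most $\Gamma + O(\sigma + \text{SVT noise})$ by construction; on hard rounds we output $\tilde e_t$ directly, whose error is $O(\|Z_t\|_2) = O(\sigma\sqrt{d_t})$ — but wait, this is not dimension-independent, so instead on hard rounds we must also output $\hat f_t(D)$ \emph{after} the update (or argue the post-update state answers round $t$ well); then its error is again $O(\Gamma + \sigma + \text{noise})$. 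Collecting: with $L_{\max} = O(\ln k/\Gamma^2)$ and $\sigma^2 = \Theta\!\big(\frac{L_{\max}\sqrt{\ln(1/\delta)}}{\eps n^2}\big)$ (from block (1), in the regime $\eps \le \sqrt{\ln(1/\delta)}$), and union-bounding the Gaussian and SVT tail events over the $T$ rounds and $n$ examples (contributing the $\ln(Tn/\beta)$ and $\ln\ln k$ and $\ln(\sqrt{\ln(1/\delta)}/\eps)$ terms inside the square root), one balances $\Gamma$ against $\sigma$: setting $\Gamma = \Theta\big(\sqrt[4]{\ln k \cdot \ln(1/\delta)}\cdot\sqrt{\cdots}/\sqrt{\eps n}\big)$ makes $\Gamma \asymp \sigma\sqrt{\log(\cdot)}$ and yields the claimed $\alpha$.

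The main obstacle I anticipate is block (3) combined with the dimension-independence claim: making precise that the large Gaussian noise $Z_t$ does not corrupt either the MWU progress or the final accuracy, since only its projection onto $\phi_t$ matters. This is exactly where \Cref{lem:clip-concen} is needed — it controls $\langle Z_t, \E_{U\sim\cP}[\clip_{Z_t,3}(U)] - \mu_\cP\rangle$, i.e., it says that after clipping each example's contribution to have $Z_t$-semi-norm $O(1)$, the \emph{relevant one-dimensional projection} of the state is accurate up to $\exp(-\Omega(1/\sigma^2))$, which is negligible for our choice of $\sigma$. So the clipping in the update is what decouples the analysis from $d_t$, and the remaining delicate point is checking that clipping does not damage the MWU potential argument (the clipped contributions still point ``the right way'' along $\phi_t$ because the true answer and the state answer are both genuine averages of vectors in $\cB_2^{d_t}(1)$, whose $\phi_t$-projections lie in $[-1,1]$ and are unaffected by clipping at level $3$). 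A secondary technical nuisance is the careful bookkeeping of the AboveThreshold noise so that passing rounds are free and only the $\le L_{\max}$ hard rounds are charged; this is standard but must be done with the \hybriddp sensitivity $2/n$ rather than the usual $1/n$.
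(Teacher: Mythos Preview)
Your high-level architecture matches the paper: per-example distributions $\bp_i$ over $\cX^\priv$, an AboveThreshold-style test, Gaussian-noised answers on hard rounds, a potential argument bounding the number of updates to $O(\ln k/\eta^2)$, and zCDP composition followed by \Cref{lem:dp-cdp}(ii). However, there is a real gap in how you handle the update direction, and it is precisely the place where dimension-independence is at stake.

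You set $\phi_t$ to be the \emph{unit} vector along $\tilde e_t - \hat f_t(D) = u_t + Z_t$, where $u_t := f_t(D) - \hat f_t(D)$ is the true error. For large $d_t$ we have $\|u_t + Z_t\|_2 \approx \sigma\sqrt{d_t}$, so $\phi_t$ points almost entirely in the noise direction. Concretely, $\langle \phi_t, u_t\rangle \approx \|u_t\|_2^2/(\sigma\sqrt{d_t})$, which vanishes with the dimension; the MWU potential drop per update is then $\Theta(\eta\,\|u_t\|_2^2/(\sigma\sqrt{d_t}))$ rather than $\Theta(\eta^2)$, and the bound $L_{\max}=O(\ln k/\eta^2)$ fails. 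The same issue bites your threshold test: $\|\tilde e_t - \hat f_t(D)\|_2 \gtrsim \sigma\sqrt{d_t}$ always, so the comparison to a dimension-free $\Gamma$ is meaningless. (Relatedly, with unit $\phi_t$ you correctly observe that $|\langle\phi_t,f\rangle|\le 1$ so clipping at $3$ is vacuous---but then \Cref{lem:clip-concen} does no work, which should have been a red flag.)

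The paper's fix is to keep the two scalar quantities that matter dimension-free by computing them \emph{separately} with Laplace noise, not by extracting them from the Gaussian-noised vector. First, AboveThreshold is run directly on the scalar $\|f_t(\bp^{\ell-1}) - f_t(D)\|_2$ (sensitivity $2/n$), never touching $\tilde e_t$ on passing rounds. Second, on a hard round the algorithm releases a Laplace-noised estimate $\normbound$ of that same true-error norm and sets $\phi = (\tilde e_t - \hat f_t(D))/\normbound$. Now $\langle \phi, u_t\rangle \approx \|u_t\|_2$ is dimension-free, giving the required $\eta^2$ potential drop. The price is that $\phi$ is no longer a unit vector: $\langle \phi, f(x_i^\pub,y)\rangle$ can be large because of the $Z_t$ component, and this is exactly where truncation and \Cref{lem:clip-concen} are essential---they show that truncating these inner products at level $c=3$ changes the averaged quantities $\langle \ophi, f^{\clip}(D)-f(D)\rangle$ and $\langle \ophi, f^{\clip}(\bp)-f(\bp)\rangle$ by only $\exp(-\Omega(n^2/\sigma^2))$, which is negligible. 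So the role of \Cref{lem:clip-concen} is not to ``decouple from $d_t$'' after the fact, but to certify that the non-unit normalization (which is what actually buys dimension-independence) does not break the MWU bookkeeping.
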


As mentioned earlier, it will be slightly more convenient to work with the zCDP definition instead of DP for composition theorems. In zCDP terms, our algorithm gives the following guarantee: 

\begin{theorem} \label{thm:vec-sum-cdp}
For every $\rho \in (0, 1), \beta \in (0, 1/2)$, there is a $\rho$-\hybridcdp algorithm for OLVQ that is $\prn{\alpha, \beta}$-accurate for $\alpha = O\prn{\frac{\sqrt[4]{\frac{\ln k}{\rho}} \cdot \sqrt{\ln(T n/\beta) + \ln \ln k + \ln(1/\rho)}}{\sqrt{n}}}$.
\end{theorem}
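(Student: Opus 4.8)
I would adapt the Private Multiplicative Weights mechanism of \cite{HardtR10} to the semi-sensitive, vector-valued setting, using \Cref{lem:clip-concen} as the key new ingredient. The algorithm maintains, for each example $i \in [n]$, a distribution $p_i$ over the sensitive domain $\cX^{\priv}$ (of size $k$), initialized to uniform; write $p_i^{(t)}$ for its state at the start of round $t$ and $\hat e_t := \frac1n \sum_i \E_{z \sim p_i^{(t)}}[f_t(x^{\pub}_i, z)]$ for the public \emph{hypothesis answer}. In round $t$ the algorithm runs an above-threshold (sparse-vector) test on the scalar $\|f_t(D) - \hat e_t\|_2$, which, conditioned on the transcript so far, has sensitivity $2/n$ under the semi-sensitive relation; if the noisy test reports that this is $O(\alpha)$, it outputs $e_t := \hat e_t$ and proceeds. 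Otherwise the round is \emph{hard}: it releases $\tilde f_t := f_t(D) + \cN(\bzero, \sigma^2 I_{d_t})$ via the Gaussian mechanism, outputs $e_t := \tilde f_t$, forms the direction $\phi_t := \tilde f_t - \hat e_t$, and updates $p_i^{(t+1)}(z) \propto p_i^{(t)}(z)\exp(\eta \langle \phi_t, \clip_{\phi_t, 3}(f_t(x^{\pub}_i, z)) \rangle)$ for every $i$, where $\eta$ is a learning rate fixed below. The point of the clipping is that $\langle \phi_t, \clip_{\phi_t,3}(u) \rangle = \trunc_3(\langle \phi_t, u\rangle) \in [-3, 3]$, so the multiplicative-weight update exponents are bounded by $3\eta$ regardless of the dimension $d_t$ or of $\|\phi_t\|_2$ (which can be as large as $\approx \sigma\sqrt{d_t}$). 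The interaction aborts after $\itermax$ hard rounds, with $\itermax$ chosen below.

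\textbf{Privacy.} Only hard rounds access $D$ beyond the sparse-vector noise, so I would account for the two components separately and combine via \Cref{lem:composition}. The above-threshold component, answering $T$ queries of sensitivity $2/n$ with Gaussian noise $\sigma_{\mathrm{AT}}$ and at most $\itermax$ ``above'' answers, is $O(\itermax/(n\sigma_{\mathrm{AT}})^2)$-zCDP, and its accuracy over all rounds needs $\sigma_{\mathrm{AT}} = \Omega(\frac1n \sqrt{\ln(T\itermax/\beta)})$. Each hard round additionally runs the Gaussian mechanism on a vector of $\ell_2$-sensitivity $2/n$ with noise $\sigma$, which is $\frac{2}{n^2\sigma^2}$-zCDP, for a total of $\frac{2\itermax}{n^2\sigma^2}$ over all hard rounds. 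Choosing $\sigma$ and $\sigma_{\mathrm{AT}}$ so that the sum of the two is at most $\rho$ gives the claimed $\rho$-\hybridcdp.

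\textbf{Accuracy and bounding the number of hard rounds.} Condition on the good event that (a) every Gaussian draw ($\tilde f_t$ and each noise variable) stays within an $O(\sqrt{\ln(Tn/\beta)})$-factor of its standard deviation, and (b) for every hard round $t$ and every $i$, the conclusion of \Cref{lem:clip-concen} holds with $Z = \phi_t$ — so $\mu_Z = f_t(D) - \hat e_t \in \cB_2^{d_t}(2)$ and $\sigma_Z = \sigma$ — for $\cP$ the law of $f_t(x^{\pub}_i, Z')$ with $Z' \sim p_i^{(t)}$, and also for $\cP$ the point mass at $f_t(x_i)$. A union bound over the $\le Tn$ instances makes this event fail with probability $< \beta$ provided $\sigma = \Theta(1/\sqrt{\ln(Tn/\beta)})$, since then $2\exp(-0.1/\sigma^2) \le \beta/(Tn)$. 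On the good event, every non-hard round outputs an $O(\alpha)$-accurate answer by the test, so it remains to show there are at most $\itermax$ hard rounds. Let $\Phi_t := \frac1n\sum_i \mathrm{KL}(\ind_{x^{\priv}_i} \| p_i^{(t)}) \in [0, \ln k]$. Hoeffding's lemma applied to each coordinate's $[-3,3]$-bounded gain vector gives, on a hard round, $\Phi_t - \Phi_{t+1} \ge \eta \langle \phi_t, \frac1n\sum_i \clip_{\phi_t,3}(f_t(x_i)) - \frac1n\sum_i \E_{z \sim p_i^{(t)}} \clip_{\phi_t,3}(f_t(x^{\pub}_i,z)) \rangle - \frac92 \eta^2$, and by \Cref{lem:clip-concen} the bracket equals $\langle \phi_t, f_t(D) - \hat e_t \rangle$ up to an additive $o(\alpha^2)$. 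Writing $\phi_t = v_t + \xi_t$ with $v_t := f_t(D) - \hat e_t$ and $\xi_t \sim \cN(\bzero, \sigma^2 I_{d_t})$, we get $\langle \phi_t, v_t \rangle = \|v_t\|_2^2 + \langle \xi_t, v_t \rangle$, where on a hard round $\|v_t\|_2^2 \gtrsim \alpha^2$ and $\langle \xi_t, v_t\rangle$ is conditionally mean-zero with sub-Gaussian parameter $\sigma\|v_t\|_2 = O(1)$. Hence the conditional expected decrease of $\Phi$ per hard round is at least $\eta\alpha^2 - \frac92\eta^2 - o(\eta\alpha^2) \ge \frac14\eta\alpha^2$ once $\eta = \Theta(\alpha^2)$, while the centered terms $\eta\langle\xi_t, v_t\rangle$ form a bounded-increment martingale whose fluctuation over any prefix of $N$ hard rounds is $O(\eta\sqrt{N\ln(1/\beta)})$ by Azuma's inequality. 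Since $0 \le \Phi_t \le \ln k$ throughout, this forces $N \le \itermax := \tO(\ln k / \alpha^4)$ on the good event. (The extra $\alpha^{-2}$ factor relative to scalar PMW is the price of using the \emph{noisy} direction $\phi_t$: the per-round gain is the squared error $\|v_t\|_2^2$, not $\|v_t\|_2$.)

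\textbf{Balancing parameters, and the main obstacle.} With $\sigma = \Theta(1/\sqrt{\ln(Tn/\beta)})$, the privacy bound requires $\rho \gtrsim \itermax/(n^2\sigma^2) = \tO(\ln k/(n^2\sigma^2\alpha^4))$; solving for $\alpha$ and tracking the logarithmic factors — $\ln(Tn/\beta)$ from $\sigma$ and the union bound, $\ln\ln k$ from $\ln\itermax$, $\ln(1/\rho)$ from the choice of budget split — yields the claimed $\alpha = O\big(\frac{\sqrt[4]{\ln k/\rho}\cdot\sqrt{\ln(Tn/\beta) + \ln\ln k + \ln(1/\rho)}}{\sqrt n}\big)$; \Cref{thm:vec-sum-apx-dp} then follows by taking $\rho = \Theta(\eps^2/\ln(1/\delta))$ and invoking \Cref{lem:dp-cdp}(ii). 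I expect the hard-round analysis to be the main obstacle: the Gaussian noise in $\phi_t$ has $\ell_2$-norm growing with $d_t$ and dwarfing the signal $\|v_t\|_2$, so a priori both the MWU second-order term and the noise's effect on $\Phi$ could be dimension-dependent and fatal. \Cref{lem:clip-concen} is exactly what rescues this: clipping each example to $\phi_t$-semi-norm $3$ simultaneously (i) bounds the update exponents by $O(\eta)$ independent of $d_t$, controlling the second-order term, and (ii) barely perturbs the relevant inner products, so the first-order term still extracts the $\Omega(\alpha^2)$ progress. Tuning $\eta$, $\sigma$, $\alpha$, the test threshold, and $\sigma_{\mathrm{AT}}$ so that privacy, abort-freeness, and accuracy hold simultaneously is the remaining bookkeeping.
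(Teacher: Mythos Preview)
Your plan has the right skeleton (per-example PMW over $\cX^{\priv}$, sparse-vector test, Gaussian release, clipped MWU via \Cref{lem:clip-concen}), but there is one genuine gap. On a hard round you output $e_t := \tilde f_t = f_t(D) + \cN(\bzero,\sigma^2 I_{d_t})$, whose $\ell_2$-error is $\|\cN(\bzero,\sigma^2 I_{d_t})\|_2 \approx \sigma\sqrt{d_t}$. With your choice $\sigma = \Theta(1/\sqrt{\ln(Tn/\beta)})$ this is dimension-\emph{dependent} and can vastly exceed $\alpha$; yet the $(\alpha,\beta)$-accuracy definition requires \emph{every} round's answer to be within $\alpha$, so your sentence ``it remains to show there are at most $\itermax$ hard rounds'' does not close the argument. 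The paper's fix is to never output $\tilde f_t$: for each query $f_t$ it \emph{loops}, repeatedly testing and (if above threshold) updating $\bp$ on the \emph{same} $f_t$, until the test passes, and only then returns the hypothesis answer $f_t(\bp)$. The total number of updates across all queries is still bounded by $\itermax$, so the privacy accounting is unchanged, but now every output is certified by the test to be $O(\alpha)$-accurate.

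Beyond that gap, your route differs from the paper's in an interesting way. The paper additionally releases a Laplace-noised estimate $\iota \approx \|f_t(D)-\hat e_t\|_2$ and \emph{normalizes} the update direction, $\phi \gets (\tilde f_t - \hat e_t)/\iota$. This turns the first-order MWU gain into $\eta\cdot\Theta(\|v_t\|)$ rather than your $\eta\cdot\Theta(\|v_t\|^2)$, so with $\eta=\Theta(\alpha)$ the potential drops by $\Theta(\alpha^2)$ per update \emph{deterministically} (given high-probability noise bounds), yielding $\itermax = O(\ln k/\alpha^2)$ and avoiding any martingale argument. Your unnormalized version needs $\eta=\Theta(\alpha^2)$ and gets only $\Theta(\alpha^4)$ expected drop, hence $\itermax = \tilde O(\ln k/\alpha^4)$ as you note. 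Amusingly, this does not hurt the final bound: because you don't need Condition~\ref{cond:mwu-analysis}\ref{cond:noise-dir-small}, you can afford the much larger $\sigma = \Theta(1/\sqrt{\ln})$ (versus the paper's $\sigma'' = \Theta(\alpha/\sqrt{\ln})$), making each Gaussian release $\Theta(1/\alpha^2)$ times cheaper in zCDP, which exactly cancels your extra updates. So once you patch the hard-round output, your approach should also reach the stated $\alpha$; the paper's normalization buys a cleaner, non-martingale analysis and a tighter per-step guarantee, while your approach trades that for a simpler update rule with one fewer private release per hard round.
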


\noindent Note that \Cref{thm:vec-sum-apx-dp} follows from \Cref{thm:vec-sum-cdp} by setting $\rho = \frac{0.1\eps^2}{\log(1/\delta)}$ and applying \Cref{lem:dp-cdp}(ii).
The presentation below follows that of \citet[Section 4.2]{DworkR14} which is based on the original paper of~\cite{HardtR10} and the subsequent work of~\cite{GuptaRU12}. We use the presentation from the Dwork and Roth's book as it uses a more modern privacy analysis through the sparse vector technique, whereas~\cite{HardtR10,GuptaRU12} use a more direct privacy analysis.


\subsection{Linear Vector Query Multiplicative Update}

First, we present the analysis of the multiplicative weight update (MWU) step for linear vector query. This generalizes the standard analysis for scalar-valued query to a vector-valued one. Note that this subsection does not contain any privacy statements, as those will be handled later.

The algorithm takes as input a ``synthetic'' (belief) distribution of the sensitive features for each of the $n$ examples. We write $p^\ell_i$ to denote the distribution for $x^{\priv}_i$. Furthermore, we write $p^\ell_i(y)$ to denote the probability that $x^{\priv}_i = y$ under $p^\ell_i$.
For $\bp^\ell = (p^\ell_i)_{i \in [n]}$ and a linear vector query $f$, we write $f(\bp^\ell; D)$ as a shorthand for $\frac{1}{n} \sum_{i \in [n]} \sum_{y \in \cX^{\priv}} p^\ell_i(y) \cdot f(x^{\pub}_i, y)$. We may drop $D$ for brevity when it is clear from the context. The update is based on the difference between the estimated value  (which will be set as a noised version of the true answer $f(D)$) and $f(\bp^{\ell})$. Since the noise can have unbounded value, we ``truncate'' the dot product when using it to simplify the analysis (recall the notion $\trunc_c$ from \Cref{sec:prelim-concen}). The full update is stated in \Cref{alg:mwu}.

\begin{algorithm}
\caption{$\mwu_{\eta, c}(\bp^{\ell - 1}, f, v, \normbound; D)$ : {\sc Multiplicative Weight Update (MWU) Rule}}
\label{alg:mwu}
\textbf{Input: } Dataset $D = \{x_i\}_{i \in [n]}$, $\bp^{\ell - 1} = (p^{\ell - 1}_i)_{i \in [n]}$, a linear vector query $f$, estimated value $v$,  norm bound $\normbound > 0$ \\
\textbf{Parameters: } Learning rate $\eta > 0$ and truncation bound $c$. \\
$\ophi \gets v - f(\bp^{\ell - 1})$ \hfill $\triangleright$ Difference between evaluated and estimated value \\
$\phi \gets \ophi / \normbound$ \\
\For{$i \in [n]$}{
\For{$y \in \cX^{\priv}$}{
$p_i^\ell(y) \gets \frac{p_i^{\ell - 1}(y) \cdot \exp\prn{\eta \cdot \trunc_c\prn{\ang{\phi, f(x^\pub_i, y)}}}}{\sum_{y' \in \cX^\priv} p_i^{\ell - 1}(y') \cdot \exp\prn{\eta \cdot \trunc_c\prn{\ang{\phi, f(x^\pub_i, y')}}}}$\; \hfill $\triangleright$ Multiplicative Weight Update
}
}
\Return $\bp^\ell = (p^\ell_i)_{i \in [n]}$
\end{algorithm}

We now analyze this update rule. To do so, recall the notion $\clip$ from \Cref{sec:prelim-concen}; it will be convenient to also define the following additional notation:
\begin{align*}
f^{\clip, \phi, c}(x^\pub, y) &:= \clip_{\phi, c}(f(x^{\pub}, y)),
\quad\quad
f^{\clip, \phi, c}(D) := \frac{1}{n} \sum_{i \in [n]} f^{\clip, \phi, c}(x_i), \\
f^{\clip, \phi, c}(\bp^\ell; D) &:= \frac{1}{n} \sum_{i \in [n]} \sum_{y \in \cX^{\priv}} p^\ell_i(y) \cdot f^{\clip, \phi, c}(x^\pub_i, y).
\end{align*}
For readability, we sometimes drop $\phi$ and $c$ from the notations above when it is clear from context.

For convenience, we separate the requirement for the MWU analysis into the following condition. The first item states that the error is sufficiently large, the second that the noise added to $v$ is sufficiently small, the next two assert that clipping does not change the function value too much (for the true answer and that evaluated from the synthetic data $\bp^{\ell - 1}$, respectively), and the remaining two state that $\normbound$ is a good estimate for $\|f(D) - f(\bp^{\ell - 1})\|_2$.

\begin{condition} \label{cond:mwu-analysis}
Suppose that $\eta \leq \frac{1}{c}$ and the following hold:
\begin{enumerate}[(i), nosep]
\item $\|f(D) - f(\bp^{\ell - 1})\|_2 \geq (2c^2 + 7)\eta$, \label{cond:overall-err-large}
\item $\ang{f(D) - v, f(\bp^{\ell - 1}) - f(D)} \leq \eta \cdot \|f(D) - f(\bp^{\ell - 1})\|_2$, \label{cond:noise-dir-small}
\item $\left|\ang{v - f(\bp^{\ell - 1}), f^{\clip}(D) - f(D)}\right| \leq \eta^2$ \label{cond:clip-err-true},
\item $\left|\ang{v - f(\bp^{\ell - 1}), f^{\clip}(\bp^{\ell - 1}) - f(\bp^{\ell - 1})}\right| \leq \eta^2$, \label{cond:clip-err-sync}
\item $\normbound \geq \eta$, \label{cond:norm-bound-lb}
\item $\normbound \leq 2 \cdot \|f(D) - f(\bp^{\ell - 1})\|_2$. \label{cond:norm-bound-accurate}
\end{enumerate}
\end{condition}

Under the above conditions, we show that the update cannot be applied too many times:

\begin{theorem}[MWU Utility Analysis] \label{thm:mwu-util}
Suppose that $\mwu_{\eta, c}(\bp^{\ell - 1}, f, v, \normbound; D)$ is applied for $\ell = 1, \dots, L$ with the initial distribution being the uniform distribution (i.e., $p^0_i(y) = \frac{1}{k}$ for all $i \in [n]$ and $y \in \cX^{\priv}$) such that \Cref{cond:mwu-analysis} holds for all $\ell \in [L]$. Then, it must be that $L < \ln k / \eta^2$.
\end{theorem}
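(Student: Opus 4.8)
The plan is to use the standard multiplicative-weights potential argument, tracking a per-example KL-divergence potential, and to show that each update decreases the potential by at least $\eta^2$. Since the potential starts at $\ln k$ (for the uniform initialization) and is always nonnegative, at most $\ln k / \eta^2$ updates can occur. Concretely, for each $i \in [n]$ I would fix a "target" distribution $q_i$ concentrated on the true value $x_i^{\priv}$, and define the potential at step $\ell$ as $\Psi_\ell := \frac{1}{n}\sum_{i\in[n]} \mathrm{KL}(q_i \,\|\, p_i^\ell)$. Then $\Psi_0 = \ln k$ and $\Psi_\ell \geq 0$ always, so it suffices to prove $\Psi_{\ell-1} - \Psi_\ell \geq \eta^2$ whenever \Cref{cond:mwu-analysis} holds at step $\ell$.

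For the per-step decrease, I would expand $\mathrm{KL}(q_i\|p_i^{\ell-1}) - \mathrm{KL}(q_i\|p_i^\ell)$ using the explicit MWU formula in \Cref{alg:mwu}. The numerator/denominator of the update give
\[
\mathrm{KL}(q_i\|p_i^{\ell-1}) - \mathrm{KL}(q_i\|p_i^\ell) = \eta\,\trunc_c\!\big(\ang{\phi, f(x_i^{\pub}, x_i^{\priv})}\big) - \ln\!\Big(\textstyle\sum_{y'} p_i^{\ell-1}(y')\exp(\eta\,\trunc_c(\ang{\phi, f(x_i^{\pub}, y')}))\Big).
\]
Since $\eta \leq 1/c$, the exponent $\eta\,\trunc_c(\cdot)$ lies in $[-1,1]$, so I can apply the standard inequality $e^z \leq 1 + z + z^2$ on $[-1,1]$, then $\ln(1+u)\leq u$, to bound the log term by $\eta \sum_{y'} p_i^{\ell-1}(y')\,\trunc_c(\ang{\phi, f(x_i^{\pub},y')}) + \eta^2 \sum_{y'} p_i^{\ell-1}(y')\,\trunc_c(\ang{\phi,f})^2 \leq \eta\,\ang{\phi, f^{\clip}(\bp^{\ell-1})}_i + \eta^2 c^2 \cdot(\text{bound})$, where I use $|\trunc_c(\ang{\phi,f})| = |\ang{\phi, \clip_{\phi,c}(f)}|$ and $|\trunc_c|^2 \leq c\cdot|\trunc_c| \leq c^2$. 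Averaging over $i\in[n]$ gives
\[
\Psi_{\ell-1} - \Psi_\ell \;\geq\; \eta\,\ang{\phi,\, f^{\clip}(D) - f^{\clip}(\bp^{\ell-1})} \;-\; O(\eta^2 c^2).
\]
Now I substitute $\phi = \ophi/\normbound = (v - f(\bp^{\ell-1}))/\normbound$ and relate $f^{\clip}$ back to $f$ using \Cref{cond:mwu-analysis}\ref{cond:clip-err-true} and \ref{cond:clip-err-sync}: these say $\ang{v - f(\bp^{\ell-1}), f^{\clip}(D) - f(D)}$ and $\ang{v - f(\bp^{\ell-1}), f^{\clip}(\bp^{\ell-1}) - f(\bp^{\ell-1})}$ are each at most $\eta^2$ in absolute value, so replacing $f^{\clip}$ by $f$ in the inner product costs only $O(\eta^2/\normbound)$. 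This reduces the main term to $\frac{\eta}{\normbound}\ang{v - f(\bp^{\ell-1}),\, f(D) - f(\bp^{\ell-1})}$.

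The crux is then to lower bound $\ang{v - f(\bp^{\ell-1}), f(D) - f(\bp^{\ell-1})}$. Write $v - f(\bp^{\ell-1}) = (f(D) - f(\bp^{\ell-1})) + (v - f(D))$. The first piece contributes $\|f(D) - f(\bp^{\ell-1})\|_2^2$, and by \Cref{cond:mwu-analysis}\ref{cond:noise-dir-small} the cross term $\ang{v - f(D), f(\bp^{\ell-1}) - f(D)}$ is at least $-\eta\|f(D) - f(\bp^{\ell-1})\|_2$, so the inner product is at least $\|f(D)-f(\bp^{\ell-1})\|_2^2 - \eta\|f(D)-f(\bp^{\ell-1})\|_2 = \|f(D)-f(\bp^{\ell-1})\|_2(\|f(D)-f(\bp^{\ell-1})\|_2 - \eta)$. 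Dividing by $\normbound$ and using \Cref{cond:mwu-analysis}\ref{cond:norm-bound-accurate} ($\normbound \leq 2\|f(D)-f(\bp^{\ell-1})\|_2$) gives a lower bound of $\tfrac12(\|f(D)-f(\bp^{\ell-1})\|_2 - \eta)$. Combining everything, $\Psi_{\ell-1} - \Psi_\ell \geq \eta\big(\tfrac12(\|f(D)-f(\bp^{\ell-1})\|_2 - \eta)\big) - O(\eta^2 c^2 + \eta^2)$, and condition \ref{cond:overall-err-large} ($\|f(D)-f(\bp^{\ell-1})\|_2 \geq (2c^2+7)\eta$) is exactly calibrated so that this is at least $\eta^2$ after the constants are tracked carefully (the $2c^2+7$ absorbs the $\tfrac12\eta$ subtraction, the $O(\eta^2 c^2)$ slack from the $e^z\le 1+z+z^2$ step, and the $O(\eta^2/\normbound)$ clipping slack via condition \ref{cond:norm-bound-lb}, $\normbound\ge\eta$).

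I expect the main obstacle to be bookkeeping the constants in the $e^z \le 1 + z + z^2$ expansion and in the clipping-error substitutions so that they genuinely fit under the $(2c^2+7)\eta$ threshold — in particular being careful that $f^{\clip, \phi, c}$ uses the normalized $\phi$ (not $\ophi$), that $\trunc_c(\ang{\phi, f}) = \ang{\phi, \clip_{\phi,c}(f)}$ as needed to convert the MWU exponent into a clipped inner product, and that dividing the clipping-error conditions by $\normbound$ is harmless because $\normbound \geq \eta$. Once $\Psi_{\ell-1} - \Psi_\ell \geq \eta^2$ is established for every $\ell \in [L]$, summing yields $L\eta^2 \leq \Psi_0 - \Psi_L \leq \ln k$, hence $L < \ln k/\eta^2$ (strict since $\Psi_L > 0$ as $p_i^L$ has full support).
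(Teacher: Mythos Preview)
Your proposal is correct and follows essentially the same route as the paper. Your potential $\Psi_\ell = \frac{1}{n}\sum_i \mathrm{KL}(q_i\|p_i^\ell)$ with $q_i$ the point mass at $x_i^{\priv}$ is literally the paper's $\Psi^\ell = \frac{1}{n}\sum_i \ln(1/p_i^\ell(x_i^{\priv}))$, and the chain of steps you outline---rewriting $\trunc_c(\ang{\phi,f}) = \ang{\phi, f^{\clip}}$, applying $e^z \le 1+z+z^2$ on $[-1,1]$ to the normalizer, bounding $\trunc_c(\cdot)^2 \le c^2$, substituting $\phi = \ophi/\normbound$, absorbing the two clipping errors via conditions \ref{cond:clip-err-true}, \ref{cond:clip-err-sync}, \ref{cond:norm-bound-lb}, decomposing $v - f(\bp^{\ell-1})$ and using \ref{cond:noise-dir-small}, then \ref{cond:norm-bound-accurate} and \ref{cond:overall-err-large}---matches the paper's Lemma~\ref{lem:potential-change} step for step; the paper just tracks the constants explicitly (the slack is exactly $(c^2+2)\eta^2$ rather than $O(\eta^2 c^2)$) to verify that $(2c^2+7)\eta$ suffices.
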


Let the potential be $\Psi^\ell := \frac{1}{n} \sum_{i \in [n]} \ln\prn{\frac{1}{p_i^\ell\prn{x^{\priv}_i}}}$.
The main lemma underlying the proof of \Cref{thm:mwu-util} is that the potential always decreases under \Cref{cond:mwu-analysis}, which immediately implies the proof since the potential satisfies $\Psi^0 = \ln k$ and $\Psi^L > 0$.

\begin{lemma} \label{lem:potential-change}
Assuming that \Cref{cond:mwu-analysis} holds, then $\Psi^{\ell - 1} - \Psi^\ell \geq \eta^2$.
\end{lemma}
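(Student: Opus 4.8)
The plan is to track, for each example $i$, the change in the per-example potential $\psi^\ell_i := \ln(1/p^\ell_i(x^\priv_i))$ after one MWU step, and then average over $i$. From the update rule in \Cref{alg:mwu}, for a fixed $i$ we have
\[
\psi^{\ell-1}_i - \psi^\ell_i = \eta \cdot \trunc_c\!\left(\ang{\phi, f(x^\pub_i, x^\priv_i)}\right) - \ln\!\left(\sum_{y' \in \cX^\priv} p^{\ell-1}_i(y') \exp\!\left(\eta \cdot \trunc_c\!\left(\ang{\phi, f(x^\pub_i, y')}\right)\right)\right).
\]
Since $\eta \le 1/c$ and $|\trunc_c(\cdot)| \le c$, the exponent $\eta \cdot \trunc_c(\cdot)$ lies in $[-1,1]$, so I can apply the standard inequality $e^z \le 1 + z + z^2$ for $|z| \le 1$ to the log term, followed by $\ln(1+x) \le x$. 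This gives
\[
\psi^{\ell-1}_i - \psi^\ell_i \;\ge\; \eta \cdot \trunc_c\!\left(\ang{\phi, f(x^\pub_i, x^\priv_i)}\right) - \eta \sum_{y'} p^{\ell-1}_i(y') \trunc_c\!\left(\ang{\phi, f(x^\pub_i, y')}\right) - \eta^2 \sum_{y'} p^{\ell-1}_i(y') \trunc_c\!\left(\ang{\phi, f(x^\pub_i, y')}\right)^2.
\]
The last term is bounded by $\eta^2 c^2$ in the worst case, but in fact I want the sharper bound $\eta^2 \cdot (\text{something}) \le \eta^2$ after averaging — I'll come back to the exact accounting. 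The key point is that $\trunc_c(\ang{\phi,f(x^\pub_i,y)})$ agrees with $\ang{\phi, \clip_{\phi,c}(f(x^\pub_i,y))} = \ang{\phi, f^{\clip,\phi,c}(x^\pub_i,y)}$ by the very definition of $\clip$ and $\trunc$ (this is the footnote remark in \Cref{sec:prelim-concen}). So averaging over $i$,
\[
\Psi^{\ell-1} - \Psi^\ell \;\ge\; \eta \cdot \ang{\phi,\; f^{\clip}(D) - f^{\clip}(\bp^{\ell-1})} \;-\; \eta^2 \cdot \frac{1}{n}\sum_i \sum_{y'} p^{\ell-1}_i(y') \trunc_c\!\left(\ang{\phi, f(x^\pub_i, y')}\right)^2.
\]

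Next I substitute $\phi = \ophi / \normbound = (v - f(\bp^{\ell-1}))/\normbound$ and replace the clipped quantities by the true ones using \Cref{cond:mwu-analysis}\ref{cond:clip-err-true} and \ref{cond:clip-err-sync}: each swap $f^{\clip}(D) \to f(D)$ and $f^{\clip}(\bp^{\ell-1}) \to f(\bp^{\ell-1})$ inside $\ang{v - f(\bp^{\ell-1}), \cdot}$ costs at most $\eta^2$, so after dividing by $\normbound$ the inner product term becomes at least $\frac{\eta}{\normbound}\ang{v - f(\bp^{\ell-1}),\, f(D) - f(\bp^{\ell-1})} - \frac{2\eta^3}{\normbound}$, and using $\normbound \ge \eta$ from \ref{cond:norm-bound-lb} the error $\frac{2\eta^3}{\normbound} \le 2\eta^2$. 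Now I write $v - f(\bp^{\ell-1}) = (f(D) - f(\bp^{\ell-1})) + (v - f(D))$ and split: the first piece contributes $\frac{\eta}{\normbound}\|f(D) - f(\bp^{\ell-1})\|_2^2$, which by \ref{cond:norm-bound-accurate} (i.e.\ $\normbound \le 2\|f(D)-f(\bp^{\ell-1})\|_2$) is at least $\frac{\eta}{2}\|f(D) - f(\bp^{\ell-1})\|_2$; the cross term $\frac{\eta}{\normbound}\ang{v - f(D),\, f(D) - f(\bp^{\ell-1})}$ is controlled by \ref{cond:noise-dir-small} together with $\normbound \ge \eta$, losing at most $\eta \cdot \frac{1}{\normbound}\|f(D) - f(\bp^{\ell-1})\|_2$... here I need to be careful and instead use \ref{cond:noise-dir-small} after dividing by $\normbound$, bounding the cross term in absolute value by $\frac{\eta}{\normbound} \cdot \eta\|f(D)-f(\bp^{\ell-1})\|_2 \le \eta \|f(D)-f(\bp^{\ell-1})\|_2 \cdot \frac{\eta}{\normbound}$; combining with the main positive term and using \ref{cond:norm-bound-accurate} once more to lower-bound $\frac{\eta}{\normbound}\ge \frac{\eta}{2\|f(D)-f(\bp^{\ell-1})\|_2}$... the clean statement I'm aiming for is
\[
\Psi^{\ell-1} - \Psi^\ell \;\ge\; \eta \cdot \|f(D) - f(\bp^{\ell-1})\|_2 \cdot (\text{const}) \;-\; O(\eta^2) \;-\; \eta^2 c^2 \cdot (\text{const}).
\]

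Finally, I invoke \Cref{cond:mwu-analysis}\ref{cond:overall-err-large}, which says $\|f(D) - f(\bp^{\ell-1})\|_2 \ge (2c^2+7)\eta$; this is exactly the threshold engineered so that the leading term dominates all the $O(\eta^2)$ and $\eta^2 c^2$ error terms by a margin of at least $\eta^2$, yielding $\Psi^{\ell-1} - \Psi^\ell \ge \eta^2$. The constants $2c^2$ and $7$ in the condition are reverse-engineered to match exactly the coefficients produced above (the $c^2$ coming from the $e^z \le 1+z+z^2$ remainder term where $\trunc_c(\cdot)^2 \le c^2$, and the $7$ absorbing the several additive $\eta^2$ losses from the clipping swaps, the $\ln(1+x)\le x$ step, and the noise-direction cross term). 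The main obstacle I anticipate is the bookkeeping: getting every constant to line up so the final bound is cleanly $\ge \eta^2$ rather than $\ge (\text{some fraction})\eta^2$ requires threading the factors of $2$ from $\normbound \le 2\|f(D)-f(\bp^{\ell-1})\|_2$ carefully through the cross-term estimate, and making sure the quadratic remainder term is charged against the right slice of the $(2c^2+7)\eta$ budget. Everything else is a routine application of the listed conditions and elementary inequalities.
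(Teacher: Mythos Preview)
Your proposal is correct and follows essentially the same route as the paper: expand the per-example potential change, bound the normalizer via $e^z \le 1+z+z^2$ (yielding the $c^2\eta^2$ remainder), identify $\trunc_c(\langle\phi,f\rangle)=\langle\phi,f^{\clip}\rangle$, swap $f^{\clip}$ for $f$ using conditions \ref{cond:clip-err-true}, \ref{cond:clip-err-sync}, \ref{cond:norm-bound-lb}, then split $\ophi$ and finish with \ref{cond:noise-dir-small}, \ref{cond:norm-bound-accurate}, \ref{cond:overall-err-large}. The one bookkeeping point where you hesitated is handled in the paper by \emph{not} separating the main term and the cross term: keep $\frac{\eta}{\normbound}\bigl(\|f(D)-f(\bp^{\ell-1})\|_2^2 - \eta\,\|f(D)-f(\bp^{\ell-1})\|_2\bigr)$ together (positive by \ref{cond:overall-err-large}), apply \ref{cond:norm-bound-accurate} once to get $\frac{\eta}{2}\bigl(\|f(D)-f(\bp^{\ell-1})\|_2-\eta\bigr)\ge (c^2+3)\eta^2$, and subtract the accumulated $(c^2+2)\eta^2$.
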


To prove \Cref{lem:potential-change}, we use the following two simple facts.

\begin{fact} \label{fact:exp-order}
(i)
For all $x \in \R$, $1 + x \leq \exp(x)$.
(ii) For all $x \in (-\infty, 1]$, $\exp(x) \leq 1 + x + x^2$.
\end{fact}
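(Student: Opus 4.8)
The plan is to prove the two inequalities of \Cref{fact:exp-order} separately, both by elementary one-variable calculus, using part~(i) as an ingredient in part~(ii).

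For part~(i), I will set $g(x) := \exp(x) - 1 - x$ and show $x = 0$ is its global minimum. Indeed $g(0) = 0$ and $g'(x) = \exp(x) - 1$ is negative for $x < 0$ and positive for $x > 0$, so $g$ is decreasing on $(-\infty, 0]$ and increasing on $[0, \infty)$; hence $g(x) \ge g(0) = 0$ for all $x \in \R$. (Equivalently, $1 + x$ is the tangent line at the origin to the convex function $\exp$, hence lies below it.) For part~(ii), I will split the range $(-\infty, 1]$ at $0$. On $(-\infty, 0]$, write $x = -t$ with $t \ge 0$; applying part~(i) to $t$ gives $\exp(t) \ge 1 + t > 0$, so $\exp(x) = 1/\exp(t) \le 1/(1 + t)$, and it then suffices to check $1/(1+t) \le 1 - t + t^2 = 1 + x + x^2$, which after clearing the positive denominator $1 + t$ reads $1 \le 1 + t^3$ — true for $t \ge 0$. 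On $[0, 1]$, I will instead use the power series $\exp(x) = 1 + x + \frac{x^2}{2} + \sum_{n \ge 3} \frac{x^n}{n!}$: for $0 \le x \le 1$ we have $x^n \le x^3 \le x^2$ whenever $n \ge 3$, so the tail is bounded by $x^2 \sum_{n \ge 3} \frac{1}{n!} = (\exp(1) - \frac{5}{2}) x^2 < \frac{1}{2} x^2$ using $\exp(1) < 3$, and therefore $\exp(x) \le 1 + x + \frac{x^2}{2} + \frac{x^2}{2} = 1 + x + x^2$.

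There is no substantive obstacle, but the one point requiring care is that part~(ii) is genuinely not a consequence of convexity alone — the naive tangent-type bound fails on the positive side — so the argument must split the domain and invoke a crude numeric fact such as $\exp(1) < 3$. As an alternative to the power series, the case $x \in [0,1]$ can also be dispatched purely by calculus: with $h(x) := 1 + x + x^2 - \exp(x)$ one has $h(0) = 0$, $h'(0) = 0$, and $h''(x) = 2 - \exp(x)$, which changes sign from positive to negative at $\ln 2 \in (0,1)$; hence $h'$ first increases (staying $\ge h'(0) = 0$) and then decreases down to $h'(1) = 3 - \exp(1) > 0$, so $h' \ge 0$ throughout $[0,1]$, whence $h$ is nondecreasing and $h(x) \ge h(0) = 0$.
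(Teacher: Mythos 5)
Your proof is correct. The paper states \Cref{fact:exp-order} without proof (it is a standard calibration of the exponential), so there is no argument of the authors' to compare against; your treatment — the tangent-line/monotonicity argument for (i), and for (ii) the split at $x=0$ with the reciprocal bound $\exp(x)\le 1/(1+t)\le 1-t+t^2$ on the negative side and the series (or second-derivative) estimate using $\exp(1)<3$ on $[0,1]$ — is a complete and standard verification, and you are right to flag that (ii) is not a pure convexity consequence and genuinely needs the case split.
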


\begin{proofof}[\Cref{lem:potential-change}]
From the definition of $\clip$ and $\trunc$, we have $\trunc_c\prn{\ang{\phi, f(x^\pub_i, y)}} = \ang{\phi, f^{\clip}(x^\pub_i, y)}$. In other words, the update rule can be rewritten as
\begin{align*}
p_i^\ell(y) \gets \frac{p_i^{\ell - 1}(y) \cdot \exp\prn{\eta \cdot \ang{\phi, f^{\clip}(x^\pub_i, y)}}}{\sum_{y' \in \cX^\priv} p_i^{\ell - 1}(y') \cdot \exp\prn{\eta \cdot \ang{\phi, f^{\clip}(x^\pub_i, y')}}}.
\end{align*}

For brevity, let $\gamma_i^\ell$ be the normalization factor $\sum_{y' \in \cX^\priv} p_i^{\ell - 1}(y') \cdot \exp\prn{\eta \cdot \ang{\phi, f^{\clip}(x^\pub_i, y')}}$ for all $i \in [n]$. We have
\begin{align}
\Psi^{\ell - 1} - \Psi^{\ell} &= \frac{1}{n} \sum_{i \in [n]} \ln\prn{\frac{p_i^{\ell}(x^{\priv}_i)}{p_i^{\ell - 1}(x^{\priv}_i)}}
= \frac{1}{n} \sum_{i \in [n]}  \prn{\eta \cdot \ang{\phi, f^{\clip}(x_i)}  - \ln \gamma_i^\ell} \nonumber \\
&= \eta \cdot \ang{\phi, f^{\clip}(D)} - \frac{1}{n} \sum_{i \in [n]} \ln \gamma_i^\ell. \label{eq:expand-potential-diff}
\end{align}

By definition, $\ang{\phi, f^{\clip}(x_i, y')} \leq c$.
Thus, by our assumption that $\eta \leq 1/c$, we can bound the normalization factor $\gamma_i^\ell$ as follows:
\begin{align*}
\gamma_i^\ell &= \sum_{y' \in \cX^\priv} p_i^{\ell - 1}(y') \cdot \exp\prn{\eta \cdot \ang{\phi, f^{\clip}(x_i, y')}} \\
(\text{\Cref{fact:exp-order}(ii)}) &\leq \sum_{y' \in \cX^\priv} p_i^{\ell - 1}(y') \prn{1 + (\eta \cdot \ang{\phi, f^{\clip}(x_i, y')}) + \prn{\eta \cdot \ang{\phi, f^{\clip}(x_i, y')}}^2} \\
&\leq \sum_{y' \in \cX^\priv} p_i^{\ell - 1}(y') \prn{1 + (\eta \cdot \ang{\phi, f^{\clip}(x_i, y')}) + c^2 \eta^2} \\
&= 1 + \eta \cdot \ang{\phi, \sum_{y' \in \cX^\priv} p_i^{\ell - 1}(y') \cdot f^{\clip}(x_i, y')} + c^2 \eta^2.
\end{align*}

Applying \Cref{fact:exp-order}(i), we can then conclude that
\begin{align*}
\ln \gamma_i^\ell \leq \eta \cdot \ang{\phi, \sum_{y' \in \cX^\priv} p_i^{\ell - 1}(y') \cdot f^{\clip}(x_i, y')} + c^2 \eta^2.
\end{align*}
Taking the average over all $i \in [n]$, we thus have
\begin{align*}
\frac{1}{n} \sum_{i \in [n]} \ln \gamma_i^\ell \leq \eta \cdot \ang{\phi, f^{\clip}(\bp^{\ell - 1})} + c^2 \eta^2.
\end{align*}
Plugging this back into \Cref{eq:expand-potential-diff}, we get
\begin{align*}
&\Psi^{\ell - 1} - \Psi^{\ell} \\
&\geq \eta \cdot \ang{\phi, f^{\clip}(D) - f^{\clip}(\bp^{\ell - 1})}  - c^2 \eta^2 \\
&= \frac{\eta}{\normbound} \cdot \prn{\ang{\ophi, f^{\clip}(D) - f(D)} + \ang{\ophi, f(\bp^{\ell - 1}) - f^{\clip}(\bp^{\ell - 1})} + \ang{\ophi, f(D) - f(\bp^{\ell - 1})} }  - c^2 \eta^2 \\
&\overset{(\spadesuit)}{\geq} \frac{\eta}{\normbound} \cdot \ang{\ophi, f(D) - f(\bp^{\ell - 1})}  - (c^2 + 2) \eta^2 \\
&= \frac{\eta}{\normbound} \cdot \prn{\|f(D) - f(\bp^{\ell - 1})\|_2^2 - \ang{f(D) - v, f(D) - f(\bp^{\ell - 1})}}  - (c^2 + 2) \eta^2 \\
&\overset{(\blacksquare)}{\geq} \frac{\eta}{\normbound} \cdot \prn{\|f(D) - f(\bp^{\ell - 1})\|_2^2 - \eta \cdot \|f(D) - f(\bp^{\ell - 1})\|_2}  - (c^2 + 2) \eta^2 \\
&\overset{(\clubsuit)}{\geq} \frac{\eta}{2 \cdot \|f(D) - f(\bp^{\ell - 1})\|_2} \cdot \prn{(2c^2 + 7)\eta \cdot \|f(D) - f(\bp^{\ell - 1})\|_2 - \eta \cdot \|f(D) - f(\bp^{\ell - 1})\|_2} \\ &\qquad - (c^2 + 2) \eta^2 \\
&= \eta^2,
\end{align*}
where $(\spadesuit)$ follows from \Cref{cond:mwu-analysis}\ref{cond:clip-err-true},\ref{cond:clip-err-sync} and \ref{cond:norm-bound-lb}, $(\blacksquare)$ follows from \Cref{cond:mwu-analysis}\ref{cond:noise-dir-small}, and $(\clubsuit)$ follows from \Cref{cond:mwu-analysis}\ref{cond:overall-err-large} and \ref{cond:norm-bound-accurate}.
\end{proofof}

\subsection{The Algorithm}

We are now ready to describe our algorithm and prove \Cref{thm:vec-sum-cdp}.

\begin{algorithm}[h]
\small
\caption{{\sc Private Vector Multiplicative Weight (PVMW)}}
\label{alg:iterative-const}
\textbf{Input: } Dataset $D$, (online) stream of linear vector queries $f_1, \dots, f_T$ \\
\textbf{Parameters: } Privacy parameter $\rho > 0$, target accuracy $\tau$, maximum number of MWU applications $\itermax$, truncation bound $c$, budget split parameter $\zeta \in (0, 1)$. \\
$\sigma \gets \sqrt{\frac{2\itermax}{(1 - \zeta)\rho}}$ \hfill $\triangleright$ Gaussian Noise Multiplier \\
$\eps' \gets \sqrt{\frac{\zeta\rho}{\itermax}}$ \hfill $\triangleright$ Privacy parameter for AboveThreshold and Norm Esimation \\ 
\For{$i=1,\dots,n$}{
    $p_i^0 \gets$ uniform distribution over $\cX^\priv$\; \hfill $\triangleright$ Initial distribution
}
$\ell \gets 1$\; \hfill $\triangleright$ Counter for \# of updates performed\\
Sample $\chi_\ell \sim \Lap\prn{\frac{4}{\eps'n}}$\; \hfill $\triangleright$ Threshold noise for AboveThreshold \\ 
\For{$t=1,\dots, T$}{
    \While{$\ell < \itermax$}{
        Sample $\nu_{t, \ell} \sim \Lap\prn{\frac{8}{\eps'n}}$\; \hfill $\triangleright$ Query noise for AboveThreshold \\
        \If{$\|f_t(\bp^{\ell - 1}) - f_t(D)\|_2 + \nu_{t, \ell} \geq \tau + \chi_\ell$}{
            Sample $z^{\ell - 1} \sim \cN(0, (2\sigma/n)^2 I_d)$\; \\
            $v^{\ell - 1} \gets f_t(D) + z^{\ell - 1}$\; \\
            Sample $\xi^\ell \sim \Lap\left(\frac{2}{\eps' n}\right)$ \; \\
            $\normbound^{\ell - 1} \gets \|f_t(\bp^{\ell - 1}) - f_t(D)\|_2 + \xi^{\ell - 1}$\; \hfill $\triangleright$ Difference Norm Estimation \\
            $\bp^{\ell} \gets \mwu_{\eta, c}(\bp^{\ell - 1}, f_t, v^{\ell - 1}, \normbound^{\ell - 1}; D)$ \hfill $\triangleright$ Multiplicative Weight Update \\
            $\ell \gets \ell + 1$\; \\
            Sample $\chi_\ell \sim \Lap\prn{\frac{4}{\eps'n}}$\; \hfill $\triangleright$ Resample threshold noise
        }\Else{
            Break $\hfill$ $\triangleright$ Below threshold; estimated value is accurate enough
        }
    }
    \If{$\ell \geq \itermax$}{
        Halt and return ``FAIL''
    }\Else{
        \Return $f_t(\bp^{\ell - 1})$\; \hfill $\triangleright$ Output estimate of $f_t(D)$
    }
}
\end{algorithm}

\begin{proofof}[\Cref{thm:vec-sum-cdp}]
\Cref{alg:iterative-const} contains the description of our algorithm.

\paragraph{Privacy Analysis.}
For each fixed $\ell$, the mechanism is exactly a composition of the AboveThreshold mechanism\footnote{See Appendix~\ref{app:add-prelim} for more explanation on the AboveThreshold, Laplace, and Gaussian mechanisms.} \cite[Algorithm 1]{DworkR14}, the Laplace mechanism with noise multiplier\footnote{The sensitivity of $\|f_t(\bp^{\ell - 1}) - f_t(D)\|_2$ with respect to \hybriddp is $\frac{2}{n}$.} $\frac{1}{\eps'}$ and the Gaussian mechanism with noise multiplier\footnote{The $\ell_2$-sensitivity of $f(D)$ with respect to \hybriddp is $\frac{2}{n}$.} $\sigma$. The first is $\eps'$-\hybriddp \cite[Theorem 3.23]{DworkR14} and, by \Cref{lem:dp-cdp}(i) is thus $(0.5\eps'^2)$-\hybridcdp; similarly, the Laplace mechanism is $(0.5\eps'^2)$-\hybridcdp. Meanwhile, the Gaussian mechanism is $(2/\sigma^2)$-zCDP~\citep{BunS16}. Thus, by the composition theorem (\Cref{lem:composition}) for a fixed $\ell$, the mechanism is $(0.5\eps'^2) + (0.5\eps'^2) + (2/\sigma^2) = (\rho / \itermax)$-\hybridcdp. Thus, applying the composition theorem (\Cref{lem:composition}) across all $\itermax$ iterations, the entire algorithm is $\rho$-\hybridcdp.

\paragraph{Utility Analysis.} 
We set the parameters as follows:
(i) $\zeta = \frac{1}{2}$, (ii) $c = 3$,
(iii) $\eta$ to be the smallest positive real number such that $\eta > \frac{1000 \sqrt[4]{\frac{\ln k}{\rho}} \cdot \sqrt{\ln\prn{\frac{n T \ln k}{\rho \beta \eta}}}}{\sqrt{n}}$,
(iv) $\tau = 16\eta$,
(v) $\itermax = 1 + \left\lfloor \frac{\ln k}{\eta^2}\right\rfloor$.

Note that we may assume w.l.o.g. that $\eta \leq 0.1$ as otherwise the desired guarantee is trivial (i.e., the algorithm can simply outputs zero always).

By the tail bound of Laplace noise (\Cref{lem:tail}(i)), for a fixed $\ell \in [\itermax]$, the following holds with the probability at least $1 - \frac{0.1\beta}{\itermax}$:
\begin{align}
|\chi_\ell| \leq \frac{4}{\eps' n} \cdot \ln\prn{\frac{20 \itermax}{\beta}} \leq \eta,\label{eq:threshold-noise-bound}
\end{align}
where the second inequality follows from our setting of parameters.

Similarly, for fixed $ \ell \in [\itermax], t \in [T]$, the following holds with  probability at least $1 - \frac{0.1\beta}{\itermax T}$:
\begin{align}
|\nu_{t, \ell}| \leq \frac{8}{\eps' n} \cdot \ln\prn{\frac{20 \itermax T}{\beta}} \leq \eta, \label{eq:query-noise-bound}
\end{align}
and, for a fixed $\ell \in [\itermax]$, the following holds with probability at least $1 - \frac{0.1\beta}{\itermax T}$:
\begin{align}
|\xi^{\ell - 1}| \leq \frac{4}{\eps' n} \cdot \ln\prn{\frac{20 \itermax}{\beta}} \leq \eta. \label{eq:normbound-noise-bound}
\end{align}
Observe that, for a given $\bp^{\ell - 1}$, $\ang{z^{\ell - 1}, f_t(\bp^{\ell - 1}) - f_t(D)}$ is distributed as $\cN(0, (\sigma')^2)$ for $\sigma' = \frac{2\sigma}{n} \cdot \|f_t(\bp^{\ell - 1}) - f_t(D)\|_2$. Thus, by the Gaussian tail bound (\Cref{lem:tail}(ii)) and our setting of parameters, the following holds with probability $1 - \frac{0.2\beta}{\itermax T}$ for fixed $\ell \in [\itermax], t \in [T]$:
\begin{align}
\ang{z^{\ell - 1}, f_t(\bp^{\ell - 1}) - f_t(D)}  \leq \sigma' \cdot \sqrt{2 \ln\prn{\frac{10\itermax T}{\beta}}} \leq \eta \cdot \|f_t(\bp^{\ell - 1}) - f_t(D)\|_2. \label{eq:guassian-dir-bound}
\end{align}
Observe also that $v^{\ell - 1} - f_t(\bp^{\ell - 1}) = f_t(D) - f_t(\bp^{\ell - 1}) + z^{\ell - 1}$ is distributed as $\cN(f_t(D) - f_t(\bp^{\ell - 1}), (\sigma'')^2 I_d)$ where $\sigma'' = 2\sigma/n$. By our choice of parameters, we have $\sigma'' \leq 0.1/\log\left(\frac{10 \itermax}{\beta \eta}\right)$ As a result, we can apply \Cref{lem:clip-concen} with $Z = v^{\ell - 1} - f_t(\bp^{\ell - 1})$ and $\cP$ being the uniform distribution over $\{f_t(x_1), \dots, f_t(x_n)\}$. This allows us to conclude that the following holds with probability at least $1 - \frac{0.2 \beta}{\itermax}$ for every $\ell \in [\itermax]$:
\begin{align} \label{eq:clip-err-true-answer}
\left|\ang{v^{\ell - 1} - f_t(\bp^{\ell - 1}), f_t^{\clip}(D) - f_t(D)}\right| \leq 2 \exp\prn{-\frac{0.1}{(\sigma'')^2}} \leq \eta^2.
\end{align}
Similarly, we can apply \Cref{lem:clip-concen} with the same $Z$ but with $\cP$ being the distribution where each $(x^\pub_i, y')$ has probability mass $\frac{p^{\ell - 1}_i(y')}{n}$ to conclude that the following holds with probability at least $1 - \frac{0.2 \beta}{\itermax}$ for every $\ell \in [\itermax]$:
\begin{align} \label{eq:clip-err-syn-data}
\left|\ang{v^{\ell - 1} - f_t(\bp^{\ell - 1}), f_t^{\clip}(\bp^{\ell - 1}) - f_t(\bp^{\ell - 1})}\right| \leq 2 \exp\prn{-\frac{0.1}{(\sigma')^2}} \leq \eta^2.
\end{align}
By a union bound, all of \cref{eq:threshold-noise-bound,eq:query-noise-bound,eq:normbound-noise-bound,eq:guassian-dir-bound,eq:clip-err-true-answer,eq:clip-err-syn-data} hold for all $\ell \in [\itermax], t \in [T]$ with probability at least $1 - \beta$. We assume that these inequalities hold throughout the remainder of the analysis.

From \cref{eq:threshold-noise-bound,eq:query-noise-bound}, if we break the loop, we must have
\begin{align*}
\|f_t(\bp^{\ell - 1}) - f_t(D)\|_2 < \tau + \chi_\ell - \nu_{t, \ell} \leq \tau + 2\eta \leq 18\eta.
\end{align*}
This means that whenever the algorithm outputs an estimate, it has $\ell_2$-error of at most $18\eta \leq O\prn{\frac{\sqrt[4]{\frac{\ln k}{\rho}} \cdot \sqrt{\ln(T n/\beta) + \ln \ln k + \ln(1/\rho)}}{\sqrt{n}}}$ as desired. As a result, it suffices to show that the algorithm never outputs ``FAIL''.

On the other hand, if $\mwu$ is called, we must have
\begin{align} \label{eq:diff-norm-lb}
\|f_t(\bp^{\ell - 1}) - f_t(D)\|_2 \geq \tau + \chi_\ell - \nu_{t, \ell} \geq \tau - 2\eta \geq 14\eta,
\end{align}
implying \Cref{cond:mwu-analysis}\ref{cond:overall-err-large} (for $v = v^{\ell - 1}, f = f_t, c = 3$). Moreover, \cref{eq:guassian-dir-bound,eq:clip-err-true-answer,eq:clip-err-syn-data} are exactly equivalent to \Cref{cond:mwu-analysis}\ref{cond:noise-dir-small}\ref{cond:clip-err-true}\ref{cond:clip-err-sync} respectively. Furthermore, by \cref{eq:diff-norm-lb,eq:normbound-noise-bound}, we also have
\begin{align*}
\normbound^{\ell - 1} = \|f_t(\bp^{\ell - 1}) - f_t(D)\|_2 + \xi^{\ell - 1} \geq 14\eta - \eta = 13\eta,
\end{align*}
and
\begin{align*}
\normbound^{\ell - 1} = \|f_t(\bp^{\ell - 1}) - f_t(D)\|_2 + \xi^{\ell - 1} \leq \|f_t(\bp^{\ell - 1}) - f_t(D)\|_2 + \eta < 2 \cdot \|f_t(\bp^{\ell - 1}) - f_t(D)\|_2,
\end{align*}
which mean that \Cref{cond:mwu-analysis}\ref{cond:norm-bound-lb}\ref{cond:norm-bound-accurate} hold, respectively. In other words, \Cref{cond:mwu-analysis} holds.

From this, we may apply \Cref{thm:mwu-util} to conclude that the number of applications of $\mwu$ is less than $\frac{\ln k}{\eta^2} \leq \itermax$. Thus, the algorithm never outputs ``FAIL''. This completes the proof of the accuracy guarantee.
\end{proofof}

\section{From Online Linear Vector Queries to Convex Optimization}

In this section, we prove our main results for convex optimization with \hybriddp, as formalized below. We note that here we formulate it as the problem of solving $m$ linear queries problems w.r.t. losses $\ell_1, \dots, \ell_m$. The expected excess risk guarantee is for all of these $m$ problems\footnote{We say that the expected excess risk is $e$ if, for all $i \in [m]$, we have $\E[\cL_i(w_i; D) - \min_{w' \in \cW_i} \cL_i(w'; D)] \leq e$ for all $i \in [m]$ where $\cL_i$ denotes the empirical risk and $w_i$ denotes the output of the algorithm for the $i$th instance.}.

\begin{theorem} \label{thm:convex-main}
Suppose that the loss functions $\ell_1, \dots, \ell_m$ are $G$-Lipschitz and $\cW_1, \dots, \cW_m \subseteq \cB_2(R)$. For every $\delta \in (0, 1/2)$ and $\eps \in (0, \ln(1/\delta))$,
there is an $(\eps, \delta)$-\hybriddp algorithm for ERM problems w.r.t. $\ell_1, \dots, \ell_m$ with expected excess risk \\ $O\prn{RG \cdot \frac{\sqrt[4]{\ln k \cdot \ln(1/\delta)} \cdot \sqrt{\ln(mn) + \ln \ln k + \ln\prn{\frac{\sqrt{\ln(1/\delta)}}{\eps}}}}{\sqrt{\eps n}}}$.
\end{theorem}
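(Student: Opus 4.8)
The plan is to reduce the $m$ ERM problems to a single instance of the OLVQ problem and invoke \Cref{thm:vec-sum-apx-dp}. For each $j \in [m]$ I would run a standard first-order method — projected subgradient descent over $\cW_j$ — for $T$ steps, producing iterates $w^{(j)}_1, \dots, w^{(j)}_T$; at step $t$ of problem $j$ the algorithm issues the linear vector query $f(x) := \tfrac{1}{G}\nabla \ell_j(w^{(j)}_t; x)$, which lands in $\cB_2^d(1)$ precisely because $\ell_j$ is $G$-Lipschitz, and whose value $f(D) = \tfrac1G \nabla \cL_j(w^{(j)}_t; D)$ is the (normalized) true gradient. Feeding all $mT$ such queries, in order, to one run of \Cref{alg:iterative-const} and rescaling each returned estimate by $G$ gives, with probability $1-\beta$, gradient estimates whose $\ell_2$-error is uniformly at most $G\alpha$, where $\alpha$ is the accuracy from \Cref{thm:vec-sum-apx-dp} with $T$ replaced by $mT$. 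Privacy is immediate by post-processing: the full transcript is $(\eps,\delta)$-\hybriddp, and each output $w^{(j)}$ (say, the average iterate of the $j$th run) is a deterministic function of it. (If $\eps > \sqrt{\ln(1/\delta)}$ we simply cap $\eps$ at $\sqrt{\ln(1/\delta)}$ before invoking \Cref{thm:vec-sum-apx-dp}, which only strengthens privacy and affects the bound by a constant factor.)

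For the optimization step I would use the standard guarantee for a (projected) subgradient method fed a $\Delta$-accurate gradient oracle: the averaged iterate $\bar w^{(j)}$ obeys $\cL_j(\bar w^{(j)}; D) - \min_{w' \in \cW_j} \cL_j(w'; D) = O\!\big(\tfrac{RG}{\sqrt T}\big) + O(R\Delta)$, applied with $\Delta = G\alpha$ (this is the inexact-oracle line of work \citep{dAspremont08,DevolderGN14,devolder2013first}; in the plain Lipschitz-convex case it also follows from an elementary modification of the mirror-descent regret bound). It remains to pick $T$. Because $\alpha$ depends on $T$ only through a $\sqrt{\ln(mTn/\beta)}$ factor, I take $T$ to be the smallest integer with $1/\sqrt T \le \alpha$ — a valid fixed point since the right-hand side shrinks far more slowly than $1/\sqrt T$ — giving $T = \tO(\eps n)$, hence $\ln(mTn/\beta) = O(\ln(mn) + \ln\ln(1/\delta) + \ln(1/\beta))$. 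With this choice both error terms are $O(RG\alpha)$, and substituting the expression for $\alpha$ from \Cref{thm:vec-sum-apx-dp} yields, on the success event and simultaneously for every $j \in [m]$, excess risk $O\!\big(RG \cdot \tfrac{\sqrt[4]{\ln k \cdot \ln(1/\delta)}\,\sqrt{\ln(mn) + \ln\ln k + \ln(\sqrt{\ln(1/\delta)}/\eps)}}{\sqrt{\eps n}}\big)$.

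To pass from this high-probability statement to a bound on the \emph{expected} excess risk, I would use the trivial worst-case bound: since $\ell_j$ is $G$-Lipschitz and $\cW_j \subseteq \cB_2(R)$, every excess risk is at most $2RG$. Choosing $\beta$ a small polynomial in $1/(mn)$ (so $\ln(1/\beta) = O(\ln(mn))$, absorbed above) and combining $(1-\beta)\cdot O(RG\alpha) + \beta\cdot 2RG$ gives the claimed bound, since $RG\alpha \gtrsim \beta RG$ for this $\beta$. As in the proof of \Cref{thm:vec-sum-cdp}, I would also dispose of the degenerate regime — where the target bound already exceeds $2RG$ — by outputting any fixed point of $\cW_j$; this justifies assuming the relevant accuracy (equivalently the learning rate $\eta$ of \Cref{alg:iterative-const}) is small.

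I expect no genuinely new obstacle here: the analytic content lives entirely in \Cref{thm:vec-sum-cdp}/\Cref{thm:vec-sum-apx-dp}. The one point that needs care is the simultaneous bookkeeping of the logarithmic factors together with the mild circularity between $T$ and $\alpha$, and observing that the OLVQ guarantee must be invoked against an \emph{adaptive} query stream — the query at step $t$ of problem $j$ depends on $w^{(j)}_t$, hence on all previously returned estimates — which is exactly the online-adversary model in which \Cref{alg:iterative-const} is analyzed, so this causes no difficulty.
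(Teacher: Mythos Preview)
Your proposal is correct and follows essentially the same route as the paper: reduce the $m$ ERM instances to a single adaptive OLVQ stream of normalized-gradient queries, feed a first-order method the returned estimates (which are $(2RG\alpha)$-approximate gradients in the sense of \cite{dAspremont08}), and combine the high-probability guarantee with the trivial $O(RG)$ bound on the failure event. The only cosmetic difference is the choice of the per-problem iteration count: the paper simply sets $q = n^2$ so that $RG/\sqrt{q} = RG/n$ is already dominated by the $O(RG\alpha)$ term (sidestepping your fixed-point argument for $T$), and takes $\beta = 1/n$; both choices yield the same final bound.
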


\begin{theorem} \label{thm:strongly-convex-smooth-main}
Suppose that the loss functions $\ell_1, \dots, \ell_m$ are $G$-Lipschitz, $\mu$-strongly convex, and $\lambda$-smooth. For every $\delta \in (0, 1/2)$ and $\eps \in (0, \ln(1/\delta))$,
there is an $(\eps, \delta)$-\hybriddp algorithm for ERM problems w.r.t. $\ell_1, \dots, \ell_m$ with expected excess risk \\
$O\prn{\frac{G^2}{\mu} \cdot \frac{\sqrt{\ln k \cdot \ln(1/\delta)} \cdot \prn{\ln(m n \lambda / \mu) + \ln \ln (G/\mu) + \ln \ln k + \ln\prn{\frac{\sqrt{\ln(1/\delta)}}{\eps}}}}{\eps n}}$.
\end{theorem}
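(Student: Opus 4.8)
The plan is to reduce \Cref{thm:strongly-convex-smooth-main} to \Cref{thm:vec-sum-cdp} by using \Cref{alg:iterative-const} as an inexact gradient oracle inside a noise-tolerant first-order method. For the $i$th instance and any query point $w \in \cW_i$, the gradient $\nabla\cL_i(w;D) = \frac1n\sum_{j\in[n]}\nabla\ell_i(w;x_j)$ is, up to the scaling $G$, a linear vector query: $f_{i,w}(x) := \nabla\ell_i(w;x)/G$ lies in $\cB_2^d(1)$ by $G$-Lipschitzness, and $f_{i,w}(D) = \nabla\cL_i(w;D)/G$. So I would run \Cref{alg:iterative-const} once on the stream of queries generated by running, for $i=1,\dots,m$ in turn, a first-order method on $\cL_i(\cdot;D)$ whose $t$th step issues $f_{i,w_t}$ and uses $\hat g_t := G\cdot e_t$ as an approximate gradient; on the $(1-\beta)$-probability accuracy event of \Cref{thm:vec-sum-cdp}, every estimate satisfies $\|\hat g_t - \nabla\cL_i(w_t;D)\|_2 \le G\alpha =: \Delta$. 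Adaptivity of the queries is not an issue since \Cref{alg:iterative-const} handles adaptively chosen queries.

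The subroutine I would use is projected gradient descent with step $1/\lambda$, i.e.\ $w_{t+1} = \Pi_{\cW_i}\prn{w_t - \tfrac1\lambda \hat g_t}$. The crux is a standard inexact-oracle bound, which I would prove via the descent lemma plus the Polyak--{\L}ojasiewicz inequality: smoothness gives $\cL_i(w_{t+1};D) \le \cL_i(w_t;D) - \frac1{2\lambda}\|\nabla\cL_i(w_t;D)\|_2^2 + \frac{\Delta^2}{2\lambda}$ (with the gradient-mapping analog in the constrained case), while $\mu$-strong convexity gives $\|\nabla\cL_i(w_t;D)\|_2^2 \ge 2\mu\prn{\cL_i(w_t;D) - \cL_i^*}$, where $\cL_i^* := \min_{w'\in\cW_i}\cL_i(w';D)$; hence $\cL_i(w_{t+1};D) - \cL_i^* \le (1-\mu/\lambda)\prn{\cL_i(w_t;D) - \cL_i^*} + \frac{\Delta^2}{2\lambda}$. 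Unrolling over $T_{\mathrm{per}}$ steps and using $\cL_i(w_0;D) - \cL_i^* \le G^2/(2\mu)$ (a consequence of strong convexity and $G$-Lipschitzness, valid for any starting $w_0\in\cW_i$), the choice $T_{\mathrm{per}} = \tO(\lambda/\mu)$ makes the geometric term negligible and leaves excess risk $O(\Delta^2/\mu) = O(G^2\alpha^2/\mu)$ for instance $i$. Alternatively, one can directly invoke the inexact-oracle convergence results of \citet{dAspremont08,DevolderGN14,devolder2013first}.

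Finally I would set parameters and collect the logarithmic factors. Running \Cref{alg:iterative-const} with $\rho = \Theta\prn{\eps^2/\ln(1/\delta)}$ makes the interaction $(\eps,\delta)$-\hybriddp via \Cref{lem:dp-cdp}(ii). The total number of queries is $T = m\,T_{\mathrm{per}}$, which is $\mathrm{poly}(m,n,\lambda/\mu,G/\mu,\dots)$, so $\ln(Tn/\beta) = O\prn{\ln(mn\lambda/\mu) + \ln\ln(G/\mu) + \ln(1/\beta)}$; taking $\beta$ inverse-polynomial in $n$ — on the failure event the excess risk is at most the trivial $O(G^2/\mu)$, contributing $\beta\cdot O(G^2/\mu)$ to the expectation — keeps $\ln(1/\beta) = O(\ln n)$. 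Plugging the value of $\alpha$ from \Cref{thm:vec-sum-cdp} with this $\rho$ into $O(G^2\alpha^2/\mu)$ and simplifying yields exactly the claimed bound. The apparent circularity ($\alpha$ depends on $\ln T$, $T$ on $T_{\mathrm{per}}$, $T_{\mathrm{per}}$ on $\alpha$) is harmless since $\alpha = \Omega(1/\sqrt n)$, so $T$ can be bounded a priori.

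I expect the inexact-gradient optimization step to be the main obstacle: one must drive the error floor down to $\Theta(\Delta^2/\mu)$ rather than the weaker $\Theta(\Delta\cdot\mathrm{diam}(\cW_i))$ of a naive subgradient argument, or the $\Theta(\lambda\Delta^2/\mu^2)$ one gets from bounding the function gap by $\frac\lambda2\|w-w^*\|_2^2$ after controlling only the iterates — which is exactly why a descent-lemma-plus-PL (equivalently, $(\delta,L,\mu)$-oracle) analysis is needed rather than an iterate-contraction one. A secondary care point is the constrained case, where $\nabla\cL_i$ is replaced by the gradient mapping throughout. (\Cref{thm:convex-main} follows by the same template with projected subgradient/mirror descent for $G$-Lipschitz convex losses over $\cB_2(R)$, where the dependence on $\Delta$ is linear, yielding excess risk $O(RG\alpha)$.)
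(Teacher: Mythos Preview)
Your proposal is correct and matches the paper's approach essentially step for step: both scale the gradient to a unit-ball linear vector query, feed it to the PVMW algorithm, use $T_{\mathrm{per}}=\tO(\lambda/\mu)$ steps per instance so the geometric term is killed, set $\beta=1/n$ so the failure branch contributes $O(G^2/(\mu n))$, and collect the $O(G^2\alpha^2/\mu)$ floor. The only cosmetic difference is that the paper invokes the Devolder--Glineur--Nesterov inexact-oracle machinery via \Cref{lem:apx-to-inexact} and \Cref{thm:opt-inexact-oracle} (taken from \cite{FeldmanGV17}) to get $\upsilon=O((G\alpha)^2/\mu)$, whereas you derive the same floor by hand through the descent lemma plus the PL inequality---which is exactly the content of those cited results in the strongly-convex smooth case, and which you yourself note could be replaced by citing \citet{DevolderGN14,devolder2013first}.
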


As stated in the Introduction, these results are shown via simple applications of known optimization algorithms with approximate gradients. The two cases use slightly different notion of approximate gradients, which we will explain below.

\subsection{Convex Case via Approximate Gradient Oracle}

For the convex case, we use the following definition of approximate gradient oracle.

\begin{definition}[Approximate Gradient Oracle, \cite{dAspremont08}]
For any convex function $F: \cW \to \R$, an \emph{$\xi$-approximate gradient oracle} of $F$ provides $\tg(w)$ for any queried $w \in \cW$ such that the following holds: $|\ang{\tg(w) - \nabla F(w), y - u}| \leq \xi$ for all $y, u \in \cW$.
\end{definition}

Under the above condition, standard gradient descent achieves a similar excess risk to the exact gradient case except that there is an extra $\xi$ term:

\begin{theorem}[{\citet[Theorem 4.5]{FeldmanGV17}}] \label{thm:convex-apx-gradient}
For any $G$-Lipschitz convex function $F: \cW \to \R$ with $\cW \subseteq \cB_2(D)$ and any $q \in \N$, there exists an algorithm that makes $q$ queries to an $\xi$-approximate gradient oracle and achieves an excess risk of $O\prn{\frac{DG}{\sqrt{q}} + \xi}$.
\end{theorem}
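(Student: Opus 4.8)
This is the textbook projected-subgradient-descent argument with the exact gradient replaced by the output of the $\xi$-approximate oracle (the reasoning for inexact gradients goes back to \citet{dAspremont08,devolder2013first}). Fix any minimizer $w^\star \in \argmin_{w \in \cW} F(w)$. The algorithm is ordinary projected subgradient descent: pick $w_1 \in \cW$ arbitrarily and, for $t = 1, \dots, q$, query the oracle at $w_t$ to obtain $\tilde g_t := \tilde g(w_t)$ and set $w_{t+1} := \Pi_{\cW}\!\left(w_t - \eta\, \tilde g_t\right)$, where $\Pi_{\cW}$ denotes Euclidean projection onto $\cW$; then output $\bar w := \frac1q \sum_{t=1}^q w_t$. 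This makes exactly $q$ oracle calls, and the step size will be taken to be $\eta := \Theta\!\left(\frac{D}{G\sqrt q}\right)$.

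The heart of the proof is the standard one-step inequality. Since $w^\star \in \cW$ and $\Pi_{\cW}$ is $1$-Lipschitz,
\[
\|w_{t+1} - w^\star\|_2^2 \;\le\; \|w_t - \eta\, \tilde g_t - w^\star\|_2^2 \;=\; \|w_t - w^\star\|_2^2 - 2\eta\, \ang{\tilde g_t, w_t - w^\star} + \eta^2 \|\tilde g_t\|_2^2 .
\]
Decomposing $\ang{\tilde g_t, w_t - w^\star} = \ang{\nabla F(w_t), w_t - w^\star} + \ang{\tilde g_t - \nabla F(w_t), w_t - w^\star}$, convexity of $F$ gives $\ang{\nabla F(w_t), w_t - w^\star} \ge F(w_t) - F(w^\star)$, while the $\xi$-approximate-oracle property (applied with $y = w_t$ and $u = w^\star$, both in $\cW$) gives $\ang{\tilde g_t - \nabla F(w_t), w_t - w^\star} \ge -\xi$. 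Using in addition $\|\tilde g_t\|_2 = O(G)$ (discussed below) and rearranging,
\[
F(w_t) - F(w^\star) \;\le\; \frac{\|w_t - w^\star\|_2^2 - \|w_{t+1} - w^\star\|_2^2}{2\eta} + O(\eta\, G^2) + \xi .
\]
Averaging over $t \in [q]$, telescoping the first term, and bounding $\|w_1 - w^\star\|_2 \le 2D$ (the diameter of $\cB_2(D)$) gives $\frac1q \sum_{t=1}^q \bigl(F(w_t) - F(w^\star)\bigr) \le \frac{2D^2}{\eta q} + O(\eta G^2) + \xi$; with $\eta = \Theta(D/(G\sqrt q))$ the first two terms are each $O(DG/\sqrt q)$. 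Finally, Jensen's inequality and convexity of $F$ give $F(\bar w) - F(w^\star) \le \frac1q \sum_{t=1}^q \bigl(F(w_t) - F(w^\star)\bigr) = O\!\left(\frac{DG}{\sqrt q}\right) + \xi$, which is exactly the claimed bound.

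The one point needing a word of care is the bound $\|\tilde g_t\|_2 = O(G)$. By itself the oracle definition constrains only inner products of $\tilde g_t - \nabla F(w_t)$ against difference vectors $y - u$ with $y, u \in \cW$, so it does not bound $\|\tilde g_t\|_2$ when $\cW$ fails to be full-dimensional (a component of $\tilde g_t$ orthogonal to $\mathrm{aff}(\cW)$ is unconstrained). This is harmless: such a component does not affect the iterates, since $\Pi_{\cW}(w_t - \eta\, \tilde g_t)$ is unchanged when $\tilde g_t$ is replaced by its projection onto the direction space of $\cW$, so one may assume $\tilde g_t$ already lies in that space; in every situation in which we invoke the theorem --- in particular in our application, where $\tilde g(w)$ is $G$ times an average of vectors in $\cB_2(1)$ --- the resulting vector has norm $O(G)$. (Equivalently, one takes the oracle to return vectors of norm $O(G)$, as in \citet{FeldmanGV17}.) Granting this, every displayed inequality above is routine, so there is no substantive obstacle beyond the bookkeeping.
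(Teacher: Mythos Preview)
The paper does not prove this theorem; it is quoted verbatim from \citet[Theorem~4.5]{FeldmanGV17} and used as a black box, so there is no ``paper's own proof'' to compare against. Your argument is the standard projected-subgradient-descent analysis with an inexact first-order oracle, and it is correct; the handling of the $\|\tilde g_t\|_2 = O(G)$ issue is honest (the oracle definition alone does not force this, but in every use here the estimate is an average of vectors in $\cB_2(G)$, and any component orthogonal to $\mathrm{aff}(\cW)$ is irrelevant to the projected iterates).
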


We are now ready to prove \Cref{thm:convex-main}.

\begin{proofof}[\Cref{thm:convex-main}]
Let $q = n^2$ and $T = m \cdot q$. We simply run the algorithm from \Cref{thm:convex-apx-gradient} for each $i \in [m]$ and, for the $t$th query to approximate gradient oracle, we invoke algorithm from \Cref{thm:vec-sum-apx-dp} with $f_{q(i - 1) + t}(x) = \frac{1}{G} \nabla \ell_i(w; x)$ and scale the answer back by a factor of $G$. From \Cref{thm:vec-sum-apx-dp}, with probability $1 - \beta$, this is an $(2RG \cdot \alpha)$-approximate  gradient oracle for $\alpha = O\Bigg(\frac{\sqrt[4]{\ln k \cdot \ln(1/\delta)} \cdot \sqrt{\ln(T n/\beta) + \ln \ln k + \ln\prn{\frac{\sqrt{\ln(1/\delta)}}{\eps}}}}{\sqrt{\eps n}}\Bigg)$. When this occurs, \Cref{thm:convex-apx-gradient} implies that the excess risk is at most $\frac{RG}{\sqrt{q}} + 2RG \cdot \alpha \leq O(RG \cdot \alpha)$.
With the remaining probability $\beta$, the excess risk is still at most $RG$. Substituting $\beta = 1/n$, we can conclude that the expected excess risk of this algorithm is at most $O\prn{\frac{1}{n} \cdot RG + RG \cdot \alpha} \leq O(RG \cdot \alpha)$. Finally, since the algorithm is simply a post-processing of the result from applying \Cref{thm:vec-sum-apx-dp}, it is $(\eps, \delta)$-\hybriddp. 
\end{proofof}

\subsection{Strongly Convex and Smooth Case via Inexact Oracle}

For the strongly convex and smooth case, we use the following notion called inexact oracle.

\begin{definition}[Inexact Oracle,~\cite{DevolderGN14,devolder2013first}]
For any convex function $F: \cW \to \R$, a \emph{first-order $(\upsilon, \tlambda, \tmu)$-inexact oracle} of $F$ provides $(\tF(w), \tg(w))$ for any queried $w \in \cW$ such that the following holds for all $w, w' \in \cW$:
\[
\frac{\tmu}{2} \cdot \|w'-w\|_2^2 \leq F(w') - (\tF(w) -  \left<\tg(w), w' - w\right>) \leq \frac{\tlambda}{2} \cdot \|w' - w\|_2^2 + \upsilon.
\]
\end{definition}

Note that if the gradient and function values are exact (i.e., $\tF = F, \tg = \nabla$), then the above condition holds for $\upsilon = 0$ when the function $F$ is $\tmu$-strongly convex and $\tlambda$-smooth.

We use the following relation between the $\ell_2$-error of the gradient estimate and inexact oracle.

\begin{lemma}[{\citet[Section 2.3]{devolder2013first}}] \label{lem:apx-to-inexact}
For any $\mu$-strongly convex and $\lambda$-smooth $F: \cW \to \R$, if $\tg: \cW \to \R^d$ is an oracle such that $\|\tg(w) - \nabla g(w)\|_2 \leq \xi$ for all $w \in \cW$, then there exists $\tF : \cW \to \R$ such that $(\tF, \tg)$ is an $\left(\xi^2\left(\frac{1}{\mu} + \frac{1}{2\lambda}\right), 2\lambda, \mu/2\right)$-inexact oracle.
\end{lemma}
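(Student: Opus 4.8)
The plan is to exhibit $\tF$ explicitly as a constant downward shift of $F$ --- namely $\tF(w) := F(w) - \xi^2/\mu$ --- and then verify the two inequalities in the definition of an inexact oracle directly, using $\mu$-strong convexity for one side, $\lambda$-smoothness for the other, and Young's inequality to control the gradient-error term. Writing $e(w) := \tg(w) - \nabla F(w)$ (so $\|e(w)\|_2 \le \xi$ by hypothesis), the quantity appearing in the inexact-oracle condition, $F(w') - \prn{\tF(w) + \ang{\tg(w), w'-w}}$, decomposes as
\[
\prn{F(w') - F(w) - \ang{\nabla F(w), w'-w}} \;+\; \frac{\xi^2}{\mu} \;-\; \ang{e(w), w'-w},
\]
and, by Cauchy--Schwarz, $\left|\ang{e(w), w'-w}\right| \le \xi \|w'-w\|_2$; this last term is the only one not already sign-definite, so the whole proof amounts to absorbing $\xi\|w'-w\|_2$ into the quadratic terms in two different ways.

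For the \emph{upper} bound I would bound the bracketed term by $\lambda$-smoothness, $F(w') - F(w) - \ang{\nabla F(w), w'-w} \le \frac{\lambda}{2}\|w'-w\|_2^2$, and use the Young split $\xi\|w'-w\|_2 \le \frac{\lambda}{2}\|w'-w\|_2^2 + \frac{\xi^2}{2\lambda}$; adding these gives at most $\lambda\|w'-w\|_2^2 + \frac{\xi^2}{\mu} + \frac{\xi^2}{2\lambda}$, which is precisely $\frac{\tlambda}{2}\|w'-w\|_2^2 + \upsilon$ with $\tlambda = 2\lambda$ and $\upsilon = \xi^2\prn{\frac{1}{\mu} + \frac{1}{2\lambda}}$.

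For the \emph{lower} bound I would instead bound the bracketed term by $\mu$-strong convexity, $F(w') - F(w) - \ang{\nabla F(w), w'-w} \ge \frac{\mu}{2}\|w'-w\|_2^2$, and use the \emph{different} Young split $\xi\|w'-w\|_2 \le \frac{\mu}{4}\|w'-w\|_2^2 + \frac{\xi^2}{\mu}$; adding gives at least $\frac{\mu}{2}\|w'-w\|_2^2 - \frac{\mu}{4}\|w'-w\|_2^2 + \frac{\xi^2}{\mu} - \frac{\xi^2}{\mu} = \frac{\mu}{4}\|w'-w\|_2^2 = \frac{\tmu}{2}\|w'-w\|_2^2$ with $\tmu = \mu/2$. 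This is where the choice of the shift $\xi^2/\mu$ is forced: it is exactly the penalty produced by Young's inequality on the lower side, so it cancels there, while on the upper side it is merely absorbed into $\upsilon$.

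There is no genuine obstacle here --- the argument is one decomposition plus two one-line invocations of Young's inequality with deliberately asymmetric constants (a $\mu/4$ split below, a $\lambda/2$ split above). The only points needing care are the bookkeeping so that the parameters come out as exactly $(\upsilon, \tlambda, \tmu) = \prn{\xi^2\prn{\frac{1}{\mu} + \frac{1}{2\lambda}},\, 2\lambda,\, \frac{\mu}{2}}$, and being consistent about which linear-model form of the inexact-oracle condition is in play (the argument above uses the model $w' \mapsto \tF(w) + \ang{\tg(w), w'-w}$).
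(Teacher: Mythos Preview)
Your proof is correct. The paper itself does not supply a proof of this lemma; it is quoted from \citet[Section 2.3]{devolder2013first} and used as a black box. Your argument is precisely the standard one from that reference: shift $F$ down by $\xi^2/\mu$, split off the gradient error $e(w)=\tg(w)-\nabla F(w)$, and absorb $\xi\|w'-w\|_2$ via Young's inequality with two different constants (one tuned to $\lambda$ for the upper bound, one tuned to $\mu$ for the lower bound). Your bookkeeping on the resulting parameters $\prn{\xi^2(\tfrac{1}{\mu}+\tfrac{1}{2\lambda}),\,2\lambda,\,\tfrac{\mu}{2}}$ is correct.

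One small remark: as you noticed, the sign in the paper's displayed definition of the inexact oracle, $F(w') - (\tF(w) - \ang{\tg(w), w'-w})$, is a typo; the intended linear model is $\tF(w) + \ang{\tg(w), w'-w}$, which is the convention you (correctly) adopted and the one under which the lemma is actually true.
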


It should be noted that we do not specify the exact $\tF$ precisely because the optimization algorithm we use does not need this either:

\begin{theorem}[{\citet[Theorem 4.11]{FeldmanGV17}}]
\label{thm:opt-inexact-oracle}
For any $G$-Lipschitz convex function $F: \cW \to \R$ with $\cW \subseteq \cB_2(D)$ and any $q \in \N, \alpha > 0$, there exists an algorithm that makes $q$ queries to a first-order $(\upsilon, \tlambda, \tmu)$-inexact oracle and achieves an excess risk of $O\left(\frac{\tlambda R^2}{2} \cdot \exp\left(-\frac{\tmu}{\tlambda} \cdot q\right) + \upsilon\right)$ where $R$ denote the distance of the starting point to the optimum. Furthermore, the algorithm only uses the gradient estimate $\tg$ and does not use the function estimate $\tF$.
\end{theorem}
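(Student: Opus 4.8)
The plan is to run the \emph{projected gradient method with constant step $1/\tlambda$, using only the gradient estimates}, so that the final clause of the statement holds automatically. Let $w^\star$ be a minimizer of $F$ over $\cW$ and let $w_0 \in \cW$ be the starting point, at distance $R = \|w_0 - w^\star\|_2$. For $t = 0,\dots,q-1$ set
\[
w_{t+1} \;=\; \argmin_{w \in \cW}\ \Big\{ \ang{\tg(w_t),\, w - w_t} + \tfrac{\tlambda}{2}\|w - w_t\|_2^2 \Big\},
\]
which makes exactly $q$ oracle calls and never uses $\tF$. The only ingredient coming from the update itself is the prox (first-order optimality) inequality: since the minimized objective is $\tlambda$-strongly convex and $w_{t+1}$ is its constrained minimizer, for every $u \in \cW$,
\[
\ang{\tg(w_t),\, w_{t+1} - w_t} + \tfrac{\tlambda}{2}\|w_{t+1} - w_t\|_2^2 \;\le\; \ang{\tg(w_t),\, u - w_t} + \tfrac{\tlambda}{2}\|u - w_t\|_2^2 - \tfrac{\tlambda}{2}\|u - w_{t+1}\|_2^2 .
\]

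The second step feeds this into the two halves of the inexact-oracle inequality. The smoothness-like upper bound at $(w,w') = (w_t, w_{t+1})$ gives $F(w_{t+1}) \le \tF(w_t) + \ang{\tg(w_t), w_{t+1}-w_t} + \tfrac{\tlambda}{2}\|w_{t+1}-w_t\|_2^2 + \upsilon$; substituting the prox inequality with $u = w^\star$, and then the strong-convexity-like lower bound at $(w,w') = (w_t, w^\star)$ so that the model terms $\tF(w_t) + \ang{\tg(w_t), w^\star - w_t}$ cancel, I obtain the one-step recursion
\[
\big( F(w_{t+1}) - F(w^\star) \big) + \tfrac{\tlambda}{2}\|w_{t+1} - w^\star\|_2^2 \;\le\; \tfrac{\tlambda - \tmu}{2}\|w_t - w^\star\|_2^2 + \upsilon .
\]
Dropping the nonnegative excess-risk term on the left yields the distance contraction $\|w_{t+1}-w^\star\|_2^2 \le (1 - \tmu/\tlambda)\|w_t - w^\star\|_2^2 + 2\upsilon/\tlambda$, hence $\|w_t - w^\star\|_2^2 \le (1-\tmu/\tlambda)^t R^2 + 2\upsilon/\tmu$ for all $t$. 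Telescoping the same recursion over $t = 0,\dots,q-1$ and discarding the nonpositive residual terms instead gives $\sum_{s=1}^q \big(F(w_s) - F(w^\star)\big) \le \tfrac{\tlambda}{2} R^2 + q\upsilon$, so the best iterate among $w_1,\dots,w_q$ has excess risk at most $\tfrac{\tlambda R^2}{2q} + \upsilon$ --- the additive error is exactly $\upsilon$, but the rate is $1/q$ rather than exponential.

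The step I expect to be the real obstacle is reconciling these two bounds to get the \emph{exponential} rate \emph{and} the additive floor $O(\upsilon)$ at once: the crude chain ``contraction of $\|w_t - w^\star\|_2^2$, then smoothness'' would give a floor $O\big(\tfrac{\tlambda}{\tmu}\upsilon\big)$, and removing the condition-number factor is precisely the feature of the basic (non-accelerated) gradient method under an inexact oracle --- it does not accumulate oracle error, unlike accelerated schemes. Concretely I would contract for the first $q - \lceil \tlambda/\tmu \rceil$ steps to reach $\|w_{q - \lceil \tlambda/\tmu \rceil} - w^\star\|_2^2 = O\big(e^{-\tmu q/\tlambda} R^2 + \upsilon/\tmu\big)$, then apply the averaged-iterate bound of the previous paragraph to the final $\lceil \tlambda/\tmu \rceil$ steps started from that point; this contributes $O\big(\tfrac{\tlambda}{\lceil \tlambda/\tmu\rceil}\cdot(e^{-\tmu q/\tlambda} R^2 + \upsilon/\tmu)\big) + \upsilon = O\big(\tfrac{\tlambda R^2}{2} e^{-\tmu q/\tlambda}\big) + O(\upsilon)$, using $\tfrac{\tlambda}{\lceil \tlambda/\tmu \rceil} \le \tmu$; outputting the best iterate among the last $\lceil \tlambda/\tmu\rceil$ iterations then matches the claim (and when $q < \lceil \tlambda/\tmu \rceil$ the claimed bound is $\Omega(\tlambda R^2)$ and trivial). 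An equivalent route is the direct function-gap recursion of \citet{DevolderGN14} for the primal gradient method on strongly convex functions, which also exploits the lower bound in the ``reverse'' form $\tfrac{\tmu}{2}\|w_t - w^\star\|_2^2 \le F(w_t) - F(w^\star) + O(\upsilon)$. Two minor orthogonal points: the step $1/\tlambda$ uses only the oracle's advertised $\tlambda$, so the algorithm is well-defined without knowing $F$; and one should pin down the sign convention in the inexact-oracle display so that $\tg$ enters the gradient step with the orientation used above, consistent with $\tg \approx \nabla F$ in \Cref{lem:apx-to-inexact}.
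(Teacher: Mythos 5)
The paper does not prove this statement at all---it is imported verbatim as a citation to \citet[Theorem~4.11]{FeldmanGV17}, whose proof in turn rests on the Devolder--Glineur--Nesterov analysis of the primal gradient method with an inexact oracle. Your blind derivation is essentially that analysis and is sound: the one-step recursion $\bigl(F(w_{t+1})-F(w^\star)\bigr)+\tfrac{\tlambda}{2}\|w_{t+1}-w^\star\|_2^2\le\tfrac{\tlambda-\tmu}{2}\|w_t-w^\star\|_2^2+\upsilon$ is correct, and your two-phase use of it (geometric contraction of the squared distance for $q-\lceil\tlambda/\tmu\rceil$ steps, then the telescoped function-gap bound over the last $\lceil\tlambda/\tmu\rceil$ steps) correctly removes the spurious $\tlambda/\tmu$ factor in front of $\upsilon$ and in fact yields the slightly stronger $O(\tmu R^2 e^{-\tmu q/\tlambda}+\upsilon)$. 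You are also right to flag the sign in the paper's display of the inexact-oracle definition; the intended condition is $F(w')-\tF(w)-\langle\tg(w),w'-w\rangle$, as in \citet{DevolderGN14}. One small repair: outputting the \emph{best} iterate of the last phase requires comparing values of $F$, which the algorithm cannot do (it is forbidden from using $\tF$ and has no exact function oracle); output instead the \emph{average} of the last $\lceil\tlambda/\tmu\rceil$ iterates and invoke convexity of $F$, which gives the same bound from the summed recursion without any function evaluations and preserves the theorem's final clause.
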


The proof of \Cref{thm:strongly-convex-smooth-main} is almost the same as that of \Cref{thm:convex-main} except that we now use \Cref{thm:opt-inexact-oracle} (and \Cref{lem:apx-to-inexact}) instead of \Cref{thm:convex-apx-gradient}.

\begin{proofof}[\Cref{thm:strongly-convex-smooth-main}]
Note that from Lipschitzness and strong convexity, we can assume that the domain $\cW_i$ is contained in $\cB_2(R)$ for $R = O(G/\mu)$ for all $i \in [m]$.
Let $q = \left\lceil \frac{40\lambda}{\mu} \cdot \ln\left(\frac{\lambda R^2}{n}\right) \right\rceil$ and $T = m \cdot q$. We simply run the algorithm from \Cref{thm:opt-inexact-oracle} for each $i \in [m]$, and for the $t$th query to approximate gradient oracle, we invoke the algorithm from \Cref{thm:vec-sum-apx-dp} with $f_{q(i - 1) + t}(x) = \frac{1}{G} \nabla \ell_i(w; x)$ and scale the answer back by a factor of $G$. From \Cref{thm:vec-sum-apx-dp}, with probability $1 - \beta$, this oracle has $\ell_2$-error at most $G \cdot \alpha$ for $\alpha = O\left(\frac{\sqrt[4]{\ln k \cdot \ln(1/\delta)} \cdot \sqrt{\ln(T n/\beta) + \ln \ln k + \ln\left(\frac{\sqrt{\ln(1/\delta)}}{\eps}\right)}}{\sqrt{\eps n}}\right)$. By \Cref{lem:apx-to-inexact}, this\footnote{Since the algorithm in \Cref{thm:opt-inexact-oracle} does not use the value from the oracle, we do not need to specify it explicitly.} yields an $(\upsilon, 2\lambda, \mu/2)$-inexact oracle for $\upsilon = O\left((G\alpha)^2 \cdot \left(\frac{1}{\lambda} + \frac{1}{\mu}\right)\right)$. 
When this occurs, \Cref{thm:opt-inexact-oracle} implies that the excess risk is at most $O\left(\frac{\tlambda R^2}{2} \cdot \exp\left(-\frac{\tmu}{\tlambda} \cdot T\right) + \upsilon\right) \leq O(\upsilon)$.
With the remaining probability $\beta$, the excess risk is still at most $DG = O(G^2/\mu)$. Substituting $\beta = 1/n$, we can conclude that the expected excess risk of this algorithm is at most $O\left(\frac{1}{n} \cdot \frac{G^2}{\mu} + \upsilon\right) \leq O(\upsilon)$. Finally, since the algorithm is simply a post-processing of the result from applying \Cref{thm:vec-sum-apx-dp}, it is $(\eps, \delta)$-\hybriddp. 
\end{proofof}
\section{Conclusion and Open Questions}

We gave improved bounds for convex ERM with \hybriddp; crucially they show that the dependency on $k$ is only polylogarithmic instead of polynomial as in previous works. As an intermediate result, we give an algorithm for answering (online) linear vector queries. Given that linear queries are used well beyond convex optimization, we hope that this will find more applications.

An obvious open question is to close the gap between the upper and lower bounds. Another interesting question is to come up with a \emph{pure}-DP algorithm with a similar bound as in \Cref{thm:convex-main}. In particular, it is open if there is any pure-DP algorithm where the error depends only polylogarithmically on $k$.

\newpage
\bibliography{ref}
\newpage
\appendix
\section{Additional Preliminaries}
\label{app:add-prelim}

In this section, we give some more background on the DP mechanisms from literature that we use as subroutines. We start with sensitivity, the Gaussian mechanism, and the Laplace mechanism.

\begin{definition}[Sensitivity]
For any query $g: \cX^n \to \R^d$ and $p \geq 1$, its \emph{$\ell_p$-sensitivity} is defined as $\Delta_p(g) := \max_{D, D'} \|g(D) - g(D')\|_p$ where the maximum is over all neighboring datasets $D, D'$.
\end{definition}

\begin{definition}[Gaussian Mechanism]
The \emph{Gaussian mechanism} for a function $g: \cX^n \to \R^d$ simply outputs $g(D) + Z$ on input $D$ where $Z \sim \cN(0, \sigma^2 I_d)$. 
\end{definition}

The zCDP property of Gaussian mechanism is well known:
\begin{theorem}[\cite{BunS16}]
The Gaussian mechanism is $\rho$-zCDP for $\rho = 0.5\Delta_2(g)^2 / \sigma^2$.
\end{theorem}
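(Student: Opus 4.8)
The plan is to reduce the claim to a direct computation of the R\'enyi divergence between two Gaussians with a common covariance. Fix any pair of neighboring datasets $D, D'$ and write $\mu := g(D)$, $\mu' := g(D')$, so that $\|\mu - \mu'\|_2 \le \Delta_2(g)$ by the definition of $\ell_2$-sensitivity. On input $D$ (resp.\ $D'$) the mechanism outputs a sample from $\cN(\mu, \sigma^2 I_d)$ (resp.\ $\cN(\mu', \sigma^2 I_d)$), so by the definition of zCDP it suffices to show $D_\alpha\prn{\cN(\mu, \sigma^2 I_d)\,\|\,\cN(\mu', \sigma^2 I_d)} \le \rho\alpha$ for every $\alpha > 1$, with $\rho = 0.5\,\Delta_2(g)^2/\sigma^2$.

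First I would reduce to one dimension. The two densities factor as products over coordinates, and R\'enyi divergence is invariant under any common invertible change of variables; applying an orthogonal map that sends $\mu - \mu'$ onto the first coordinate axis, the two distributions agree on coordinates $2,\dots,d$ (contributing $0$) and differ only in the first coordinate by a mean shift of magnitude $\Delta := \|\mu - \mu'\|_2$. Hence $D_\alpha\prn{\cN(\mu, \sigma^2 I_d)\,\|\,\cN(\mu', \sigma^2 I_d)} = D_\alpha\prn{\cN(0, \sigma^2)\,\|\,\cN(\Delta, \sigma^2)}$. (Equivalently, one can skip the rotation and just complete the square coordinatewise in the $d$-dimensional Gaussian integral, which is what I would actually write.)

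Next I would carry out the one-dimensional computation. Writing $D_\alpha(P\|Q) = \frac{1}{\alpha-1}\ln\int p(x)^\alpha q(x)^{1-\alpha}\,dx$ for the two Gaussian densities, the normalization constants combine to $\frac{1}{\sqrt{2\pi}\sigma}$ and the exponent of the integrand is $-\frac{1}{2\sigma^2}\prn{\alpha x^2 + (1-\alpha)(x-\Delta)^2}$. Completing the square gives $\alpha x^2 + (1-\alpha)(x-\Delta)^2 = \prn{x - (1-\alpha)\Delta}^2 + \alpha(1-\alpha)\Delta^2$, so the integral over $x$ of the Gaussian part is $1$ and we are left with $\int p^\alpha q^{1-\alpha} = \exp\prn{-\alpha(1-\alpha)\Delta^2/(2\sigma^2)}$. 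Taking $\frac{1}{\alpha-1}\ln(\cdot)$ yields $D_\alpha = \frac{\alpha\Delta^2}{2\sigma^2} \le \frac{\alpha\,\Delta_2(g)^2}{2\sigma^2} = \rho\alpha$. Since $D, D'$ and $\alpha > 1$ were arbitrary, this is exactly $\rho$-zCDP.

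There is no real obstacle here; the only points needing a little care are (a) justifying the reduction to a single coordinate via invariance of R\'enyi divergence under a common bijection (or, alternatively, completing the square directly in all $d$ coordinates), and (b) observing that for equal-covariance Gaussians the integral $\int p^\alpha q^{1-\alpha}$ converges for every $\alpha$, so the divergence is finite and the displayed identity is valid for all $\alpha > 1$.
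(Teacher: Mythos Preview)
Your proof is correct; the completion-of-the-square computation for the R\'enyi divergence between two equal-covariance Gaussians is exactly the standard argument (this is the proof in \cite{BunS16}). The paper itself does not prove this statement---it simply cites it from \cite{BunS16} as background in the preliminaries---so there is nothing to compare against beyond noting that your approach is the canonical one.
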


\begin{definition}[Laplace Mechanism]
The \emph{Laplace mechanism} for a function $g: \cX^n \to \R^d$ simply outputs $g(D) + Z$ on input $D$ where $Z \sim \Lap(a)^{\otimes d}$. 
\end{definition}

The Laplace mechanism has been shown to be DP in the original work of~\cite{dwork2006calibrating}. 
\begin{theorem}[\cite{dwork2006calibrating}]
The Laplace mechanism is $\eps$-DP for $\eps = \Delta_1(g) / a$.
\end{theorem}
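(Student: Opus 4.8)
The plan is to establish the pointwise likelihood-ratio bound that characterizes pure DP and then integrate it over output sets, following the classical argument of \citet{dwork2006calibrating}. Fix any pair of neighboring datasets $D, D'$. Since the added noise $Z \sim \Lap(a)^{\otimes d}$ has independent coordinates, the output $\A(D) = g(D) + Z$ has density $p_D(y) = \prod_{j=1}^d \frac{1}{2a}\exp\prn{-|y_j - g(D)_j|/a}$ at each $y \in \R^d$, and similarly $p_{D'}$ for $\A(D')$. I would then form the ratio
\begin{align*}
\frac{p_D(y)}{p_{D'}(y)} = \exp\prn{\frac{1}{a}\sum_{j=1}^d \prn{|y_j - g(D')_j| - |y_j - g(D)_j|}}.
\end{align*}

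The key step is the coordinatewise reverse triangle inequality $|y_j - g(D')_j| - |y_j - g(D)_j| \le |g(D)_j - g(D')_j|$, valid for every $j$ and every $y_j$. Summing over $j$ gives $\frac{p_D(y)}{p_{D'}(y)} \le \exp\prn{\|g(D) - g(D')\|_1 / a}$, and since $\|g(D) - g(D')\|_1 \le \Delta_1(g)$ by the definition of $\ell_1$-sensitivity (the maximum being over all neighboring pairs), the ratio is at most $\exp(\Delta_1(g)/a) = e^\eps$. Finally, to match \Cref{def:dp} I would integrate: for any measurable $S \subseteq \R^d$,
\begin{align*}
\Pr[\A(D) \in S] = \int_S p_D(y)\, dy \le e^\eps \int_S p_{D'}(y)\, dy = e^\eps \cdot \Pr[\A(D') \in S],
\end{align*}
which is exactly $(\eps, 0)$-DP, so the additive term is $\delta = 0$; the roles of $D$ and $D'$ are symmetric, so no separate argument is needed.

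There is essentially no obstacle beyond two bookkeeping points: the reverse triangle inequality must be applied per coordinate \emph{before} summing, which is precisely what turns the $\ell_1$-distance of the query outputs into the exponent; and one must invoke that $\|g(D)-g(D')\|_1 \le \Delta_1(g)$ for \emph{every} neighboring pair, which is immediate from the definition of sensitivity. Measurability and integrability are routine because the Laplace density is everywhere positive, continuous, and bounded, so $p_D \le e^\eps p_{D'}$ holds pointwise and the inequality passes to integrals over arbitrary measurable sets.
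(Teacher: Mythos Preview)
Your argument is correct and is exactly the classical density-ratio computation from \citet{dwork2006calibrating}. The paper itself does not reproduce a proof of this theorem---it simply states it with a citation in the preliminaries---so there is nothing further to compare.
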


Another tool we use is the so-called AboveThreshold mechanism, from the Sparse Vector Technique~\cite{DworkNRRV09}. This mechanism is shown in \Cref{alg:abovethresh}, following the presentation in \cite[Algorithm 1]{DworkR14}.

\begin{algorithm}[H]
\small
\caption{{\sc AboveThreshold}}
\label{alg:abovethresh}
\textbf{Input: } Dataset $D$, threshold $\tau$, (online) stream of queries $g_1, \dots, g_T: \cX^n \to \R$ \\
\textbf{Parameters: } Privacy parameter $\eps$, sensitivity bound $\Delta$ \\
Sample $\chi \sim \Lap\prn{\frac{2\Delta}{\eps}}$\; \hfill $\triangleright$ Threshold noise \\ 
\For{$t=1,\dots, T$}{
Sample $\nu_{t} \sim \Lap\prn{\frac{4 \Delta}{\eps}}$\; \hfill $\triangleright$ Query noise \\
\If{$g_t(D) + \nu_t \geq \tau + \chi$}{
Output $\top$ and halt
}\Else{
Output $\perp$ and continue
}
}
\end{algorithm}

Despite the fact that we handle multiple queries, AboveThreshold only requires a constant amount of noise and satisfies pure-DP:
\begin{theorem}[{\cite[Theorem 3.23]{DworkR14}}]
Suppose that each of $g_1, \dots, g_T$ has sensitivity at most $\Delta$. Then, \textsc{AboveThreshold} (\Cref{alg:abovethresh}) is $\eps$-DP.
\end{theorem}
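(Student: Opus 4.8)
The statement is the classical privacy guarantee of \textsc{AboveThreshold}, and the plan is to prove it by the standard ``noise-shift'' coupling argument for the sparse vector technique. Fix a pair of neighboring datasets $D, D'$ and an output $a$. Because the algorithm halts the instant it emits $\top$, every realizable outcome is either the all-$\perp$ string or a string $(\perp, \dots, \perp, \top)$ with the $\top$ in some position $k$; since the outcome space is discrete, it suffices to show $\Pr[\textsc{AboveThreshold}(D) = a] \le e^{\eps}\Pr[\textsc{AboveThreshold}(D') = a]$ for each such $a$, as the bound then extends to arbitrary sets $S$ of outputs by summation.

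First I would draw the threshold noise $\chi \sim \Lap(2\Delta/\eps)$ and then, conditionally on $\chi$, observe that the event ``$\textsc{AboveThreshold}(D) = a$'' factorizes over rounds into independent events on the $\nu_i$'s: a ``below'' event $\{\nu_i < \tau + \chi - g_i(D)\}$ for each $i < k$, and a single ``cross'' event $\{\nu_k \ge \tau + \chi - g_k(D)\}$ at round $k$. This yields
\[
\Pr[\textsc{AboveThreshold}(D) = a] = \E_{\chi}\!\left[\Big(\prod_{i<k}\Pr_{\nu_i}\!\left[\nu_i < \tau + \chi - g_i(D)\right]\Big)\cdot\Pr_{\nu_k}\!\left[\nu_k \ge \tau + \chi - g_k(D)\right]\right],
\]
together with the analogous identity for $D'$. (For the all-$\perp$ outcome the last factor is simply dropped and the product runs over all rounds.)

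The core step is the change of variables $\chi \mapsto \chi + \Delta$ inside the expectation for $D$, combined with three elementary Laplace estimates. (a) The density of $\Lap(2\Delta/\eps)$ changes by a factor of at most $e^{\eps/2}$ under a shift by $\Delta$. (b) Since $|g_i(D) - g_i(D')| \le \Delta$, shifting $\chi$ up by $\Delta$ only raises the effective threshold of each ``below'' event, so $\Pr_{\nu_i}[\nu_i < \tau + \chi - g_i(D)] \le \Pr_{\nu_i}[\nu_i < \tau + (\chi+\Delta) - g_i(D')]$ with no loss. (c) For the lone ``cross'' event the effective threshold moves up by at most $2\Delta$, which multiplies an upper-tail probability of $\Lap(4\Delta/\eps)$ by at most $e^{2\Delta/(4\Delta/\eps)} = e^{\eps/2}$. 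Multiplying the $e^{\eps/2}$ from (a) with the $e^{\eps/2}$ from (c), and using (b) for free, the $D$-expression is at most $e^{\eps}$ times the reindexed $D'$-expression, i.e.\ $e^{\eps}\Pr[\textsc{AboveThreshold}(D') = a]$; the all-$\perp$ case gives the even better bound $e^{\eps/2}$.

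The main thing to be careful about is the bookkeeping of which monotone event is helped and which is hurt by the $+\Delta$ shift of $\chi$, and in particular that the worst-case threshold displacement is $\Delta$ for the ``below'' events but $2\Delta$ for the ``cross'' event. This asymmetry is precisely why the threshold noise is scaled by $2\Delta/\eps$ while the per-query noise is scaled by $4\Delta/\eps$, and it is what keeps the total multiplicative loss at $e^{\eps/2}\cdot e^{\eps/2} = e^\eps$ rather than something larger. No additive $\delta$ is incurred at any point, so the mechanism is $\eps$-DP; since the argument used only that each $g_t$ has sensitivity at most $\Delta$, it applies verbatim under the semi-sensitive neighboring relation as well.
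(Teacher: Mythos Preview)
Your proof is correct and is precisely the standard argument for \textsc{AboveThreshold} from \cite[Theorem~3.23]{DworkR14}. The paper itself does not give a proof of this statement at all---it is simply quoted in the appendix as a black-box tool from the literature---so there is nothing to compare against; your write-up matches the cited proof's structure (shift the threshold noise by $\Delta$ to absorb all ``below'' comparisons for free, then pay $e^{\eps/2}$ for the density shift and $e^{\eps/2}$ for the single ``cross'' event, the latter via the pointwise density inequality $p(x)\le e^{s/b}p(x+s)$ for $\Lap(b)$).
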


\section{Proof of \Cref{lem:clip-concen}}
\label{app:concen-proof}

We will use the following tail bounds for Laplace and Gaussian distributions.

\begin{lemma}[Tail Bounds] 
\label{lem:tail}
(i) $\Pr_{X \sim \Lap(a)}[|X| \geq t] \leq 2 \exp(-t/a)$.
(ii) $\Pr_{X \sim \cN(0, \sigma^2)}[|X| \geq t] \leq 2\exp(-0.5(t/\sigma)^2)$.
\end{lemma}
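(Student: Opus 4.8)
The plan is to treat the two parts as independent, elementary computations; both are classical tail estimates and the only care needed is getting the constant in the Gaussian exponent exactly right.

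For part (i), I would argue directly from the density. Writing the density of $X \sim \Lap(a)$ as $x \mapsto \frac{1}{2a}\exp(-|x|/a)$ and using its symmetry about $0$, for $t \ge 0$ we get $\Pr[|X| \ge t] = 2\Pr[X \ge t] = 2\int_t^\infty \frac{1}{2a}\exp(-x/a)\,dx = \exp(-t/a) \le 2\exp(-t/a)$, which is exactly the claimed bound (the factor $2$ is in fact not needed, but retaining it matches the statement and is harmless); for $t < 0$ the bound is vacuous since its right-hand side exceeds $1$.

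For part (ii), I would use the standard sub-Gaussian moment generating function bound. For $X \sim \cN(0,\sigma^2)$ and any $\lambda > 0$ we have $\E[\exp(\lambda X)] = \exp(\lambda^2\sigma^2/2)$, so Markov's inequality applied to $\exp(\lambda X)$ gives $\Pr[X \ge t] \le \exp(-\lambda t + \lambda^2\sigma^2/2)$; choosing the optimizing $\lambda = t/\sigma^2$ yields $\Pr[X \ge t] \le \exp(-t^2/(2\sigma^2)) = \exp(-0.5(t/\sigma)^2)$. By symmetry of the Gaussian together with a union bound over the events $\{X \ge t\}$ and $\{X \le -t\}$, we conclude $\Pr[|X| \ge t] \le 2\exp(-0.5(t/\sigma)^2)$.

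There is essentially no obstacle here: both claims are textbook facts, and part (ii) could alternatively just be cited from any standard reference on concentration inequalities rather than reproducing the computation. The only thing worth being careful about is the normalization of the exponent in the Gaussian case — ensuring it is $0.5(t/\sigma)^2 = t^2/(2\sigma^2)$ and not some other constant — which the moment generating function optimization pins down exactly.
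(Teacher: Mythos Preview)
Your proposal is correct. The paper does not actually prove this lemma at all; it simply states the two tail bounds as standard facts (``We will use the following tail bounds for Laplace and Gaussian distributions'') and moves on. Your elementary derivations---direct integration for the Laplace tail and the Chernoff/MGF argument for the Gaussian tail---are valid and supply strictly more detail than the paper itself provides.
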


Using the above tail bound, it is relatively simple to show \Cref{lem:clip-concen}.

\begin{proofof}[\Cref{lem:clip-concen}]
By Markov's inequality, it suffices to show that
\begin{align} \label{eq:expectation-clip-concen}
\E_Z\left[\left|\left<Z, \E_{U \sim \cP}[\clip_{Z, 3}(U)] - \mu_{\cP}\right>\right|\right] \leq 4 \exp(-0.2/\sigma_Z^2).
\end{align}
To show this, first observe that
\begin{align}
\E_Z\left[\left|\left<Z, \E_{U \sim \cP}[\clip_{Z, 3}(U)] - \mu_{\cP}\right>\right|\right] &= \E_Z\left[\left|\E_{U \sim \cP}[\left<Z, \clip_{Z, 3}(U)\right> - \left<Z, U\right>]\right|\right] \nonumber \\
&= \E_Z\left[\left|\E_{U \sim \cP}\left[\trunc_{3}(\left<Z, U\right>) - \left<Z, U\right>\right]\right|\right]\nonumber \\ 
&\leq \E_Z\left[\E_{U \sim \cP}\left[\left|\trunc_{3}(\left<Z, U\right>) - \left<Z, U\right>\right|\right]\right]\nonumber \\
&= \E_{U \sim \cP}\left[\E_Z\left[\left|\trunc_{3}(\left<Z, U\right>) - \left<Z, U\right>\right|\right]\right]\nonumber \\
&\leq \sup_{u \in \cB_2^d(1)} \E_Z\left[\left|\trunc_{3}(\left<Z, u\right>) - \left<Z, u\right>\right|\right], \label{eq:guass-func-exp-bound}
\end{align}
where the first inequality follows from Jensen.

Let us now fixed $u \in \cB_2^d(1)$. Observe that
\begin{align}
& \E_Z\left[\left|\trunc_{3}(\left<Z, u\right>) - \left<Z, u\right>\right|\right] 
\leq \E_Z\left[\left|\left<Z, u\right>\right| \cdot \ind[|\left<Z, u\right>| > 3]\right] \nonumber \\
& \leq \sqrt{\E_Z\left[\left<Z, u\right>^2\right] \cdot \E_Z\left[\ind[|\left<Z, u\right>| > 3]\right]} \leq \sqrt{\E_Z\left[\left<Z, u\right>^2\right] \cdot \Pr_Z\left[\ind[|\left<Z, u\right>| > 3]\right]}, \label{eq:trunc-err-lem}
\end{align}
where the second inequality follows from Cauchy--Schwarz.

Notice further that $\left<Z, u\right>$ is distributed as $\cN(\mu', \sigma')$ for $\mu' = \left<\mu_Z, u\right>$ and $\sigma' = \sigma_Z \cdot \|u\|_2 \leq \sigma_Z$. Moreover, since $\mu_Z \in \cB_2^d(2)$, we have $|\mu'| \leq 2$. As a result, its second moment satisfies
\begin{align*}
\E_Z\left[\left<Z, u\right>^2\right] = (\mu')^2 + (\sigma')^2 \leq 2 + \sigma_Z^2 \leq 3.
\end{align*}
Moreover, applying \Cref{lem:tail}(ii), we can conclude that
\begin{align*}
\Pr_Z[|\left<Z, u\right>| > 3] \leq 2\exp\left(-0.5 / \sigma_Z^2\right).
\end{align*}
Plugging these back into \eqref{eq:trunc-err-lem}, we get
\begin{align*}
\E_Z\left[\left|\trunc_{3}(\left<Z, u\right>) - \left<Z, u\right>\right|\right] \leq \sqrt{6\exp\left(-0.5 / \sigma_Z^2\right)} \leq 4 \exp(-0.2/\sigma_Z^2).
\end{align*}
From this and \eqref{eq:guass-func-exp-bound}, we can conclude that \eqref{eq:expectation-clip-concen} holds as desired.
\end{proofof}
\section{Excess Risk Lower Bound}

In this section, we prove a nearly-matching lower bound on the excess risk.  We will use ``group privacy'' bound in our lower bound proof. For any neighboring relationship $\sim$, we use $\sim_r$ to denote the relationship where two datasets $D, D'$ are considered neighbors if there exists a sequence $D = D_0, D_1, \dots, D_r = D'$ such that $D_{i-1} \sim D_i$ for all $i \in [r]$.

\begin{lemma}[Group Privacy, e.g., {\citet[Lemma 2.2]{Vadhan17}}] \label{thm:group}
Suppose that $\A$ is $(\eps, \delta)$-DP w.r.t. $\sim$, then it is $(\eps',\delta')$-DP w.r.t. $\sim_r$ for $\eps' = r \eps$ and $\delta' = \frac{e^{r\eps}-1}{e^\eps - 1} \cdot \delta$.
\end{lemma}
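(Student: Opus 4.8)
The statement to prove is the group-privacy lemma (\Cref{thm:group}): if $\A$ is $(\eps,\delta)$-DP with respect to a relation $\sim$, then it is $(r\eps, \frac{e^{r\eps}-1}{e^\eps-1}\delta)$-DP with respect to $\sim_r$. The plan is a straightforward induction on $r$, unwinding the definition of approximate DP along a chain $D = D_0 \sim D_1 \sim \cdots \sim D_r = D'$. The base case $r=1$ is immediate. For the inductive step, I would assume the claim for $r-1$, so that for any set $S$ of outputs,
\[
\Pr[\A(D_0)\in S] \leq e^{(r-1)\eps}\Pr[\A(D_{r-1})\in S] + \frac{e^{(r-1)\eps}-1}{e^\eps-1}\,\delta,
\]
and then apply the single-step $(\eps,\delta)$-DP guarantee between $D_{r-1}$ and $D_r = D'$ to bound $\Pr[\A(D_{r-1})\in S] \leq e^\eps \Pr[\A(D')\in S] + \delta$. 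Substituting the second bound into the first gives
\[
\Pr[\A(D_0)\in S] \leq e^{r\eps}\Pr[\A(D')\in S] + e^{(r-1)\eps}\delta + \frac{e^{(r-1)\eps}-1}{e^\eps-1}\,\delta,
\]
and the coefficient of $\delta$ simplifies, via $e^{(r-1)\eps} + \frac{e^{(r-1)\eps}-1}{e^\eps-1} = \frac{e^{(r-1)\eps}(e^\eps-1) + e^{(r-1)\eps}-1}{e^\eps-1} = \frac{e^{r\eps}-1}{e^\eps-1}$, to exactly the claimed $\delta'$. By symmetry (swapping the roles of $D$ and $D'$, and noting that $\sim$ and hence $\sim_r$ is symmetric), the same bound holds in the other direction, completing the induction.

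There is essentially no serious obstacle here: the only mild subtlety is being careful that the chain witnessing $D \sim_r D'$ may be used in either direction and that the telescoping of the $\delta$-coefficients is done correctly; the geometric-series identity $\sum_{j=0}^{r-1} e^{j\eps} = \frac{e^{r\eps}-1}{e^\eps-1}$ is what produces the stated slack term. One could alternatively present the argument non-inductively by directly iterating the DP inequality $r$ times along the chain, which makes the geometric sum visible at once; I would likely write it inductively for cleanliness. I would also remark that the lemma is standard — the reference to \citet[Lemma 2.2]{Vadhan17} is given — so in the paper this could even be cited without reproof, but including the two-line induction keeps the lower-bound section self-contained.

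Finally, I would note how the lemma will be used downstream: in the lower-bound construction, groups of $\Theta(\log k)$ examples share a sensitive feature, so changing one group's private data corresponds to an $r$-step move with $r = \Theta(\log k)$; applying \Cref{thm:group} with this $r$ converts an $(\eps, o(1/k))$-DP assumption into an $(O(\eps\log k), o(1))$-DP guarantee at the group level (using that $\frac{e^{r\eps}-1}{e^\eps-1} \leq r e^{r\eps}$ when $\eps$ is not too large, so $\delta' = o(k) \cdot o(1/k) = o(1)$ in the relevant parameter regime $\eps \leq O(\log k)$), which is exactly what is needed to run the selection-problem packing argument.
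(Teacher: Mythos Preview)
Your induction argument is correct and is exactly the standard proof of group privacy; the paper itself does not prove this lemma at all but simply cites it from \citet[Lemma~2.2]{Vadhan17}, as you anticipated. One small imprecision in your closing remarks: the bound $\frac{e^{r\eps}-1}{e^{\eps}-1} \le r e^{r\eps}$ is too loose for the paper's downstream calculation---with $r = \lfloor \tfrac{\ln k}{\eps}\rfloor$ and $\delta \le \tfrac{0.4\eps}{k}$ the paper instead uses $\frac{e^{r\eps}-1}{e^{\eps}-1} \le \tfrac{k}{\eps}$ (from $e^{r\eps}\le k$ and $e^{\eps}-1 \ge \eps$) to get $\delta' \le 0.4$, whereas your estimate would only yield $\delta' \lesssim \ln k$.
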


Our lower bounds are stated formally below.

\begin{theorem} \label{thm:lb-convex}
For any $\eps, \delta, R, G > 0$ and $n, k \in \N$ such that $\eps \leq \ln k$ and $\delta \leq \frac{0.4 \eps}{k}$, there exists a $G$-Lipschitz convex loss function $\ell: \cW \times \cX \to \R$, where $\cW \subseteq \cB_2(R)$, such that any $(\eps, \delta)$-\hybriddp algorithm for ERM w.r.t on $\ell$ has expected excess empirical risk at least $\Omega\left(DG \cdot \min\left\{1, \frac{\sqrt{\log k}}{\sqrt{\eps n}}\right\}\right)$.
\end{theorem}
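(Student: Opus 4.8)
The plan is to follow the strategy sketched in the technical overview: reduce from a batched \emph{selection}-type mean estimation problem, then convert the mean estimation lower bound to an excess-risk lower bound via the standard trick of taking $\ell$ to be a scaled $\ell_2$-distance (or linear) loss as in \cite{BST14}. First I would handle the regime where $\frac{\sqrt{\log k}}{\sqrt{\eps n}} \geq 1$ separately, since there the bound $\Omega(RG)$ is trivial: just take a one-dimensional instance with $\ell(w;x) = G\langle w, x\rangle$ (or $\ell(w;x)=G|w-x|$) on $\cW = [-R,R]$ where the private feature flips the optimal direction, so that a single private bit controls $\Omega(RG)$ of risk, and $(\eps,\delta)$-DP with $\delta$ small forces constant error in recovering it (this is the $\Omega(1/\sqrt{\eps n})$ lower bound of \cite{labeldp} with $n=1$ effectively, or just a packing argument). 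For the interesting regime assume $\sqrt{\log k} \leq \sqrt{\eps n}$.

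Next I would set up the hard instance. Partition the $n$ examples into $G_\ell = \lfloor n / s\rfloor$ groups of size $s = \Theta(\log k / \eps)$ (choosing the constant so that $s \leq \log k$, which is where $\eps \leq \ln k$ is used). To each group $j$ I associate a block of $k$ fresh coordinates, so the total dimension is $d = G_\ell \cdot k$; the domain $\cW$ is the $\ell_2$-ball of radius $R$ in $\R^d$ scaled appropriately (or a product of scaled simplices, one per block). All examples in group $j$ have the \emph{same} private feature $y_j \in [k]$, which is unknown, and the non-sensitive part just encodes the group index; the loss of an example in group $j$ is (a rescaling of) $-\langle w_{\text{block }j}, e_{y_j}\rangle$, i.e., it rewards putting mass on coordinate $y_j$ within block $j$. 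The empirical risk minimizer puts all of block $j$'s budget on coordinate $y_j$, and the excess risk of any $w$ is (up to scaling) $\sum_j (\text{deficit in identifying } y_j)$; concretely the excess risk is $\Omega\big(\frac{RG}{\sqrt{G_\ell}} \cdot \frac{1}{G_\ell}\sum_j \mathbf{1}[\text{algorithm's block } j \text{ is far from } e_{y_j}]\big)$ after distributing the radius budget $R$ evenly across the $G_\ell$ blocks. So it suffices to show any $(\eps,\delta)$-\hybriddp algorithm fails to identify $y_j$ (up to constant accuracy in $\ell_2$ on the block) for a constant fraction of groups $j$.

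Then comes the core argument, which is a group-privacy + packing lower bound for selection. Fix a group $j$ of size $s$. Two instances with $y_j = a$ versus $y_j = a'$ differ on all $s$ private features of that group, hence are $\sim_s$-neighbors; by \Cref{thm:group}, an $(\eps,\delta)$-\hybriddp algorithm is $(s\eps, \frac{e^{s\eps}-1}{e^\eps-1}\delta)$-DP with respect to changing $y_j$ entirely. Choosing $s = c\log k / \eps$ with small $c$ makes $e^{s\eps} = k^{c}$, so the group-privacy parameters are $(\eps'', \delta'')$ with $\eps'' = O(\log k)$, actually I want $\eps'' = O(1)$, so I set $s = \Theta(1/\eps)$ when $\eps$ is small and more carefully $s = \min\{\log k, c/\eps\}$ type — let me instead choose $s$ so that $s\eps = O(1)$, giving $\eps'' = O(1)$ and $\delta'' = O(\delta \cdot e^{s\eps}/\eps) = O(\delta/\eps) = O(1/k)$ using the hypothesis $\delta \leq 0.4\eps/k$. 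Wait — with $s\eps = O(1)$ we get $s = \Theta(1/\eps)$, and we need $s = \Theta(\log k/\eps)$ to get the $\sqrt{\log k}$ improvement; the resolution (as in \cite{labeldp}'s selection-problem argument) is that with $\delta'' = O(1/k)$ and the selection instance over $k$ items, a standard fingerprinting/packing argument for \emph{selection under approximate DP} shows one needs $s = \Omega(\log k / \eps'')$ samples to solve it, i.e., with only $s = \Theta(\log k/\eps)$ per group the algorithm \emph{cannot} reliably pick $y_j$. So I would: (a) set $s = \Theta(\log k/\eps)$ so each group has too few samples relative to the group-privacy budget, (b) invoke a known selection lower bound (e.g., the packing lower bound of \cite{BST14}-style, or Steinke–Ullman fingerprinting for selection, or just a direct two-point/Assouad argument over the $k$ candidate values with the right dependence) to conclude the per-group error probability is $\Omega(1)$, (c) sum over the $G_\ell = \Theta(\eps n/\log k)$ independent groups and apply linearity/Assouad to get total excess risk $\Omega\big(\frac{RG}{\sqrt{G_\ell}}\big) = \Omega\big(RG\sqrt{\log k/(\eps n)}\big)$, combined with the trivial $RG$ cap from the first regime.

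The main obstacle I anticipate is step (b): getting the selection lower bound with the correct $\log k / \eps$ dependence under $(\eps,\delta)$-DP with $\delta = o(1/k)$, and doing so in a way that composes cleanly across the $G_\ell$ groups (the groups use disjoint coordinates and disjoint examples, so independence should let an Assouad-type argument go through, but one must be careful that the DP adversary can choose queries adaptively across blocks — however since the loss is fixed and separable across blocks this is not an issue). A secondary technical point is the bookkeeping that translates "fails to identify $y_j$ within constant $\ell_2$ accuracy on its block" into an additive excess-risk contribution of the right order after splitting the radius-$R$ budget across $\Theta(\eps n/\log k)$ blocks — this is exactly the BST rescaling and is routine, but the constants must line up with the $\min\{1,\cdot\}$ in the statement. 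I would lean on \citet[Lemma 2.2]{Vadhan17} for group privacy and cite the appropriate selection lower bound (e.g., from \cite{labeldp} or the fingerprinting-code literature) for the crux, adapting its parameters to the semi-sensitive neighboring relation.
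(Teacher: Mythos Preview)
Your proposal is essentially the same approach as the paper's: group the $n$ examples into blocks of size $r = \lfloor \ln k/\eps \rfloor$ sharing a common private label, use group privacy (\Cref{thm:group}) to translate the per-block neighboring relation into an $(\eps',\delta')$-DP guarantee with $\eps' \approx \ln k$ and $\delta' \lesssim k\delta/\eps < 0.4$, and argue that selection over $k$ items fails with constant probability per block, then aggregate via the $\ell_2$-budget split across $n/r$ blocks to get $\Omega(RG\sqrt{\log k/(\eps n)})$.

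Two points where the paper's execution is cleaner than your sketch. First, the paper modularizes by proving a standalone large-$\eps$ lower bound (\Cref{thm:convex-lb-large-eps}) on $n'$ \emph{ungrouped} examples under the condition $\frac{e^{\eps}}{e^\eps+k-1}+\delta < 0.99$, then derives \Cref{thm:lb-convex} by a one-line reduction: replicate each of $n' = n/r$ inputs $r$ times and invoke group privacy. This is equivalent to your direct grouped construction but avoids re-proving the Cauchy--Schwarz budget-splitting step. Second, your step (b) is overcomplicated: you do not need fingerprinting or an external selection lower bound. The paper's argument is the elementary packing calculation
\[
\Pr\big[y_i = \textstyle\argmax_y \langle \hw(i), e_y\rangle\big] \;\leq\; \frac{e^{\eps'}}{e^{\eps'}+k-1} + \delta',
\]
obtained directly from the DP definition by averaging over the $k$ equally likely values of $y_i$. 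With $\eps' = r\eps \leq \ln k$ and $\delta' \leq 0.4$ this is at most $0.99$, which is all you need. Your backtracking in the ``Wait ---'' paragraph (trying $s\eps = O(1)$ first) was unnecessary: $s\eps \approx \ln k$ is the right choice from the start, and the packing bound still gives constant failure probability at that privacy level precisely because there are $k$ alternatives.
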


\begin{theorem} \label{thm:lb-strongly-convex}
For any $\eps, \delta, G, \mu > 0$ and $n, k \in \N$ such that  $\eps \leq \ln k$, $\delta \leq \frac{0.4 \eps}{k}$,
there exists a $G$-Lipschitz $\mu$-strongly convex $\mu$-smooth loss function $\ell: \cW \times \cX \to \R$ such that any $(\eps, \delta)$-\hybriddp algorithm for ERM w.r.t $\ell$ has expected excess empirical risk at least $\Omega\left(\frac{G^2}{\mu} \cdot \min\left\{1, \frac{\log k}{\eps n}\right\}\right)$.
\end{theorem}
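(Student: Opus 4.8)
The plan is to embed, into a single strongly convex ERM instance, many independent copies of the DP ``selection among $k$ items'' problem, one copy per group of identical examples. Fix a group size $g := \min\{n, \lfloor \tfrac{\ln k}{2\eps}\rfloor\}$ and assume $g \ge 1$ (otherwise $\eps = \Theta(\ln k)$, $\min\{1,\tfrac{\ln k}{\eps n}\} = \Theta(1/n)$, and the bound collapses to the standard label-DP lower bound $\Omega(G^2/(\mu n))$, obtainable by the same construction below with one example per group and $\Theta(e^{\eps}) \le k$ candidates). Let $B := \lfloor n/g\rfloor$, so that $B = \Theta(\tfrac{\eps n}{\ln k})$ when $g = \lfloor\tfrac{\ln k}{2\eps}\rfloor$. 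Take $\cX^{\pub} = [B]$ (a group index), $\cX^{\priv} = [k]$, work in $\R^{B\times k}$ with coordinates indexed by $[B]\times[k]$, put $r := \tfrac{G}{2\mu}$, write $v_{(j,s)} := r\, e_{(j,s)}$ for the scaled standard basis vector, and define
\[
\ell\bigl(w;(j,s)\bigr) := \tfrac{\mu}{2}\,\bigl\|w - v_{(j,s)}\bigr\|_2^2 \quad\text{on}\quad \cW := \cB_2\bigl(r/\sqrt{B}\bigr),
\]
which is convex, $\mu$-strongly convex, $\mu$-smooth, and (since $\|\nabla_w\ell\| = \mu\|w - v_{(j,s)}\| \le \mu(r/\sqrt B + r) \le G$) $G$-Lipschitz on $\cW$. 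Build a random instance $D$ by drawing $s_1,\dots,s_B \in [k]$ i.i.d.\ uniform and including $g$ copies of the example $(j, s_j)$ for each $j$. With $\bar v := \tfrac1n\sum_i v_{x_i} = \tfrac{r}{B}\sum_{j\in[B]} e_{(j,s_j)}$ one has $\cL(w;D) = \tfrac{\mu}{2}\|w-\bar v\|_2^2 + C$ for a $w$-independent $C\ge 0$; since $\|\bar v\|_2 = r/\sqrt B$ so that $\bar v\in\cW$, the ERM minimizer is $\bar v$ and the excess empirical risk of any output $w$ is exactly $\tfrac{\mu}{2}\|w-\bar v\|_2^2$.

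\textbf{Core step: group privacy forces errors on most groups.} From the output $w$ define $\hat s_j := \argmax_{i\in[k]} w_{(j,i)}$ for each $j$ (ties broken deterministically). I will show $\Pr[\hat s_j = s_j] \le \tfrac12$ for every $j$, over the randomness of $s_1,\dots,s_B$ and the algorithm. Condition on $s_{-j}$; it suffices to bound $\tfrac1k\sum_{s\in[k]}\Pr_{D^{(s)}}[\hat s_j = s]$, where $D^{(s)}$ is the realized dataset. The datasets $D^{(s)}$ and $D^{(s')}$ differ only in the sensitive parts of the $g$ examples of group $j$, hence are $g$-step neighbors, so by group privacy (\Cref{thm:group}) the algorithm is $(g\eps, \delta_g)$-DP across them with $\delta_g = \tfrac{e^{g\eps}-1}{e^\eps-1}\delta \le \tfrac{e^{g\eps}}{\eps}\delta$. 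Comparing each $D^{(s)}$ to the reference $D^{(1)}$ through the event $\{\hat s_j = s\}$ and summing over $s$ (these events being disjoint) gives
\[
\sum_{s\in[k]} \Pr_{D^{(s)}}[\hat s_j = s] \;\le\; e^{g\eps}\sum_{s\in[k]}\Pr_{D^{(1)}}[\hat s_j = s] + k\,\delta_g \;\le\; e^{g\eps} + k\,\delta_g.
\]
Since $g\eps \le \tfrac12\ln k$ gives $e^{g\eps}\le\sqrt k$, and $\delta \le \tfrac{0.4\eps}{k}$ gives $k\delta_g \le 0.4\sqrt k$, the average is at most $\tfrac{1.4}{\sqrt k} \le \tfrac12$ once $k$ exceeds a small constant; as this holds for every $s_{-j}$, the claim follows.

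\textbf{From selection errors to excess risk.} If $\hat s_j \ne s_j$ then, with $a := w_{(j,s_j)}$, the definition of $\hat s_j$ gives $w_{(j,\hat s_j)} \ge a$, so the block-$j$ contribution to $\|w - \bar v\|_2^2$ is at least $(a - r/B)^2 + w_{(j,\hat s_j)}^2 \ge (a-r/B)^2 + a^2 \ge \tfrac{r^2}{4B^2}$ (split on $a \gtrless \tfrac{r}{2B}$). Summing over blocks and taking expectations,
\[
\E\bigl[\|w - \bar v\|_2^2\bigr] \;\ge\; \frac{r^2}{4B^2}\sum_{j\in[B]}\Pr[\hat s_j \ne s_j] \;\ge\; \frac{r^2}{4B^2}\cdot\frac{B}{2} \;=\; \frac{r^2}{8B},
\]
so the expected excess empirical risk is at least $\tfrac{\mu}{2}\cdot\tfrac{r^2}{8B} = \tfrac{G^2}{64\,\mu B}$, which is $\Omega\bigl(\tfrac{G^2}{\mu}\cdot\tfrac{\ln k}{\eps n}\bigr)$ when $g = \lfloor\tfrac{\ln k}{2\eps}\rfloor$ and $\Omega(G^2/\mu)$ when $g = n$ (so $B = 1$); together these match $\Omega\bigl(\tfrac{G^2}{\mu}\min\{1,\tfrac{\ln k}{\eps n}\}\bigr)$. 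Averaging over the random instances produces a fixed dataset on which the algorithm incurs this expected excess risk.

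\textbf{Expected main obstacle.} The delicate part is calibrating the group size. Group privacy inflates the failure probability $\delta$ by the factor $\tfrac{e^{g\eps}-1}{e^\eps-1}\approx e^{g\eps}$, so we must cap $g$ at $\approx\tfrac{\ln k}{2\eps}$ to keep both $e^{g\eps}/k$ and $k\delta_g\approx e^{g\eps}\delta$ below a constant — and it is exactly this cap (the ``$\Theta(\ln k/\eps)$ samples needed to solve selection'' barrier) that converts $n$ examples into only $B = \Theta(\eps n/\ln k)$ usable groups, hence into the $\ln k/(\eps n)$ factor. Lining the constants up with the hypotheses $\eps\le\ln k$ and $\delta\le 0.4\eps/k$, and handling the small-$k$ and $\eps=\Theta(\ln k)$ corners (where everything degenerates to the base label-DP bound), is the remaining bookkeeping; the convex case (\Cref{thm:lb-convex}) uses the identical construction with the linear loss $\ell(w;(j,s)) = -\langle v_{(j,s)}, w\rangle$ over $\cB_2(R)$, where the error in $\bar v$ enters the excess risk only through its norm, yielding $\sqrt{\ln k/(\eps n)}$ in place of $\ln k/(\eps n)$.
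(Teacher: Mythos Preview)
Your proof is correct and follows essentially the same strategy as the paper's: embed one selection-among-$k$ instance per group into a squared-loss ERM, replicate each group's example $g \approx \ln k/\eps$ times, and use group privacy over those replicas to force selection to fail on a constant fraction of the $B = \Theta(\eps n/\ln k)$ groups, yielding excess risk $\Omega(G^2/(\mu B))$. The paper organizes the same argument in two stages---first a standalone $\Omega(G^2/(\mu n'))$ lower bound valid for $\eps$ up to $\ln k$ (\Cref{thm:strongly-convex-lb-large-eps}), then a black-box reduction that runs $\A$ on $r$-fold replicated inputs and invokes group privacy---whereas you fold the replication and group-privacy step directly into the construction; the underlying instance, selection argument, and excess-risk accounting are the same.
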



\subsection{Convex Case}

To show the convex case (\Cref{thm:lb-convex}), 
it will in fact be convenient to first prove a (smaller) lower bound that holds even against very large $\eps$ (up to $O(\ln k)$), as stated more formally below.

\begin{theorem} \label{thm:convex-lb-large-eps}
For any $\eps, \delta, R, G > 0$ and $n, k \in \N$ such that $\frac{e^{\eps}}{e^\eps + k - 1} + \delta < 0.99$, there exists a $G$-Lipschitz convex loss function $\ell: \cW \times \cX \to \R$, where $\cW \subseteq \cB_2(R)$, such that any $(\eps, \delta)$-\hybriddp algorithm for ERM w.r.t $\ell$ has expected excess empirical risk at least $\Omega\left(\frac{RG}{\sqrt{n}}\right)$.
\end{theorem}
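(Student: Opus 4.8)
The plan is to reduce from the so-called \emph{selection} (or "1-way marginal sign") problem and use a standard "many copies of a hard 1-d instance" packing to build the convex loss. Concretely, I would set $d$ to be a single coordinate (so $\cW = [-R, R] \subseteq \cB_2(R)$ in $\R^1$), let $\cX^\pub$ be trivial, and let $\cX^\priv = [k]$. For each example $x_i$ with private label $y_i \in [k]$, define the per-example loss $\ell(w; x_i)$ to depend only on whether $y_i$ points "in the good direction." The simplest choice: let $\ell(w; x_i) = \frac{G}{\sqrt n}\,|w - s_i R|$ where $s_i \in \{+1, -1\}$ is a deterministic function of $y_i$ (e.g. $s_i = +1$ iff $y_i = 1$). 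Then $\cL(w; D) = \frac{G}{\sqrt n}\cdot\frac{1}{n}\sum_i |w - s_i R|$, which is minimized at the (weighted) median of the $s_i R$'s, and the excess risk of any $w$ is governed by how far $w$ is from the majority sign. The key point: distinguishing the instance "all $y_i = 1$" from "all $y_i = 2$" (which flips every $s_i$, hence flips the optimum from $+R$ to $-R$ and forces excess risk $\Omega(RG/\sqrt n)$ for any fixed output) requires changing all $n$ private features, i.e. these two datasets are $\sim_n$-neighbors.

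Next I would invoke group privacy (\Cref{thm:group}): an $(\eps,\delta)$-\hybriddp algorithm is $(n\eps, \frac{e^{n\eps}-1}{e^\eps-1}\delta)$-DP with respect to $\sim_n$. That blows up the privacy parameters by a factor exponential in $n$, which is far too lossy — so instead I would do the packing at a finer granularity: partition $[n]$ into blocks and only look at neighboring instances that differ within a single block, OR (cleaner) use the standard trick of letting each coordinate be "owned" by a single example. Let me take the latter: set $d = n$, $\cW = \cB_2(R) \subseteq \R^n$, and let example $i$ contribute only to coordinate $i$ via $\ell(w; x_i) = \frac{G}{\sqrt n}\,|w_i - s_i R|$ where $s_i = +1$ iff $y_i = 1$ and $s_i = -1$ otherwise (so the private domain has size $k$ but only a binary distinction matters here — this already recovers the $\Omega(RG/\sqrt n)$ bound without needing $\log k$). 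Then $\cL(w; D) = \frac{G}{n^{3/2}}\sum_i |w_i - s_i R|$; subject to $\|w\|_2 \le R$ the optimizer wants $w = (s_1 R, \dots, s_n R)/\sqrt n$ roughly (rescaled to have norm $R$), and the excess risk of $w$ is $\Omega(\frac{RG}{n^{3/2}}\sum_i (\text{something like } |s_i - \hat s_i|))$, i.e. $\Omega(\frac{RG}{\sqrt n})$ times the fraction of coordinates whose sign the algorithm gets wrong. Now two datasets differing in a single example $i$'s private feature are $\sim$-neighbors and flip $s_i$; a standard fingerprinting / packing argument (as in \cite{BST14}, which the paper explicitly cites for "standard techniques") shows that under the hypothesis $\frac{e^\eps}{e^\eps + k - 1} + \delta < 0.99$ — note $\frac{e^\eps}{e^\eps+k-1}$ is exactly the optimal success probability of an $\eps$-DP test distinguishing uniform $y_i$ from "$y_i = 1$" among $k$ options — the algorithm must err on a constant fraction of coordinates in expectation.

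The cleanest route: draw $s_1, \dots, s_n$ i.i.d. uniform in $\{+1,-1\}$ via drawing $y_i$ uniform in $[k]$ and setting $s_i = +1$ iff $y_i \in \{1,\dots,k/2\}$ (say $k$ even; handle odd $k$ trivially). By a coupling/indistinguishability argument using $(\eps,\delta)$-DP on the single-coordinate flip, for each $i$ the algorithm's sign guess $\hat s_i$ matches $s_i$ with probability at most $\frac{1}{2} + O(\eps) + \delta$ in the relevant small-$\eps$ regime, or more robustly: conditioned on $s_{-i}$, the distribution of $\hat s_i$ under $s_i = +1$ vs $s_i = -1$ are $(\eps,\delta)$-close, so $\Pr[\hat s_i = s_i] \le \frac{1}{2}(1 + \frac{e^\eps - 1}{e^\eps + 1}) + \delta$; using $\frac{e^\eps}{e^\eps+k-1} + \delta < 0.99$ here is a red herring unless we genuinely use the size-$k$ domain, so I would instead phrase the per-coordinate bound directly in terms of the $k$-way selection hardness: the private feature can take $k$ values, and among a uniformly random one the DP constraint caps the probability of "guessing it lies in the correct half" — but for the $\Omega(RG/\sqrt n)$ bound a crude two-point bound suffices. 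Summing over $i$ and using linearity of expectation, the expected number of sign errors is $\Omega(n)$, hence the expected excess empirical risk is $\Omega(RG/\sqrt n)$, as claimed.

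\medskip

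\noindent\textbf{Main obstacle.} The delicate point is getting the per-coordinate anti-concentration right \emph{with the exact hypothesis} $\frac{e^\eps}{e^\eps+k-1} + \delta < 0.99$ rather than a generic small-$\eps$ bound — this is what lets the statement hold for $\eps$ as large as $\Theta(\ln k)$. This requires setting up the hard instance so that recovering $s_i$ is genuinely a $k$-ary selection problem (the private feature determines which of $k$ "sub-directions" coordinate $i$ rewards, and only a specific event has the right sign), so that the optimal DP success probability is exactly the selection bound $\frac{e^\eps}{e^\eps+k-1}$; then $1 - 0.99 = \Omega(1)$ error fraction follows. Wiring the geometry of the loss so that (a) it is $G$-Lipschitz and convex, (b) $\cW \subseteq \cB_2(R)$, and (c) the excess risk is $\Omega(RG/\sqrt n)$ times exactly this "selection error fraction" — simultaneously — is the part that needs care; everything after that is routine (linearity of expectation plus the DP lower bound on $k$-ary selection, which itself is a one-line argument from the definition of $(\eps,\delta)$-DP applied to the $k$ neighboring "point mass" datasets).
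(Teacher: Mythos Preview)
Your high-level plan (one coordinate-block per example, per-coordinate DP selection bound, sum by linearity of expectation) is exactly the paper's strategy, and your ``Main obstacle'' paragraph correctly diagnoses what is needed. But the concrete construction you actually write down --- $d=n$ with a \emph{binary} sign $s_i\in\{\pm 1\}$ derived from $y_i$ --- does not prove the stated theorem. With a binary distinction, the per-coordinate indistinguishability bound you can extract from $(\eps,\delta)$-\hybriddp is of the form $\Pr[\hat s_i = s_i] \le \tfrac{e^\eps}{e^\eps+1} + \delta$, and for $\eps$ as large as $\Theta(\ln k)$ this is $1-o(1)$: the ``constant fraction of errors'' conclusion collapses, and so does the $\Omega(RG/\sqrt n)$ excess-risk bound. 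The hypothesis $\tfrac{e^\eps}{e^\eps+k-1}+\delta<0.99$ is doing real work precisely in this large-$\eps$ regime, and your binary instance cannot access it. You see this yourself (``a red herring unless we genuinely use the size-$k$ domain''), but then do not actually build the $k$-ary instance.

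The paper's fix is the natural one you gesture at: take $d=nk$, give example $i$ its own block of $k$ coordinates, and let the private feature $y_i\in[k]$ pick \emph{which} of those $k$ basis directions the loss rewards, via the \emph{linear} loss $\ell(w,(i,y)) = -G\,\langle w, e_{k(i-1)+y}\rangle$. Now recovering, for each $i$, a direction $\hat w(i)$ well-aligned with $e_{y_i}$ is a genuine $k$-ary selection problem, and a short calculation directly from the definition of $(\eps,\delta)$-DP (averaging over the $k$ equally likely values of $y_i$) gives $\Pr[\text{aligned}]\le \tfrac{e^\eps}{e^\eps+k-1}+\delta$. The linear loss also makes the geometry clean: the optimum is $w^\ast=\tfrac{R}{\sqrt n}\sum_i e_{k(i-1)+y_i}$ with $\cL(w^\ast;D)=-RG/\sqrt n$, and a single Cauchy--Schwarz step bounds $\cL(w;D)$ below by $-\tfrac{RG}{n}\sqrt{n/2+|I_{w,D}|/2}$, where $I_{w,D}$ is the set of blocks on which $w$ is well-aligned. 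Your absolute-value loss $|w_i - s_iR|$ is workable in principle but interacts awkwardly with the ball constraint (as you note, the unconstrained minimizer has norm $R\sqrt n$), whereas the linear loss avoids this entirely.
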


\begin{proof}
Let $\cX^\pub = [n]$, $\cX^\priv = [k]$, $d = nk$, and $\cW = \cB_2^d(R)$. For $x^\pub \in \cX^{\pub}, y \in \cX^\priv$, we write $j(x^\pub, y)$ as a  shorthand for $k(x^\pub - 1) + y$. Finally, let $\ell: \cW \times (\cX^\pub \times \cX^\priv) \to \R$ be
\begin{align*}
\ell(w, (x^\pub, y)) &= - G \cdot \left<w, e_{j(x^\pub, y)}\right>,
\end{align*}
where $e_j \in \R^d$ denotes the $j$th vector in the standard basis for all $j \in [d]$.

Let $D = \{x_i\}_{i \in [n]}$ be the input dataset generated as follows:
\begin{itemize}[nosep]
\item Sample $y_1, \dots, y_n \sim [k]$ independently and uniformly at random and
\item Let $x_i = (i, y_i)$ for all $i \in [n]$.
\end{itemize}
Let $\A: (\cX \times \cY)^n \to \cW$ denote any $(\eps,\delta)$-\hybriddp algorithm.

For $i \in [n]$, we write $w(i)$ as a shorthand for $(w_{(i - 1)k + 1}, \dots, w_{ik})$. We have\footnote{Here ties can be broken arbitrarily for $\argmax$.}
\begin{align*}
&\Pr_{D, \hw \sim \A(D)}\left[\left<\hw(i), e_{y_i}\right> > \frac{1}{\sqrt{2}} \|\hw(i)\|_2\right] \\
&\leq \Pr_{D, \hw \sim \A(D)}\left[y_i = \argmax_{y \in [k]} \left<\hw(i), e_y\right> \right] \\
&\overset{(\blacksquare)}{=} \frac{1}{k} \sum_{y' \in [k]} \Pr_{D, \hw \sim \A(D)}\left[y' = \argmax_{y \in [k]} \left<\hw(i), e_y\right> \mid y_i = y' \right] \\
&= \frac{1}{k} \sum_{y' \in [k]} \Bigg(\frac{e^{\eps}}{e^\eps + k - 1} \cdot \Pr_{D, \hw \sim \A(D)}\left[y' = \argmax_{y \in [k]} \left<\hw(i), e_y\right> \mid y_i = y' \right] \\
&\qquad \qquad \qquad + \frac{k - 1}{e^\eps + k - 1} \cdot \Pr_{D, \hw \sim \A(D)}\left[y' = \argmax_{y \in [k]} \left<\hw(i), e_y\right> \mid y_i = y' \right] \Bigg) \\
&\overset{(\clubsuit)}{\leq} \frac{1}{k} \sum_{y' \in [k]} \Bigg(\frac{e^{\eps}}{e^\eps + k - 1} \cdot \Pr_{D, \hw \sim \A(D)}\left[y' = \argmax_{y \in [k]} \left<\hw(i), e_y\right> \mid y_i = y' \right] \\
&\qquad \qquad \qquad + \frac{k - 1}{e^\eps + k - 1} \left(e^{\eps} \cdot \Pr_{D, \hw \sim \A(D)}\left[y' = \argmax_{y \in [k]} \left<\hw(i), e_y\right> \mid y_i \ne y' \right] + \delta\right) \Bigg) \\
&\leq \frac{1}{k} \sum_{y' \in [k]} \left(\frac{e^{\eps}}{e^\eps + k - 1} \cdot \Pr_{D, \hw \sim \A(D)}\left[y' = \argmax_{y \in [k]} \left<\hw(i), e_y\right>\right] + \delta \right) \\
&= \frac{e^{\eps}}{e^\eps + k - 1} + \delta,
\end{align*}
where $(\blacksquare)$ follows from $y_i \sim y'$ and $(\clubsuit)$ follows from the $(\eps, \delta)$-\hybriddp guarantee of $\A$.

Now, let $I_{\hw, D}$ denote the set $\{i \in [n] \mid \left<\hw(i), e_{y_i}\right> > \frac{1}{\sqrt{2}} \|\hw(i)\|_2\}$. The above inequality implies that
\begin{align} \label{eq:good-coor-bound}
\E_{D, \hw \sim \A(D)}[|I_{\hw, D}|] \leq \left(\frac{e^{\eps}}{e^\eps + k - 1} + \delta\right) n \leq 0.99n,
\end{align}
where the inequality is from our assumption on $\eps, \delta, k$.

Meanwhile, $|I_{\hw, D}|$ can be used to bound the loss function as follows.
\begin{align*}
\cL(w; D) &= \frac{1}{n} \sum_{i \in [n]} \ell(w, (i, y_i)) \\
&= \frac{-G}{n} \sum_{i \in [n]} \left<w, e_{j(i,y_i)}\right> \\
&= \frac{-G}{n} \sum_{i \in [n]} \left<w(i), e_{y_i}\right> \\
&= \frac{-G}{n} \left[\left(\sum_{i \in I_{w, D}} \left<w(i), e_{y_i}\right>\right) + \left(\sum_{i \notin I_{w, D}} \left<w(i), e_{y_i}\right>\right)\right] \\
&\geq \frac{-G}{n} \left[\left(\sum_{i \in I_{w, D}} \|w(i)\|_2\right) + \left(\sum_{i \notin I_{w, D}} \frac{1}{\sqrt{2}} \|w(i)\|_2\right)\right] \\
&\overset{(\blacktriangle)}{\geq} \frac{-L}{n} \left[\sqrt{\left(\sum_{i \in I_{w, D}} 1\right) + \left(\sum_{i \notin I_{w, D}} \frac{1}{2}\right)} \cdot \sqrt{\left(\sum_{i \in I_{w, D}} \|w(i)\|^2_2\right) + \left(\sum_{i \notin I_{w, D}} \|w(i)\|^2_2\right)}\right] \\
&= \frac{-G}{n} \cdot \sqrt{n/2 + |I_{w, D}|/2} \cdot \|w\|_2 \\
&\geq \frac{-RG}{n} \cdot \sqrt{n/2 + |I_{w, D}|/2},
\end{align*}
where $(\blacktriangle)$ follows from Cauchy--Schwarz.

Note that, by picking $w^* = \frac{R}{\sqrt{n}} \sum_{i \in [n]} e_{j(i,y_i)}$, we have
\begin{align*}
\cL(w^*; D) = -\frac{RG}{\sqrt{n}}.
\end{align*}
Thus, the excess risk is
\begin{align*}
\cL(w; D) - \cL(w^*; D) \geq \frac{RG}{n} \left(\sqrt{n} - \sqrt{n/2 + |I_{w, D}|/2}\right).
\end{align*}
As a result, the expected excess risk of $\A$ is
\begin{align*}
\E_{D, \hw \sim \A(D)}[\cL(\hw, D) - \cL(w^*, D)] &\geq \E_{D, \hw \sim \A(D)}\left[\frac{RG}{n} \left(\sqrt{n} - \sqrt{n/2 + |I_{\hw, D}|/2}\right)\right] \\
&\geq \frac{RG}{n} \left(\sqrt{n} - \sqrt{n/2 + \E_{D, \hw \sim \A(D)}[|I_{\hw, D}|]/2}\right) \\
&\overset{\eqref{eq:good-coor-bound}}{\geq} \frac{RG}{n} \left(\sqrt{n} - \sqrt{0.995 n}\right) \\
&\geq \Omega\left(\frac{RG}{\sqrt{n}}\right). 
\end{align*}
\end{proof}

\Cref{thm:lb-convex} now easily follows from applying the group privacy bound (\Cref{thm:group}).

\begin{proofof}[\Cref{thm:lb-convex}]
Let $r = \left\lfloor \frac{\ln k}{\eps} \right\rfloor$.
We will henceforth assume that $n \geq r$; otherwise, we can instead apply a lower bound for largest $k'$ such that $\frac{\log k'}{\eps} \leq n$ instead (which would give a lower bound of $\Omega(RG)$ already).

Suppose for the sake of contradiction that there is an $(\eps, \delta)$-\hybriddp algorithm $\A$ that yields $o\left(\frac{RG \sqrt{\log k}}{\sqrt{\eps n}}\right)$ excess risk in the aforementioned setting in the theorem statement.
We assume w.l.o.g.\footnote{Otherwise, we may simply add dummy input points with constant loss functions.} that $n$ is divisible by $r$; let $n' = n/r$.

Let $\A'$ be an algorithm that takes in $n'$ points, replicates each input datapoint $r$ times and then runs $\A$. From \Cref{thm:group}, $\A'$ is $(\eps',\delta')$-\hybriddp for $\eps' = r\eps$ and $\delta' = \frac{e^{r\eps}-1}{e^\eps - 1} \cdot \delta$. The expected excess risk of $\A'$ is  
\begin{align*}
o\left(\frac{RG\sqrt{\log k}}{\sqrt{\eps n}}\right) = o\left(\frac{RG}{\sqrt{n'}}\right).
\end{align*}
Furthermore, we have
\begin{align*}
\frac{e^{\eps'}}{e^{\eps'} + k - 1} + \delta' = \frac{e^{r\eps}}{e^{r\eps} + k - 1} + \frac{e^{r\eps} - 1}{e^{\eps} - 1} \cdot \delta \leq \frac{k}{2k - 1} + \frac{k}{\eps} \cdot \delta \leq 0.9.
\end{align*}
This contradicts \Cref{thm:convex-lb-large-eps}.
\end{proofof}

\subsection{Strongly Convex (and Smooth) Case}
\label{app:strongly-convex}

The strongly convex and smooth case proceeds in very much the same way except we use the squared loss instead. 

\begin{theorem} \label{thm:strongly-convex-lb-large-eps}
For any $\eps, \delta, G, \mu > 0$ and $n, k \in \N$ such that $\frac{e^{\eps}}{e^\eps + k - 1} + \delta < 0.99$, there exists an $G$-Lipschitz $\mu$-strongly convex $\mu$-smooth loss function $\ell: \cW \times \cX \to \R$ such that any $(\eps, \delta)$-\hybriddp algorithm for ERM w.r.t $\ell$ has expected excess empirical risk at least $\Omega\left(\frac{G^2}{\mu n}\right)$.
\end{theorem}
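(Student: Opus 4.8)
The plan is to mirror the construction of \Cref{thm:convex-lb-large-eps}, replacing the linear loss with a squared loss so that the resulting function is $\mu$-strongly convex and $\mu$-smooth. Concretely, I would again set $\cX^\pub = [n]$, $\cX^\priv = [k]$, $d = nk$, use the index map $j(x^\pub, y) = k(x^\pub - 1) + y$, and take $\ell(w, (x^\pub, y)) = \frac{\mu}{2} \|w - \frac{G}{\mu} e_{j(x^\pub, y)}\|_2^2$ (possibly restricted to $\cW = \cB_2(R)$ for a suitable $R = O(G/\mu)$ so that the function is $G$-Lipschitz on the domain). One checks immediately that this is $\mu$-strongly convex, $\mu$-smooth, and $G$-Lipschitz on the relevant ball. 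The sampled dataset is the same: $y_1, \dots, y_n \sim [k]$ uniformly and independently, $x_i = (i, y_i)$.

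Next I would repeat the selection-problem argument verbatim. Writing $w(i)$ for the block of coordinates $(w_{(i-1)k+1}, \dots, w_{ik})$, the empirical risk decouples across blocks as $\cL(w; D) = \frac{1}{n}\sum_i \frac{\mu}{2}\big(\|w(i) - \frac{G}{\mu}e_{y_i}\|_2^2 + \sum_{i' \ne i \text{ in same pub-value}} \dots\big)$ — actually since each $i$ has a distinct public value $i$, the block $w(i)$ only appears in the $i$th term, so $\cL(w;D) = \frac{\mu}{2n}\sum_i \|w(i) - \frac{G}{\mu} e_{y_i}\|_2^2 + (\text{const})$, with minimizer $w^*$ having $w^*(i) = \frac{G}{\mu}e_{y_i}$ and $\cL(w^*;D)$ equal to the constant coming from the off-block coordinates (which is $0$ if we only keep the in-block coordinates). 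The same privacy-plus-symmetry calculation as in \Cref{thm:convex-lb-large-eps} shows that for each $i$, the probability that $y_i$ is the (approximate) argmax of $\langle \hat w(i), e_y\rangle$ — equivalently that $\hat w(i)$ is much closer to $\frac{G}{\mu}e_{y_i}$ than to a typical $\frac{G}{\mu}e_{y'}$ — is at most $\frac{e^\eps}{e^\eps + k - 1} + \delta \leq 0.99$. Hence for an $\Omega(n)$ fraction of indices $i$, $\hat w(i)$ is $\Omega(G/\mu)$-far from $\frac{G}{\mu}e_{y_i}$ in a way that forces $\|\hat w(i) - \frac{G}{\mu}e_{y_i}\|_2^2 \geq \Omega((G/\mu)^2)$; averaging, the expected excess risk is $\geq \frac{\mu}{2n} \cdot \Omega(n) \cdot \Omega((G/\mu)^2) = \Omega(G^2/(\mu n))$...

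Wait — that gives $\Omega(G^2/\mu)$, not $\Omega(G^2/(\mu n))$; the extra $1/n$ must come from the same place as the $1/\sqrt n$ did in the convex case. Looking back at \Cref{thm:convex-lb-large-eps}: there the scaling $w^* = \frac{R}{\sqrt n}\sum e_{j(i,y_i)}$ (not $R \sum e_{j(i,y_i)}$) was forced by the constraint $\cW \subseteq \cB_2(R)$, i.e. the norm budget $R$ must be split across $n$ blocks. So here too the minimizer cannot put weight $G/\mu$ in every block if that violates $\|w\|_2 \leq R$; instead each block gets weight $\approx R/\sqrt n$. I would redo the bound with $\ell(w,(x^\pub,y)) = \frac{\mu}{2}\|w\|_2^2 - G\langle w, e_{j(x^\pub,y)}\rangle + \text{const}$ (equivalently shift so the ``target'' per block is $\frac{G}{\mu}e_{y_i}$ but clipped to the ball), and then the per-block optimal contribution scales like $R/\sqrt n$ rather than $G/\mu$. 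Running the Cauchy–Schwarz decomposition exactly as in the convex proof, but with the quadratic term, yields excess risk $\Omega(\mu \cdot (R/\sqrt n)^2 \cdot \frac{|I^c|}{n} \cdot n) = \Omega(\mu R^2 / n)$; and since Lipschitzness on $\cB_2(R)$ with constant $G$ pins $R = \Theta(G/\mu)$, this is $\Omega(G^2/(\mu n))$.

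So the key steps, in order, are: (1) fix the squared-loss construction on $\cW = \cB_2(R)$ with the block-diagonal structure and the random dataset; (2) verify $G$-Lipschitz, $\mu$-strong convexity, $\mu$-smoothness, and that the in-block-vs-out-block decomposition reduces $\cL$ to a sum of independent per-block quadratics; (3) run the $(\eps,\delta)$-DP + uniform-prior + symmetry argument (identical to \Cref{thm:convex-lb-large-eps}'s $(\blacksquare)$/$(\clubsuit)$ computation) to bound $\E|I_{\hat w, D}| \leq 0.99 n$, where $I_{\hat w, D}$ is the set of "well-estimated" blocks; (4) lower-bound the excess risk of any $\hat w$ by a Cauchy–Schwarz computation exactly mirroring the $(\blacktriangle)$ step, getting something like $\frac{\mu R^2}{n}(\text{const} - \sqrt{\text{const} + |I_{\hat w,D}|/n \cdot \text{const}})$; (5) take expectations, use Jensen and the bound from (3), and conclude $\Omega(\mu R^2/n) = \Omega(G^2/(\mu n))$. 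The main obstacle I anticipate is step (4): getting the quadratic analogue of the Cauchy–Schwarz bound to come out with the right constants (strictly below the value at $w^*$) when $w$ is constrained to the ball and the loss is $\frac{\mu}{2}\|w\|^2 - G\langle w, e_j\rangle$ — one has to be careful that "block $i$ not well-estimated" (meaning $\langle \hat w(i), e_{y_i}\rangle \leq \frac{1}{\sqrt 2}\|\hat w(i)\|_2$) genuinely costs a constant fraction of the per-block optimum $\frac{G^2}{2\mu n}$ (equivalently $\frac{\mu R^2}{2n}$), rather than something that degrades as $\|\hat w(i)\|_2 \to 0$. This is handled exactly as in the convex proof because the quadratic $\frac{\mu}{2}\|w(i)\|^2 - G\langle w(i), e_{y_i}\rangle = \frac{\mu}{2}\|w(i) - \frac{G}{\mu}e_{y_i}\|^2 - \frac{G^2}{2\mu}$ is minimized at a point of norm $G/\mu$, and after rescaling by the ball constraint the same constant-fraction loss bound goes through; the $\min\{1, \cdot\}$ in the final statement (and the group-privacy amplification to get $\log k / \eps n$ in \Cref{thm:lb-strongly-convex}) is then obtained exactly as \Cref{thm:lb-convex} was obtained from \Cref{thm:convex-lb-large-eps}, via \Cref{thm:group} with $r = \lfloor \ln k / \eps \rfloor$.
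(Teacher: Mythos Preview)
Your overall strategy --- the squared-loss version of the \Cref{thm:convex-lb-large-eps} construction, same random dataset, same $I_{\hat w, D}$ bound from the $(\blacksquare)$/$(\clubsuit)$ computation --- is exactly what the paper does. But your diagnosis of where the extra $1/n$ comes from is wrong, and this confusion propagates into your plan for step~(4). With $\ell(w,(x,y)) = \frac{\mu}{2}\|w - R\, e_{j(x,y)}\|_2^2$ and $R = \Theta(G/\mu)$, block $w(i)$ appears in \emph{every} term of $\cL(w;D) = \frac{1}{n}\sum_{i'}\frac{\mu}{2}\|w - R\,e_{j(i',y_{i'})}\|_2^2$, not just the $i$th, so the unconstrained minimizer is the \emph{average} of the targets: $w^* = \frac{R}{n}\sum_i e_{j(i,y_i)}$, i.e., $w^*(i) = \frac{R}{n}\,e_{y_i}$, not $\frac{G}{\mu}\,e_{y_i}$ or $\frac{R}{\sqrt n}\,e_{y_i}$. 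The ball constraint is slack ($\|w^*\|_2 = R/\sqrt n$); the $1/n$ comes purely from the empirical averaging, not from splitting a norm budget.

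With that correction, step~(4) becomes much simpler than a Cauchy--Schwarz rerun of~$(\blacktriangle)$: by strong convexity the excess risk is exactly $\frac{\mu}{2}\|w - w^*\|_2^2 = \frac{\mu}{2}\sum_i \|w(i) - \frac{R}{n}\,e_{y_i}\|_2^2$, and for each $i \notin I_{w,D}$ (meaning $\langle w(i), e_{y_i}\rangle \leq \frac{1}{\sqrt 2}\|w(i)\|_2$) you just expand and complete the square,
\[
\Big\|w(i) - \tfrac{R}{n}\,e_{y_i}\Big\|_2^2 \;\geq\; \|w(i)\|_2^2 - \tfrac{R\sqrt 2}{n}\|w(i)\|_2 + \tfrac{R^2}{n^2} \;=\; \Big(\|w(i)\|_2 - \tfrac{R}{n\sqrt 2}\Big)^2 + \tfrac{R^2}{2n^2} \;\geq\; \tfrac{R^2}{2n^2},
\]
which directly handles your worry about degradation as $\|\hat w(i)\|_2 \to 0$. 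Summing over $i \notin I_{\hat w,D}$ and using $\E|I_{\hat w,D}| \leq 0.99\,n$ gives $\Omega(\mu R^2/n) = \Omega(G^2/(\mu n))$. (In the convex case Cauchy--Schwarz was needed because the linear loss forces you to trade off the norm budget across blocks; here the excess risk is already an additive quadratic over blocks, so no such step is required.)
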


\begin{proof}
We use the same notation as in the proof of \Cref{thm:convex-lb-large-eps} with $R = 0.5 G/\mu$, except that we let the loss function be
 \begin{align*}
\ell(w, (x, y)) &= \frac{\mu}{2} \left\| w - R \cdot e_{j(x,y)} \right\|_2^2.
\end{align*}
It is simple to see that $\ell$ is $\mu$-strongly convex, $\mu$-smooth, and $G$-Lipschitz.

Similar to the proof of \Cref{thm:convex-lb-large-eps}, we can prove \eqref{eq:good-coor-bound}. From this, we rearrange the excess risk (where $w^* := \frac{R}{n} \sum_{i \in [n]} e_{j(i, y_i)}$) as follows:
\begin{align*}
\cL(w; D) - \cL(w^*; D) 
&= \frac{\mu}{2} \|w - w^*\|_2^2 \\
&= \frac{\mu}{2} \sum_{i \in [n]} \left\|w(i) - \frac{R}{n} \cdot e_{y_i}\right\|_2^2 \\
&\geq \frac{\mu}{2} \sum_{i \notin I_{w, D}} \left\|w(i) - \frac{R}{n} \cdot e_{y_i}\right\|_2^2 \\
&= \frac{\mu}{2} \sum_{i \notin I_{w, D}} \left(\left\|w(i)\right\|^2 - \frac{2R}{n} \left<w(i), e_{y_i}\right> + \frac{R^2}{n^2}\right) \\
&\overset{(\spadesuit)}{\geq}  \frac{\mu}{2} \sum_{i \notin I_{w, D}} \left(\left\|w(i)\right\|_2^2 - \frac{R\sqrt{2}}{n} \cdot \left\|w(i)\right\|_2 + \frac{R^2}{n^2}\right) \\
&=  \frac{\mu}{2} \sum_{i \notin I_{w, D}} \left(\left(\left\|w(i)\right\|_2 - \frac{R}{n\sqrt{2}}\right)^2 + \frac{R^2}{2n^2}\right) \\
&\geq \frac{\mu R^2}{4n^2} \cdot (n - |I_{w, D}|),
\end{align*}
where $(\spadesuit)$ follows from the definition of $I_{w, D}$.

Thus, we have
\begin{align*}
\E_{D, \hw \sim \A(D)}[\cL(\hw, D) - \cL(w^*, D)] \geq \frac{\mu R^2}{4n^2} \left(n - \E_{D, \hw \sim \A(D)}[|I_{\hw, D}|]\right) \overset{\eqref{eq:good-coor-bound}}{\geq} \Omega\left(\frac{\mu R^2}{n}\right) \geq \Omega\left(\frac{G^2}{\mu n}\right).
\end{align*}
\end{proof}

Again, \Cref{thm:lb-strongly-convex} easily follows via group privacy.

\begin{proofof}[\Cref{thm:lb-strongly-convex}]
Let $r = \left\lfloor \frac{\ln k}{\eps} \right\rfloor$.
We will henceforth assume that $n \geq r$; otherwise, we can instead apply a lower bound for smallest $k'$ such that $\frac{\log k'}{\eps} \leq n$ instead (which gives a lower bound of $\Omega(G^2/\mu)$ already).

Suppose for the sake of contradiction that there is an algorithm $\A$ that yields $o\left(\frac{G^2}{\mu} \cdot \frac{\log k}{\eps n}\right)$ excess risk in the aforementioned setting in the theorem statement.
We assume w.l.o.g. that $n$ is divisible by $r$; let $n' = n/r$.

Let $\A'$ be an algorithm that takes in $n'$ points, replicates each input data point $r$ times and then runs $\A$. From \Cref{thm:group}, $\A'$ is $(\eps',\delta')$-\hybriddp for $\eps' = r\eps$ and $\delta' = \frac{e^{r\eps}-1}{e^\eps - 1} \cdot \delta$. The expected excess risk of $\A'$ is  
\begin{align*}
o\left(\frac{G^2}{\mu} \cdot \frac{\log k}{\eps n}\right) = o\left(\frac{G^2}{\mu} \cdot \frac{1}{n'}\right).
\end{align*}
Similar to the calculation in the proof of \Cref{thm:lb-convex}, we have $\frac{e^{\eps'}}{e^{\eps'} + k - 1} + \delta' \leq 0.9$. 
Thus, this contradicts \Cref{thm:convex-lb-large-eps}.
\end{proofof}

\end{document}